\definecolor{linkcolor}{RGB}{74, 102, 146}
\newcommand{\norm}[1]{\left\Vert#1\right\Vert}
\newcommand{\parr}[1]{\left (#1\right )}
\newcommand{\brac}[1]{\left [#1\right ]}
\newcommand{\ip}[1]{\left \langle #1 \right \rangle }
\newcommand{\Real}{\mathbb R}
\newcommand{\eps}{\varepsilon}
\newcommand{\too}{\rightarrow}
\newcommand{\divv}{\mathrm{div}} %divergence
\definecolor{mygray}{gray}{0.95}
\newcommand{\CFM}{\scriptscriptstyle \text{CFM}}
\newcommand{\FM}{\scriptscriptstyle \text{FM}}
\newcommand{\SM}{\scriptscriptstyle \text{SM}}
\newcommand{\NM}{\scriptscriptstyle \text{NM}}
\newcommand{\eg}{{e.g.}}
\newcommand{\ie}{{i.e.}}
 \newtheorem{lemma}{Lemma}
\def\eqref#1{equation~\ref{#1}}
\def\1{\bm{1}}
\def\eps{{\epsilon}}
\def\ra{{\textnormal{a}}}
\DeclareMathAlphabet{\mathsfit}{\encodingdefault}{\sfdefault}{m}{sl}
\SetMathAlphabet{\mathsfit}{bold}{\encodingdefault}{\sfdefault}{bx}{n}
\def\gL{{\mathcal{L}}}
\def\gN{{\mathcal{N}}}
\def\gU{{\mathcal{U}}}
\newcommand{\E}{\mathbb{E}}
\newcolumntype{C}[1]{>{\Centering}m{#1}}
\newcolumntype{Z}[1]{>{\Left}m{#1}}
\newcommand*\rot[1]{\rotatebox{90}{#1}}
\newcommand{\marginal}{marginal\@\xspace}
\renewcommand*{\eg}{{\it e.g.}\@\xspace}
\renewcommand*{\ie}{{\it i.e.}\@\xspace}
\newcommand{\sigmamin}{\sigma_{\text{min}}}
\renewcommand{\ra}[1]{\renewcommand{\arraystretch}{#1}}
\newcolumntype{L}[1]{>{\raggedright\let\newline\\\arraybackslash\hspace{0pt}}m{#1}}
\newcolumntype{R}[1]{>{\raggedleft\let\newline\\\arraybackslash\hspace{0pt}}m{#1}}
\title{Flow Matching for Generative Modeling}
\author{Yaron Lipman$^{1,2}$ \ \ Ricky T. Q. Chen$^{1}$ \ \ Heli Ben-Hamu$^{2}$ \ \ Maximilian Nickel$^{1}$ \ \ Matt Le$^{1}$ \\
    $^1$Meta AI (FAIR)  \ \
    $^2$Weizmann Institute of Science
}
\begin{document}

\maketitle

\begin{abstract}
We introduce a new paradigm for generative modeling built on Continuous Normalizing Flows (CNFs), allowing us to train CNFs at unprecedented scale. Specifically, we present the notion of Flow Matching (FM), a simulation-free approach for training CNFs based on regressing vector fields of fixed conditional probability paths. Flow Matching is compatible with a general family of Gaussian probability paths for transforming between noise and data samples---which subsumes existing diffusion paths as specific instances. Interestingly, we find that employing FM with diffusion paths results in a more robust and stable alternative for training diffusion models. Furthermore, Flow Matching opens the door to training CNFs with other, non-diffusion probability paths. An instance of particular interest is using Optimal Transport (OT) displacement interpolation to define the conditional probability paths. These paths are more efficient than diffusion paths, provide faster training and sampling, and result in better generalization. Training CNFs using Flow Matching on ImageNet leads to consistently better performance than alternative diffusion-based methods in terms of both likelihood and sample quality, and allows fast and reliable sample generation using off-the-shelf numerical ODE solvers.\vspace{-5pt}\looseness=-1
\end{abstract}

\section{Introduction}\vspace{-3pt}
Deep generative models are a class of deep learning algorithms aimed at estimating and sampling from an unknown data distribution.
% , and include autoregressive models, Variational Autoencoders (VAEs), Generative Adversarial networks (GANs), flow-based models, and diffusion-based models. 
%
The recent influx of amazing advances in generative modeling, \eg, for image generation \citet{ramesh2022hierarchical,rombach2022high}, is mostly facilitated by the scalable and relatively stable training of diffusion-based models \citet{ho2020denoising,song2020score}. However, the restriction to simple diffusion processes leads to a rather confined space of sampling probability paths, resulting in very long training times and the need to adopt specialized methods~(\eg, \citet{song2020denoising,zhang2022fast}) for efficient sampling.

In this work we consider the general and deterministic framework of Continuous Normalizing Flows (CNFs; \citet{chen2018neural}). 
CNFs are capable of modeling arbitrary probability path 
\begin{wrapfigure}[19]{r}{0.45\textwidth}
\vspace{-17pt}
  \begin{center}
  \includegraphics[width=0.44\textwidth]{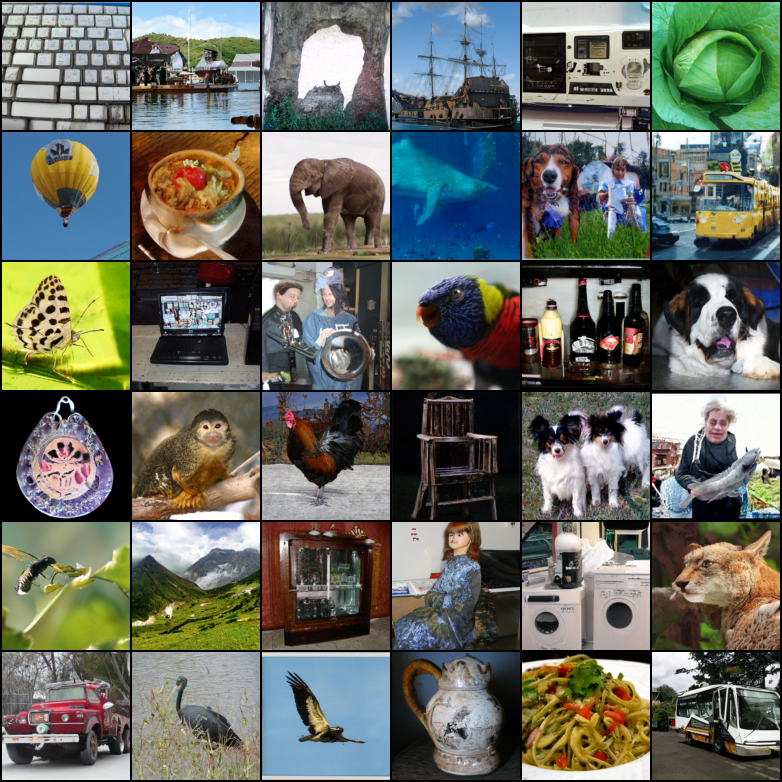}
  \end{center}
  \vspace{-1em}
  \caption{Unconditional ImageNet-128 samples of a CNF trained using Flow Matching with Optimal Transport probability paths.}
\label{fig:teaser_256}
\end{wrapfigure} 
and are in particular known to
encompass the probability paths modeled by diffusion processes \citep{song2021maximum}. However, aside from diffusion that can be trained  efficiently via, \eg, denoising score matching~\citep{vincent2011connection}, no scalable CNF training algorithms are known. Indeed, maximum likelihood training (\eg, \citet{ffjord2018}) require expensive numerical ODE simulations, while existing simulation-free methods either involve intractable integrals \citep{rozen2021moser} or biased gradients \citep{ben2022matching}. 

The goal of this work is to propose Flow Matching (FM), an efficient simulation-free approach to training CNF models, allowing the adoption of general probability paths to supervise CNF training.  Importantly, FM breaks the barriers for scalable CNF training beyond diffusion, and sidesteps the need to reason about diffusion processes to directly work with probability paths.

In particular, we propose the Flow Matching objective (Section \ref{sec:flow_matching}), a simple and intuitive training objective to regress onto a target vector field that generates a desired probability path. 
We first show that we can construct such target vector fields through per-example (\ie, conditional) formulations. 
Then, inspired by denoising score matching, we show that a per-example training objective, termed Conditional Flow Matching (CFM), provides equivalent gradients and does not require explicit knowledge of the intractable target vector field.
Furthermore, we discuss a general family of per-example probability paths (Section \ref{sec:cond_paths}) that can be used for Flow Matching, which subsumes existing diffusion paths as special instances. 
Even on diffusion paths, we find that using FM provides more robust and stable training, and achieves superior performance compared to score matching.
Furthermore, this family of probability paths also includes a particularly interesting case: the vector field that corresponds to an Optimal Transport (OT) displacement interpolant \citep{mccann1997convexity}. We find that conditional OT paths are simpler than diffusion paths, forming straight line trajectories whereas diffusion paths result in curved paths. These properties seem to empirically translate to faster training, faster generation, and better performance. 

We empirically validate Flow Matching and the construction via Optimal Transport paths on ImageNet, a large and highly diverse image dataset. We find that we can easily train models to achieve favorable performance in both likelihood estimation and sample quality amongst competing diffusion-based methods. Furthermore, we find that our models produce better trade-offs between computational cost and sample quality compared to prior methods. Figure \ref{fig:teaser_256} depicts selected unconditional ImageNet 128$\times$128 samples from our model.

\section{Preliminaries: Continuous Normalizing Flows}
\label{s:prelim}
Let $\Real^d$ denote the data space with data points $x=(x^1,\ldots,x^d) \in \Real^d$. Two important objects we use in this paper are: the \emph{probability density path} $p:[0,1]\times \Real^d \too \Real_{>0}$, which is a time dependent\footnote{We use subscript to denote the time parameter, 
\eg, $p_t(x)$.} probability density function, \ie, $\int p_t(x)dx = 1$, and a \emph{time-dependent vector field}, $v:[0,1]\times \Real^d \too \Real^d$. A vector field $v_t$ can be used to construct a time-dependent diffeomorphic map, called a \emph{flow}, $\phi:[0,1]\times \Real^d \too \Real^d$, defined via the ordinary differential equation (ODE):
\begin{align}\label{e:ode}
    \frac{d}{dt}\phi_t(x) &= v_t(\phi_t(x)) \\
    \phi_0(x) &= x
\end{align}
Previously, \cite{chen2018neural} suggested modeling the vector field $v_t$ with a neural network, $v_t(x;\theta)$, where $\theta\in \Real^p$ are its learnable parameters, which in turn leads to a deep parametric model of the flow $\phi_t$, called a \emph{Continuous Normalizing Flow} (CNF). A CNF is used to reshape a simple prior density $p_0$ (\eg, pure noise) to a more complicated one, $p_1$, via the push-forward equation
\begin{equation}\label{e:push_forward}
p_t = [\phi_t]_* p_0
\end{equation}
where the push-forward (or change of variables) operator $*$ is defined by
\begin{equation}\label{e:push_forward_explicit}
    [\phi_t]_* p_0(x) = p_0(\phi_t^{-1}(x))\det \brac{ \frac{\partial \phi_t^{-1}}{\partial x}(x)}.
\end{equation}
A vector field $v_t$ is said to \emph{generate} a probability density path $p_t$ if its flow $\phi_t$ satisfies \eqref{e:push_forward}. One practical way to test if a vector field generates a probability path is using the continuity equation, which is a key component in our proofs, see Appendix \ref{A:continuity_equation}. We recap more information on CNFs, in particular how to compute the probability $p_1(x)$ at an arbitrary point $x\in\Real^d$ in Appendix \ref{A:cnf_prob}. 

\section{Flow Matching}\label{sec:flow_matching}
Let $x_1$ denote a random variable distributed according to some unknown data distribution $q(x_1)$.
We assume we only have access to data samples from $q(x_1)$ but have no access to the density function itself.
Furthermore, we let $p_t$ be a probability path such that $p_0=p$ is a simple distribution, \eg, the standard normal distribution $p(x) = \gN(x | 0, I)$, and let $p_1$ be approximately equal in distribution to $q$. We will later discuss how to construct such a path.
The Flow Matching objective is then designed to match this target probability path, which will allow us to flow from $p_0$ to $p_1$%
%\footnote{Note that this notation is a time-reversed formulation compared to some prior diffusion modeling papers, where $t=0$ corresponds to data and $t=1$ to noise, since they had to reason about a separate data$\too$noise stochastic process which is then reversed in time. Instead we work directly with the probability paths, and our sample generation is always \emph{forwards} in time.}
.

%Furthermore, we let $x_0 \sim p_0(x_0)$ be a simple prior distribution, \eg, $x_0 \sim \gN(x_0 | 0, I)$ is the standard normal distribution. Both $q$ and $p_0$ are assumed to be strictly positive densities. 

% Ricky: Maybe shorten this part below or make it a footnote?
%Note that in this paper we use a reversed time convention compared to the standard literature of diffusion models, where $t=0$ usually corresponds to data and $t=1$ to noise. The reason is we don't require reasoning about a separate reverse-time stochastic data$\too$noise process, and this way sample generation can always be done by solving an ODE \emph{forward} in time.

Given a target probability density path $p_t(x)$ and a corresponding vector field $u_t(x)$, which generates $p_t(x)$, we define the Flow Matching (FM) objective as 
\begin{equation}\label{e:fm}
    \mathcal{L}_{\FM }(\theta) = \mathbb{E}_{t,p_t(x)}\| v_t(x)-u_t(x) \|^2,
\end{equation}
where $\theta$ denotes the learnable parameters of the CNF vector field $v_t$ (as defined in Section \ref{s:prelim}), $t\sim \gU[0,1]$ (uniform distribution), and $x\sim p_t(x)$. Simply put, the FM loss regresses the vector field $u_t$ with a neural network $v_t$. Upon reaching zero loss, the learned CNF model will generate $p_t(x)$.

Flow Matching is a simple and attractive objective, but na\"ively on its own, it is intractable to use in practice since we have no prior knowledge for what an appropriate $p_t$ and $u_t$ are.
There are many choices of probability paths that can satisfy $p_1(x) \approx q(x)$, and more importantly, we generally don't have access to a closed form $u_t$ that generates the desired $p_t$. 
In this section, we show that we can construct both $p_t$ and $u_t$ using probability paths and vector fields that are only defined \emph{per sample}, and an appropriate method of aggregation provides the desired $p_t$ and $u_t$. 
Furthermore, this construction allows us to create a much more tractable objective for Flow Matching.

\subsection{Constructing $p_t,u_t$ from conditional probability paths and vector fields} 
A simple way to construct a target probability path is via a mixture of simpler probability paths: Given a particular data sample $x_1$ we denote by $p_t(x\vert x_1)$ a \emph{conditional probability path} 
%such that at time $t=0$ it satisfies $p_0(x|x_1) = p_0(x)$ and at $t=1$ we have that $p_1(x|x_1)$ is a distribution concentrated around $x=x_1$, \eg,  $p_1(x|x_1)=\gN(x|x_1,\sigma^2 I)$, a normal distribution with $x_1$ mean and a very small standard deviation $\sigma>0$. 
such that it satisfies $p_0(x|x_1) = p(x)$ at time $t=0$, and we design $p_1(x|x_1)$ at $t=1$ to be a distribution concentrated around $x=x_1$, \eg, $p_1(x|x_1)=\gN(x|x_1,\sigma^2 I)$, a normal distribution with $x_1$ mean and a sufficiently small standard deviation $\sigma > 0$. 
Marginalizing the conditional probability paths over $q(x_1)$ give rise to \emph{the \marginal probability path}
\begin{equation}\label{e:p_t}
    p_t(x)=\int p_t(x|x_1)q(x_1)dx_1,
\end{equation}
where in particular at time $t=1$, the \marginal probability $p_1$ is a mixture distribution that closely approximates the data distribution $q$,
\begin{equation}\label{e:p_1} 
    p_1(x)=\int p_1(x|x_1)q(x_1)dx_1\approx q(x).
\end{equation}
Interestingly, we can also define a \emph{\marginal vector field}, by ``marginalizing'' over the conditional vector fields in the following sense (we assume $p_t(x)>0$ for all $t$ and $x$):
% \vspace{-1em}
% \begin{center}			% Centering minipage
%     \colorbox{mygray} {		% Set's the color of minipage
%       \begin{minipage}[b]{\textwidth} 	% Starts minipage
%  	\begin{equation}\label{e:u_t}
%     u_t(x) 
%     % = \frac{\int u_t(x\vert x_1) p_t(x\vert x_1)q(x_1)dx_1}{\int p_t(x|x_1)q(x_1)dx_1}
%     = \int u_t(x\vert x_1) \frac{p_t(x\vert x_1)q(x_1)}{p_t(x)}dx_1.
%     % =\E_{p(x_1\vert x)} u_t(x\vert x_1), \quad p(x_1\vert x) = \frac{p_t(x\vert x_1)q(x_1)}{p_t(x)}
% \end{equation}
%       \end{minipage}}			% End minipage
% \end{center}
%
\begin{equation}\label{e:u_t}
    u_t(x) 
    % = \frac{\int u_t(x\vert x_1) p_t(x\vert x_1)q(x_1)dx_1}{\int p_t(x|x_1)q(x_1)dx_1}
    % = \E_{q(x_1)p(x|x_1)}[u_t(x|x_1)]
    = \int u_t(x\vert x_1) \frac{p_t(x\vert x_1)q(x_1)}{p_t(x)}dx_1,
    % =\E_{p(x_1\vert x)} u_t(x\vert x_1), \quad p(x_1\vert x) = \frac{p_t(x\vert x_1)q(x_1)}{p_t(x)}
\end{equation}
where $u_t(\cdot|x_1):\Real^d\too\Real^d$ is a conditional vector field that generates $p_t(\cdot\vert x_1)$. It may not seem apparent, but this way of aggregating the conditional vector fields actually results in the correct vector field for modeling the marginal probability path.

Our first key observation is this:
\vspace{-0.5em}
\begin{center}			% Centering minipage
    \colorbox{mygray} {		% Set's the color of minipage
      \begin{minipage}{0.977\linewidth} 	% Starts minipage
       \centering
        \emph{The marginal vector field (\eqref{e:u_t}) generates the marginal probability path (\eqref{e:p_t}).}
      \end{minipage}}			% End minipage
\end{center}
This provides a surprising connection between the conditional VFs (those that generate conditional probability paths) and the marginal VF (those that generate the marginal probability path). This connection allows us to break down the unknown and intractable marginal VF into simpler conditional VFs, which are much simpler to define as these only depend on a single data sample. We formalize this in the following theorem.
\begin{restatable}{theorem}{marginalvf}\label{thm:marginal_vf}
Given vector fields $u_t(x|x_1)$ that generate conditional probability paths $p_t(x|x_1)$, for any distribution $q(x_1)$, the marginal vector field $u_t$ in \eqref{e:u_t} generates the marginal probability path $p_t$ in \eqref{e:p_t}, \ie, $u_t$ and $p_t$ satisfy the continuity equation (equation \ref{e:continuity}).
\end{restatable}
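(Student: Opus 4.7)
The plan is to verify that the pair $(p_t,u_t)$ defined by \eqref{e:p_t} and \eqref{e:u_t} satisfies the continuity equation
\[
    \partial_t p_t(x) + \nabla\!\cdot\!\bigl(p_t(x)\, u_t(x)\bigr) = 0,
\]
since (as recalled in Appendix \ref{A:continuity_equation}) this is equivalent to $u_t$ generating $p_t$. The hypothesis supplies us with the analogous identity at the conditional level: for each fixed $x_1$, because $u_t(\cdot\vert x_1)$ generates $p_t(\cdot\vert x_1)$, we have $\partial_t p_t(x\vert x_1) + \nabla\!\cdot\!\bigl(p_t(x\vert x_1)\,u_t(x\vert x_1)\bigr)=0$.

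The computation is then a straightforward ``integrate against $q(x_1)$'' argument. First I would differentiate the marginal in time and push the derivative inside the integral:
\[
    \partial_t p_t(x) \;=\; \int \partial_t p_t(x\vert x_1)\, q(x_1)\, dx_1 \;=\; -\int \nabla\!\cdot\!\bigl(p_t(x\vert x_1)\,u_t(x\vert x_1)\bigr)\, q(x_1)\, dx_1.
\]
Next I would pull the spatial divergence outside the $x_1$-integral, which gives
\[
    \partial_t p_t(x) \;=\; -\nabla\!\cdot\!\int p_t(x\vert x_1)\,u_t(x\vert x_1)\, q(x_1)\, dx_1.
\]
The final step is to recognize the integrand: by the very definition of the marginal vector field in \eqref{e:u_t}, the inner integral equals $p_t(x)\, u_t(x)$, yielding the desired continuity equation.

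The only real subtlety lies in justifying the two exchanges of limit operations with the $q(x_1)$-integral (pulling $\partial_t$ and $\nabla\!\cdot$ inside), which requires mild regularity and decay assumptions on $p_t(x\vert x_1)$ and $u_t(x\vert x_1)$ so that dominated convergence and differentiation under the integral sign apply. I would simply state these as standing technical assumptions (consistent with the paper's use of Gaussian conditional paths with smooth means and variances), so that the argument reduces to the algebraic identity above. Everything else is bookkeeping, and no ingredient beyond the conditional continuity equations and definition \eqref{e:u_t} is needed.
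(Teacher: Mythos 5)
Your argument is exactly the paper's proof: differentiate the marginal in time, push $\partial_t$ into the $x_1$-integral, apply the conditional continuity equation, pull $\nabla\!\cdot$ back out, and recognize the inner integral as $p_t(x)u_t(x)$ from the definition in \eqref{e:u_t}, with the same appeal to Leibniz-rule regularity to justify the two interchanges. Nothing to add --- the proposal is correct and mirrors the paper's approach step for step.
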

The full proofs for our theorems are all provided in Appendix \ref{A:proofs}. Theorem \ref{thm:marginal_vf} can also be derived from the Diffusion Mixture Representation Theorem in \cite{peluchetti2021non} that provides a formula for the marginal drift and diffusion coefficients in diffusion SDEs.

% \pagebreak
\subsection{Conditional Flow Matching} Unfortunately, due to the intractable integrals in the definitions of the marginal probability path and VF (equations \ref{e:p_t} and \ref{e:u_t}), it is still intractable to compute $u_t$, and consequently, intractable to na\"ively compute an unbiased estimator of the original Flow Matching objective.
Instead, we propose a simpler objective, which surprisingly will result in the same optima as the original objective. 
Specifically, we consider the \emph{Conditional Flow Matching} (CFM) objective,
% \vspace{-1em}
% \begin{center}			% Centering minipage
%     \colorbox{mygray} {		% Set's the color of minipage
%       \begin{minipage}[b]{\textwidth} 	% Starts minipage
%  	\begin{equation}\label{e:cfm}
%     \mathcal{L}_{\CFM}(\theta) = \mathbb{E}_{t,q(x_1),p_t(x\vert x_1)} \big\| v_t(x) - u_t(x\vert x_1) \big\|^2,
% \end{equation}
%       \end{minipage}}			% End minipage
% \end{center}
\begin{equation}\label{e:cfm}
    \mathcal{L}_{\CFM}(\theta) = \mathbb{E}_{t,q(x_1),p_t(x\vert x_1)} \big\| v_t(x) - u_t(x\vert x_1) \big\|^2,
\end{equation}
where $t\sim \gU[0,1]$, $x_1\sim q(x_1)$, and now $x\sim p_t(x|x_1)$. 
Unlike the FM objective, the CFM objective allows us to easily sample unbiased estimates as long as we can efficiently sample from $p_t(x|x_1)$ and compute $u_t(x|x_1)$, both of which can be easily done as they are defined on a per-sample basis.\\
Our second key observation is therefore:
\vspace{-0.5em}
\begin{center}			% Centering minipage
    \colorbox{mygray} {		% Set's the color of minipage
      \begin{minipage}{0.977\linewidth} 	% Starts minipage
      %\begin{theorem} 
       \centering
        \emph{The FM (\eqref{e:fm}) and CFM (\eqref{e:cfm}) objectives have identical gradients w.r.t.~$\theta$.}
        % trains a CNF to push the prior $p_0(x)$ to $p_t(x)$ for all times $t\in [0,1]$.}  
      %\end{theorem}
      \end{minipage}}			% End minipage
\end{center}
That is, optimizing the CFM objective is equivalent (in expectation) to optimizing the FM objective. Consequently, this allows us to train a CNF to generate the marginal probability path $p_t$---which in particular, approximates the unknown data distribution $q$ at $t$=$1$---
without ever needing access to either the marginal probability path or the marginal vector field. We simply need to design suitable \emph{conditional} probability paths and vector fields.
We formalize this property in the following theorem.
\begin{restatable}{theorem}{cfm}\label{thm:grad_VR_equals_CVR}
Assuming that $p_t(x)>0$ for all $x\in\Real^d$ and $t\in [0,1]$, then, up to a constant independent of $\theta$, $\gL_{\CFM}$ and $\gL_{\FM}$ are equal. Hence,  
$\nabla_\theta \mathcal{L}_{\FM}(\theta) = \nabla_\theta \mathcal{L}_{\CFM}(\theta)$.
\end{restatable}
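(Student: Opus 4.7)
The plan is to expand the squared norm in each objective, identify which pieces depend on $\theta$, and show the $\theta$-dependent pieces coincide after applying the definition of the marginal vector field in~\eqref{e:u_t}.

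First I would write
\begin{equation*}
\| v_t(x) - u_t(x)\|^2 = \|v_t(x)\|^2 - 2 \ip{v_t(x), u_t(x)} + \|u_t(x)\|^2,
\end{equation*}
and similarly for the conditional version with $u_t(x\vert x_1)$ in place of $u_t(x)$. Of the three resulting terms, the last one is independent of $\theta$ in each objective (though different between the two); these will absorb into the additive constant. So the claim reduces to showing that the first two terms agree between $\gL_{\FM}$ and $\gL_{\CFM}$.

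For the $\|v_t(x)\|^2$ term, in $\gL_{\CFM}$ we integrate against $q(x_1) p_t(x\vert x_1)$, and marginalizing over $x_1$ collapses this to $p_t(x)$ by the very definition~\eqref{e:p_t}. Hence the two $\|v_t\|^2$ terms are literally equal. The substantive step is the cross term. Here I would compute
\begin{equation*}
\E_{t, q(x_1), p_t(x\vert x_1)}\ip{v_t(x), u_t(x\vert x_1)} = \E_t \int \ip{v_t(x), u_t(x\vert x_1)} \, p_t(x\vert x_1) q(x_1) \, dx_1\, dx,
\end{equation*}
swap the order of integration (justified by $p_t > 0$ and standard integrability assumptions on $v_t, u_t$), and use~\eqref{e:u_t} in the form
\begin{equation*}
\int u_t(x\vert x_1)\, p_t(x\vert x_1) q(x_1)\, dx_1 = u_t(x)\, p_t(x).
\end{equation*}
This converts the inner integral into $\ip{v_t(x), u_t(x)}\, p_t(x)$, yielding exactly the FM cross term $\E_{t, p_t(x)}\ip{v_t(x), u_t(x)}$.

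Combining the three observations (matching quadratic-in-$v_t$ terms, matching cross terms, and $\theta$-independent residuals), I obtain $\gL_{\FM}(\theta) = \gL_{\CFM}(\theta) + C$ for a constant $C$ depending only on $q$, $p_t$, $u_t$. Differentiating in $\theta$ gives the claimed equality of gradients. The main potential obstacle is purely technical: justifying the interchange of differentiation and integration, and of the order of integration in the cross term; this is handled by the hypothesis $p_t(x) > 0$ together with the implicit regularity of $v_t$, $u_t(\cdot\vert x_1)$, and $p_t(\cdot\vert x_1)$ needed for~\eqref{e:u_t} to be well-defined in the first place.
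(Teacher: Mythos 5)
Your proposal is correct and follows essentially the same route as the paper's proof: expand both squared norms via bilinearity, observe the $\|u_t\|^2$ terms are $\theta$-independent, match the $\|v_t\|^2$ terms by marginalizing $p_t(x\vert x_1)q(x_1)$ to $p_t(x)$, and match the cross terms via $\int u_t(x\vert x_1)p_t(x\vert x_1)q(x_1)\,dx_1 = u_t(x)p_t(x)$ with a Fubini swap. One small clarification: the hypothesis $p_t(x)>0$ serves to make the marginal vector field in equation~\ref{e:u_t} well-defined (no division by zero), while the interchange of integration order in the paper is justified separately by decay of $q$ and $p_t(\cdot\vert x_1)$ and boundedness of $u_t,v_t,\nabla_\theta v_t$, not by positivity per se.
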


\section{Conditional Probability Paths and Vector Fields}
\label{sec:cond_paths}

The Conditional Flow Matching objective works with any choice of conditional probability path and conditional vector fields. In this section, we discuss the construction of $p_t(x \vert x_1)$ and $u_t(x \vert x_1)$ for a general family of Gaussian conditional probability paths. 
Namely, we consider conditional probability paths of the form
\begin{equation}\label{e:pt_gau}
    p_t(x\vert x_1) = \gN(x\, \vert\, \mu_t(x_1), \sigma_t(x_1)^2 I ),
\end{equation}
where $\mu:[0,1]\times \Real^d\too \Real^d$ is the time-dependent mean of the Gaussian distribution, while $\sigma:[0,1]\times\Real\too \Real_{>0}$ describes a time-dependent scalar standard deviation (std).
We set $\mu_0(x_1) = 0$ and $\sigma_0(x_1) = 1$, so that all conditional probability paths converge to the same standard Gaussian noise distribution at $t=0$, $p(x)=\gN(x|0,I)$. We then set $\mu_1(x_1)=x_1$ and $\sigma_1(x_1)=\sigma_\text{min}$, which is set sufficiently small so that $p_1(x \vert x_1)$ is a concentrated Gaussian distribution centered at $x_1$.

There is an infinite number of vector fields that generate any particular probability path (\eg, by adding a divergence free component to the continuity equation, see  \eqref{e:continuity}), but the vast majority of these is due to the presence of components that leave the underlying distribution invariant---for instance, rotational components when the distribution is rotation-invariant---leading to unnecessary extra compute.
We decide to use the simplest vector field corresponding to a canonical transformation for Gaussian distributions. Specifically, consider the flow (conditioned on $x_1$)
\begin{equation}\label{e:phi_t_gau}
    \psi_t(x) = \sigma_t(x_1)x + \mu_t(x_1).
\end{equation}
When $x$ is distributed as a standard Gaussian, $\psi_t(x)$ is the affine transformation that maps to a normally-distributed random variable with mean $\mu_t(x_1)$ and std $\sigma_t(x_1)$.
That is to say, according to equation \ref{e:push_forward_explicit}, $\psi_t$ pushes the noise distribution $p_0(x \vert x_1)=p(x)$ to $p_t(x\vert x_1)$, \ie, 
\begin{equation}
    \brac{\psi_t}_*p(x) = p_t(x\vert x_1).
\end{equation} 
This flow then provides a vector field that generates the conditional probability path:
%In contrast to a probability path, such as $p_t$, that have many different generating vector fields (in fact, infinite dimensional space of those), the flow $\psi_t$ has a unique generating vector field that can be derived from the basic ODE in \eqref{e:ode}:
\begin{equation}\label{e:psi_t_u_t}
    \frac{d}{dt}\psi_t(x) = u_t(\psi_t(x)\vert x_1).
\end{equation}
Reparameterizing $p_t(x | x_1)$ in terms of just $x_0$ and plugging \eqref{e:psi_t_u_t} in the CFM loss we get 
\begin{equation}\label{e:cfm_dt_psi}
     \gL_{\CFM}(\theta) 
     =\E_{t, q(x_1), p(x_0)} \Big\|v_t(\psi_t(x_0)) - \frac{d}{dt}\psi_t(x_0)\Big\|^2.     
\end{equation}
Since $\psi_t$ is a simple (invertible) affine map we can use \eqref{e:psi_t_u_t} to solve for $u_t$ in a closed form. Let $f'$ denote the derivative with respect to time, \ie, $f' = \frac{d}{dt}f$, for a time-dependent function $f$.
\begin{restatable}{theorem}{condvf}\label{thm:cond_vf}
Let $p_t(x\vert x_1)$ be a Gaussian probability path as in \eqref{e:pt_gau}, and $\psi_t$ its corresponding flow map as in \eqref{e:phi_t_gau}. 
Then, the unique vector field that defines $\psi_t$ has the form:
\begin{equation}\label{e:u_t_cond_x1}
    u_t(x\vert x_1) = 
    \frac{\sigma'_t(x_1)}{\sigma_t(x_1)}\parr{x-\mu_t(x_1)} + \mu'_t(x_1).
\end{equation}
Consequently, $u_t(x\vert x_1)$ generates the Gaussian path $p_t(x\vert x_1)$.
\end{restatable}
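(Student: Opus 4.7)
The plan is to prove the theorem by direct computation, exploiting the fact that $\psi_t$ is an affine map and hence a global diffeomorphism whenever $\sigma_t(x_1)>0$.

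First I would differentiate the flow map explicitly. Since $\psi_t(x_0) = \sigma_t(x_1) x_0 + \mu_t(x_1)$ is affine in $x_0$, differentiating in $t$ gives $\frac{d}{dt}\psi_t(x_0) = \sigma'_t(x_1)\, x_0 + \mu'_t(x_1)$. By definition (equation \ref{e:psi_t_u_t}), this quantity equals $u_t(\psi_t(x_0)\vert x_1)$. Hence $u_t$ is determined at the point $x = \psi_t(x_0)$, and it remains to re-express $x_0$ in terms of $x$.

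Next I would invert $\psi_t$. Because $\sigma_t(x_1)>0$, the map $x_0 \mapsto \sigma_t(x_1) x_0 + \mu_t(x_1)$ is a bijection on $\Real^d$ with inverse $\psi_t^{-1}(x) = (x - \mu_t(x_1))/\sigma_t(x_1)$. Substituting $x_0 = \psi_t^{-1}(x)$ into the expression for $\frac{d}{dt}\psi_t(x_0)$ yields
\begin{equation*}
    u_t(x \vert x_1) = \sigma'_t(x_1)\,\frac{x - \mu_t(x_1)}{\sigma_t(x_1)} + \mu'_t(x_1) = \frac{\sigma'_t(x_1)}{\sigma_t(x_1)}\bigl(x-\mu_t(x_1)\bigr) + \mu'_t(x_1),
\end{equation*}
which is the claimed formula. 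Uniqueness is then immediate: any vector field defining the same flow $\psi_t$ must satisfy $u_t(\psi_t(x_0)\vert x_1) = \frac{d}{dt}\psi_t(x_0)$ for every $x_0\in\Real^d$, and since $\psi_t$ is surjective onto $\Real^d$, this pins down $u_t$ pointwise.

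Finally I would verify the "consequently" clause, namely that this $u_t$ generates $p_t(\cdot\vert x_1)$. Since $p(x) = \gN(x\vert 0,I)$, the standard Gaussian reparameterization (equivalently, the change-of-variables formula \eqref{e:push_forward_explicit} applied to the affine map $\psi_t$ with Jacobian determinant $\sigma_t(x_1)^d$) gives $[\psi_t]_* p = \gN(\cdot\vert \mu_t(x_1), \sigma_t(x_1)^2 I) = p_t(\cdot\vert x_1)$. Thus $\psi_t$ is a flow whose push-forward realizes $p_t(\cdot\vert x_1)$, and its generating vector field — which we just identified as $u_t$ — therefore generates $p_t(\cdot\vert x_1)$ in the sense of Section \ref{s:prelim}. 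Since the calculations are elementary and everything reduces to differentiating an affine map, there is no real obstacle; the only point that deserves care is noting that $\sigma_t(x_1)>0$ guarantees invertibility of $\psi_t$ and hence both uniqueness of $u_t$ and validity of the change-of-variables step.
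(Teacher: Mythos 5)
Your proof is correct and follows essentially the same route as the paper's: differentiate the affine flow map in $t$, invert it using $\sigma_t(x_1)>0$, and substitute $x_0 = \psi_t^{-1}(x)$ to obtain the formula for $u_t(x\vert x_1)$. The only difference is that you spell out the uniqueness (via surjectivity of $\psi_t$) and the ``consequently'' clause (via the Gaussian push-forward) a bit more explicitly, which the paper treats as given by the surrounding discussion rather than re-deriving in the proof itself.
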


% \begin{figure*}
% \centering
% \begin{tabular}{c}
%     \includegraphics[width=\textwidth]{} \vspace{-5pt}\\ \quad \quad \quad \quad \quad \quad \quad 
%     \scriptsize{ Diffusion Score VF} \\
%     \includegraphics[width=\textwidth]{}\vspace{-5pt}\\ \quad \quad \quad \quad \quad \quad \quad
%     \scriptsize{ Diffusion Conditional VF} \\
%     \includegraphics[width=\textwidth]{}\vspace{-5pt}\\ \quad \quad \quad \quad \quad \quad \quad 
%     \scriptsize{ OT Conditional VF} \\
% \end{tabular}
%     \caption{Comparing the Diffusion-Score vector field (top), the Diffusion-Conditional vector field (middle), and the Optimal-Transport-Conditional vector field (bottom) that push the standard normal distribution ($p_0$) to a concentrated normal distribution centered at a single data point, $x_1$. Warmer colors indicate larger magnitude vectors. Note that in contrast to Diffusion, The OT vector field has constant direction in time, and therefore encourages straight trajectories and is arguably simpler.  }
%     \label{fig:2d_fv}
% \end{figure*}
%\yl{fix title sizes. do fonts in figures 8pt. for textwidth images that means 12.36 pt font size to compensate for  \the\textwidth text-width compared to \the \paperwidth paper width. font type: \showthe\font}

\subsection{Special instances of Gaussian conditional probability paths}\label{ss:instances}

Our formulation is fully general for arbitrary functions $\mu_t(x_1)$ and $\sigma_t(x_1)$, and we can set them to any differentiable function satisfying the desired boundary conditions. 
We first discuss the special cases that recover probability paths corresponding to previously-used diffusion processes. 
Since we directly work with probability paths, we can simply depart from reasoning about diffusion processes altogether. Therefore, in the second example below, we directly formulate a probability path based on the Wasserstein-2 optimal transport solution as an interesting instance.

\paragraph{Example I: Diffusion conditional VFs.}
Diffusion models start with data points and gradually add noise until it approximates pure noise. These can be formulated as stochastic processes, which have strict requirements in order to obtain closed form representation at arbitrary times $t$, resulting in Gaussian conditional probability paths $p_t(x\vert x_1)$ with specific choices of mean  $\mu_t(x_1)$ and std $\sigma_t(x_1)$ \citep{sohl2015deep, ho2020denoising,song2020score}. 
%Reversing these paths lead to conditional probability paths transforming noise to data points. %
For example, the reversed (noise$\too$data) Variance Exploding (VE) path has the form
\begin{equation}\label{e:VE}
    p_t(x) = \gN(x | x_1 , \sigma_{1-t}^2 I ),
\end{equation}
where $\sigma_t$ is an increasing function, $\sigma_0=0$, and $\sigma_1 \gg 1$.  Next, \eqref{e:VE} provides the choices of $\mu_t(x_1)= x_1$ and $\sigma_t(x_1)= \sigma_{1-t}$. Plugging these into \eqref{e:u_t_cond_x1} of Theorem \ref{thm:cond_vf} we get 
\begin{equation}\label{e:u_t_VE}
    u_t(x|x_1) = -\frac{\sigma'_{1-t}}{\sigma_{1-t}}(x-x_1).
\end{equation}
The reversed (noise$\too$data) Variance Preserving (VP) diffusion path has the form 
\begin{equation}\label{e:VP_diffusion_path}
    p_t(x\vert x_1) = \gN(x\, \vert \, \alpha_{1-t} x_1 , \parr{1-\alpha_{1-t}^2}I), \text{where } \alpha_t = e^{-\frac{1}{2}T(t)}, T(t)=\int_0^{t} \beta(s)ds, 
\end{equation}
and $\beta$ is the noise scale function. 
Equation \ref{e:VP_diffusion_path} provides the choices of $\mu_t(x_1)=\alpha_{1-t} x_1$ and $\sigma_t(x_1)=\sqrt{1-\alpha_{1-t}^2}$. Plugging these into \eqref{e:u_t_cond_x1} of Theorem \ref{thm:cond_vf} we get 
\begin{equation}\label{e:ut_dif_with_our_method}
    u_t(x\vert x_1) = \frac{\alpha'_{1-t}}{1-\alpha^2_{1-t}} \parr{\alpha_{1-t}x-x_1} = -\frac{T'(1-t)}{2}\brac{
    \frac{e^{-T(1-t)}x - e^{-\frac{1}{2}T(1-t)}x_1}{1-e^{-T(1-t)}}}.
    % = -\frac{T'(1-t)}{2}\parr{\frac{x-e^{-\frac{1}{2}T(1-t)}x_1}{1-e^{-T(1-t)}}-x}.
\end{equation}
%
%Although the conditional vector fields for given diffusion paths are not unique, 
Our construction of the conditional VF $u_t(x|x_1)$ does in fact coincide with the vector field previously used in the deterministic probability flow (\cite{song2020score}, equation 13) when restricted to these conditional diffusion processes; see details in Appendix \ref{A:diff_cond_VP}. 
Nevertheless, combining the diffusion conditional VF with the Flow Matching objective offers an attractive training alternative---which we find to be more stable and robust in our experiments---to existing score matching approaches. 

Another important observation is that, as these probability paths were previously derived as solutions of diffusion processes, they do not actually reach a true noise distribution in finite time. In practice, $p_0(x)$ is simply approximated by a suitable Gaussian distribution for sampling and likelihood evaluation. Instead, our construction provides full control over the probability path, and we can just directly set $\mu_t$ and $\sigma_t$, as we will do next.

% An interesting observation is that we can consider the Gaussian conditional path with constant speed mean trajectory,
% \begin{equation}\label{e:VP_diffusion_simple}
%     %p_t(x|x_1) = \gN(x\, \vert \, tx_1,  1-t^2(1-\sigma)I)
%     p_t(x|x_1) = \gN(x\, \vert \, tx_1,  (1-t^2)I)
% \end{equation}
% where $t\in [\alpha_0,\alpha_1]$. According to Theorem \ref{thm:cond_vf} $p_t(x|x_1)$ has the generating VF 
% \begin{equation}\label{e:VF_dif_simple}
%     u_t(x|x_1) = \frac{x_1 - t x}{1-t^2}.
% \end{equation}
% Comparing equations \ref{e:VP_diffusion_path} and \ref{e:VP_diffusion_simple} we see that the probability paths are equal up to a time reparameterization $r(t)=\alpha_t$. Hence, Theorem \ref{thm:time_reparam} implies that, in fact, the total flows for these two path are identical, up to time reparameterization. 

% \yl{another option for diffusion process defined in the whole $t\in [0,1]$?}
% \begin{equation}
%     %p_t(x|x_1) = \gN(x\, \vert \, tx_1,  1-t^2(1-\sigma)I)
%     p_t(x|x_1) = \gN(x\, \vert \, tx_1,  (1-(1-\sigma)t^2)I)
% \end{equation}
% \begin{equation}
%     u_t(x|x_1) = \frac{x_1 - (1-\sigma)t x}{1-(1-\sigma)t^2},
% \end{equation}

\begin{figure}
\centering
\begin{subfigure}[t]{0.0989\linewidth}
\includegraphics[width=\linewidth]{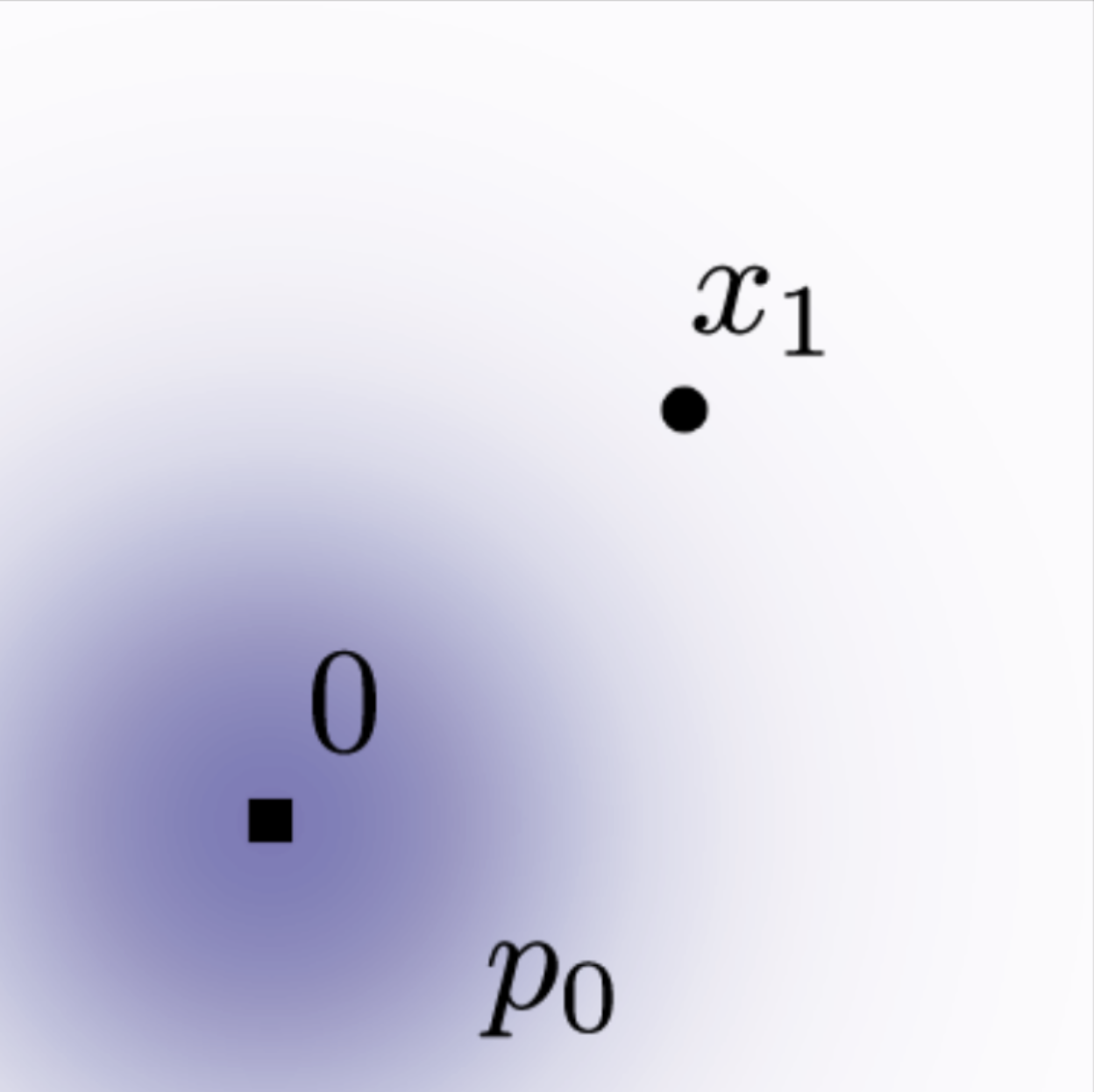}
\end{subfigure}
\hfill
\begin{subfigure}[t]{0.42\linewidth}
\centering
    \begin{subfigure}[t]{0.23\linewidth}
        \centering
        \includegraphics[width=\linewidth]{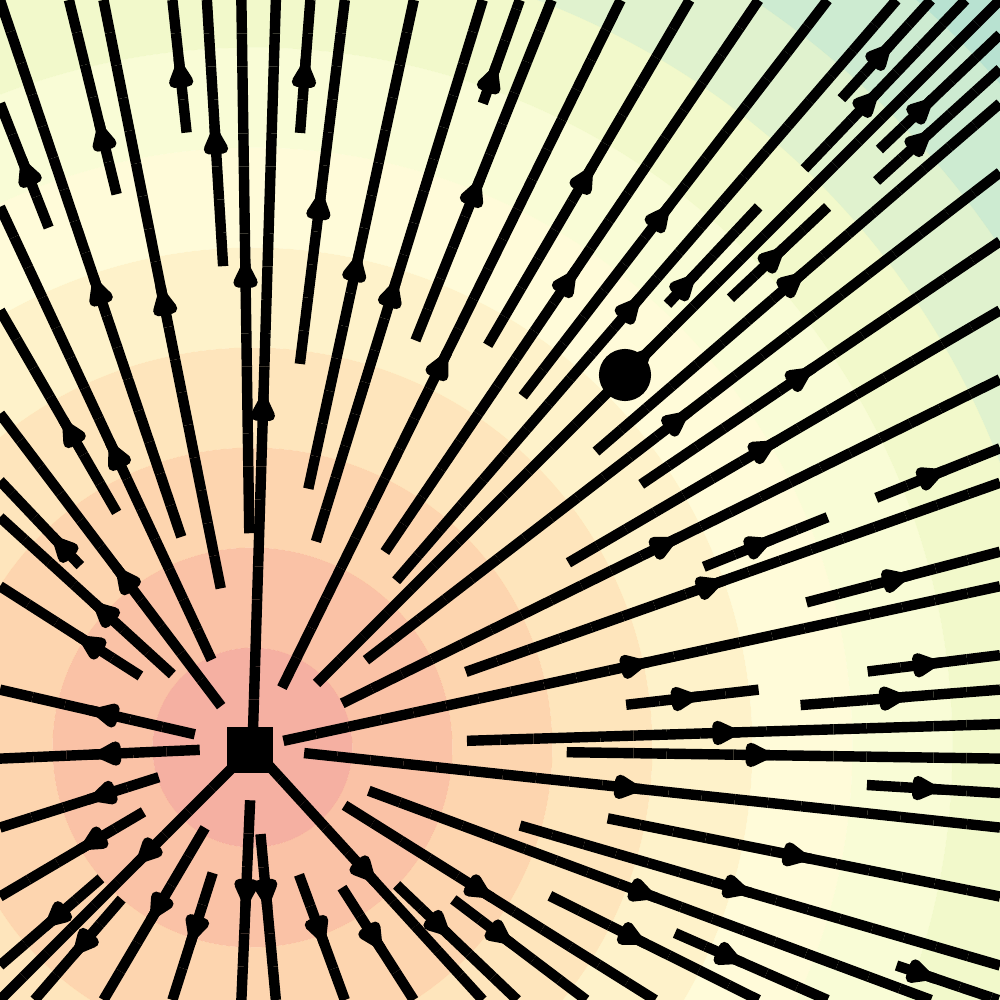}
        \vspace{-1.5em}
        \caption*{\scalebox{0.8}{$t=0.0$}}
    \end{subfigure}
    \begin{subfigure}[t]{0.23\linewidth}
        \centering
        \includegraphics[width=\linewidth]{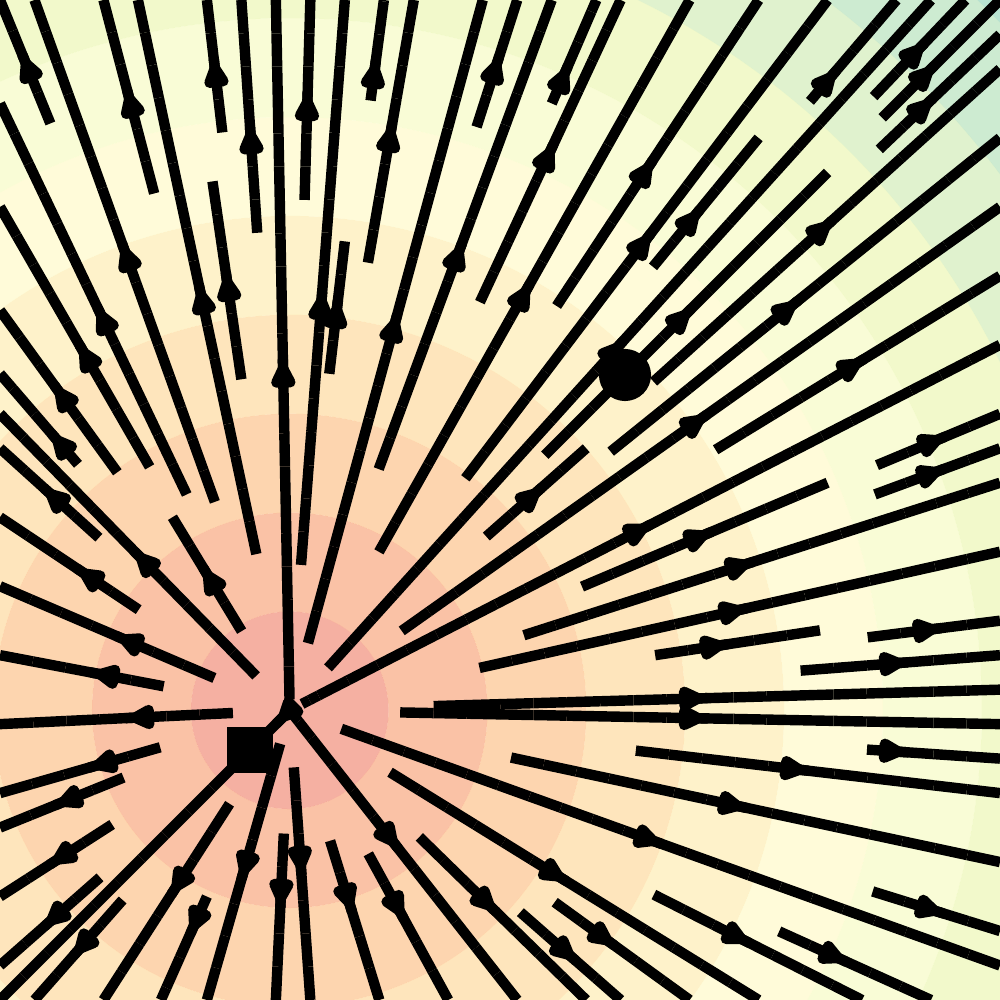}
        \vspace{-1.5em}
        \caption*{\scalebox{0.8}{$t=\nicefrac{1}{3}$}}
    \end{subfigure}
    \begin{subfigure}[t]{0.23\linewidth}
        \centering
        \includegraphics[width=\linewidth]{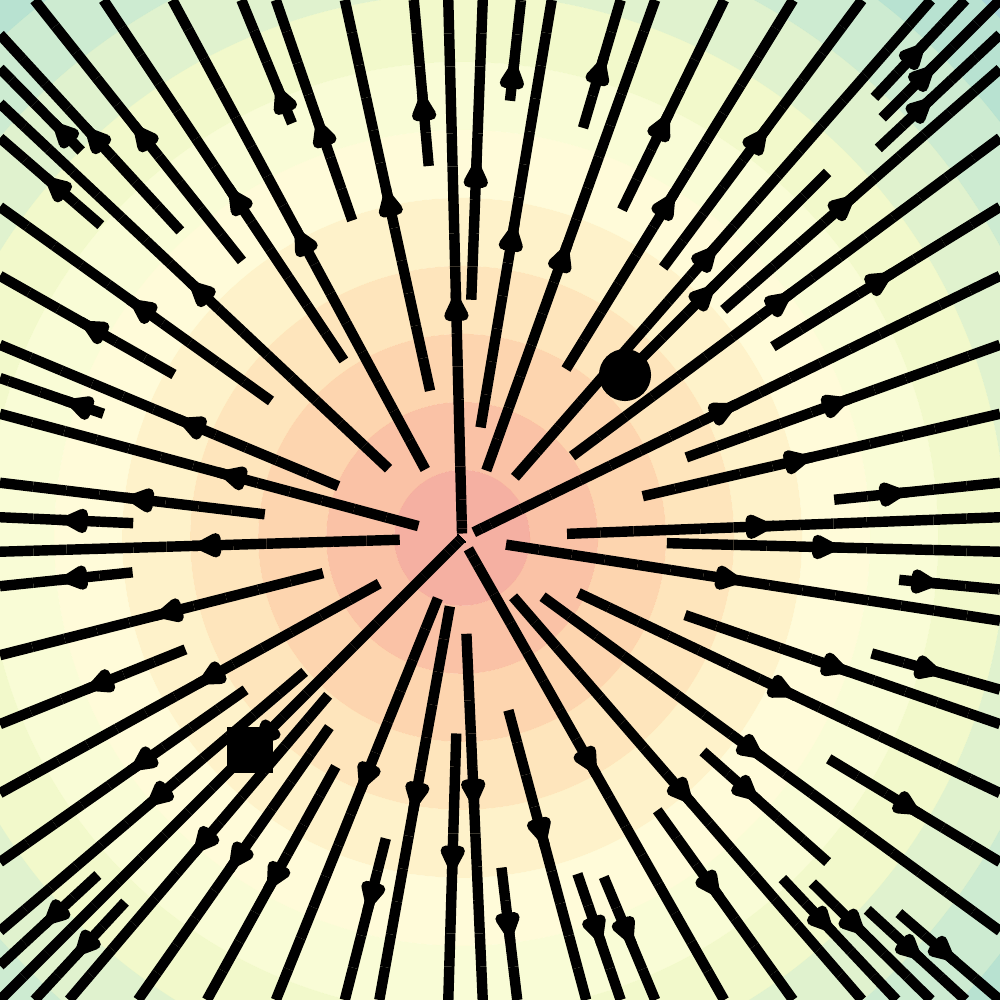}
        \vspace{-1.5em}
        \caption*{\scalebox{0.8}{$t=\nicefrac{2}{3}$}}
    \end{subfigure}
    \begin{subfigure}[t]{0.23\linewidth}
        \centering
        \includegraphics[width=\linewidth]{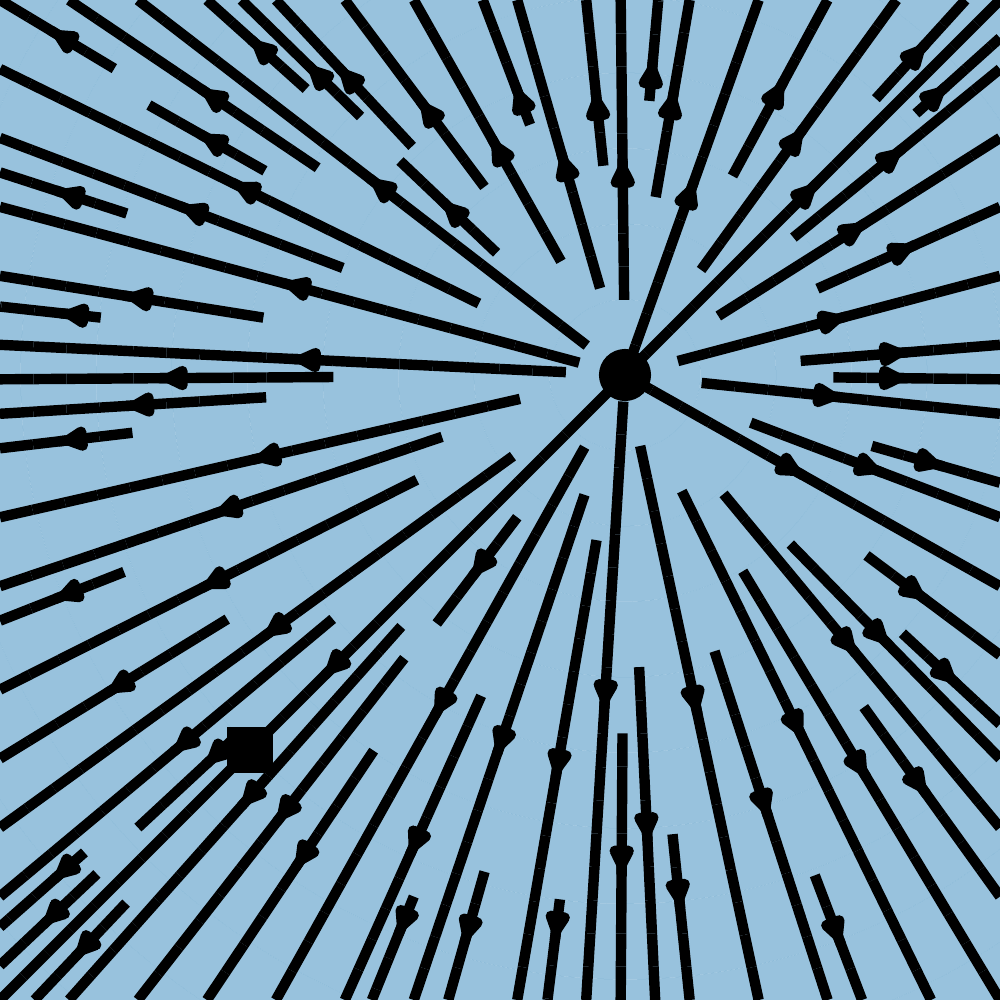}
        \vspace{-1.5em}
        \caption*{\scalebox{0.8}{$t=1.0$}}
    \end{subfigure} \vspace{-5pt}
\caption*{Diffusion path -- conditional score function}
\end{subfigure} 
\hfill
\begin{subfigure}[t]{0.43\linewidth}
\centering
    \begin{subfigure}[t]{0.23\linewidth}
        \centering
        \includegraphics[width=\linewidth]{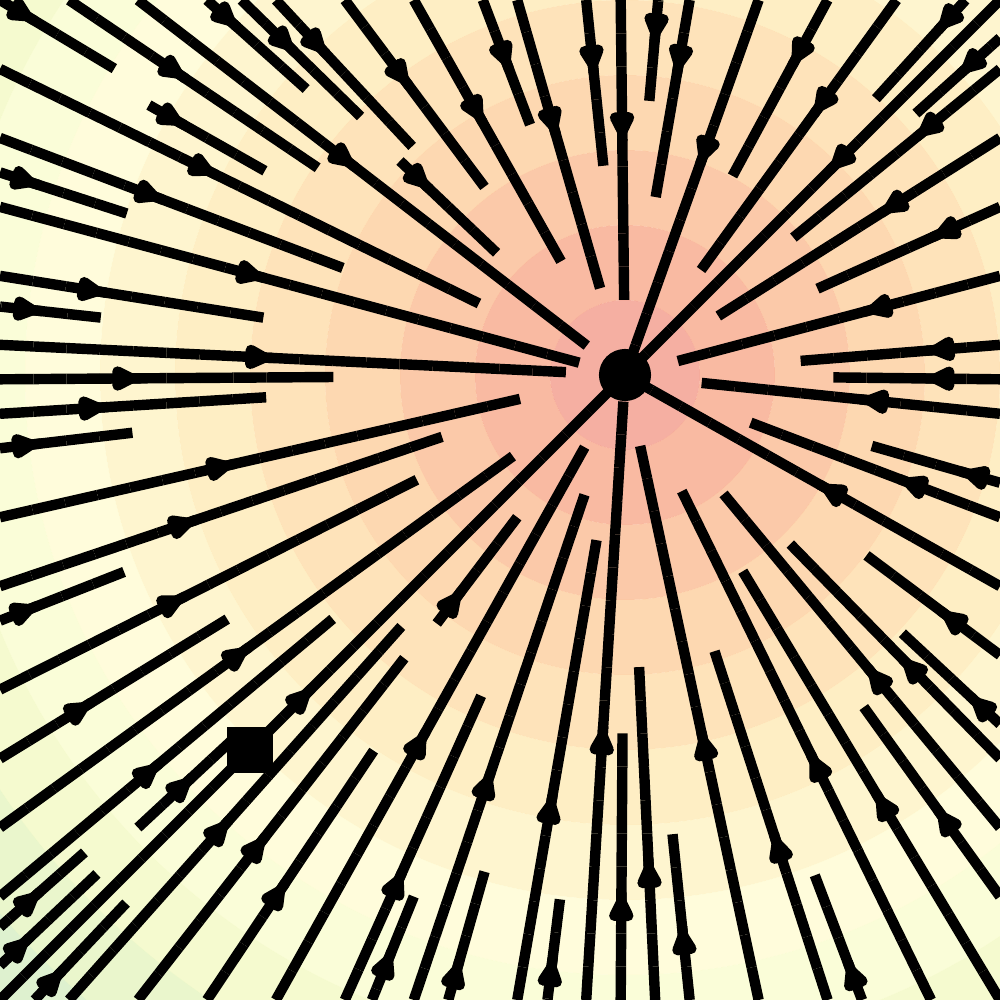}
        \vspace{-1.5em}
        \caption*{\scalebox{0.8}{$t=0.0$}}
    \end{subfigure}
    \begin{subfigure}[t]{0.23\linewidth}
        \centering
        \includegraphics[width=\linewidth]{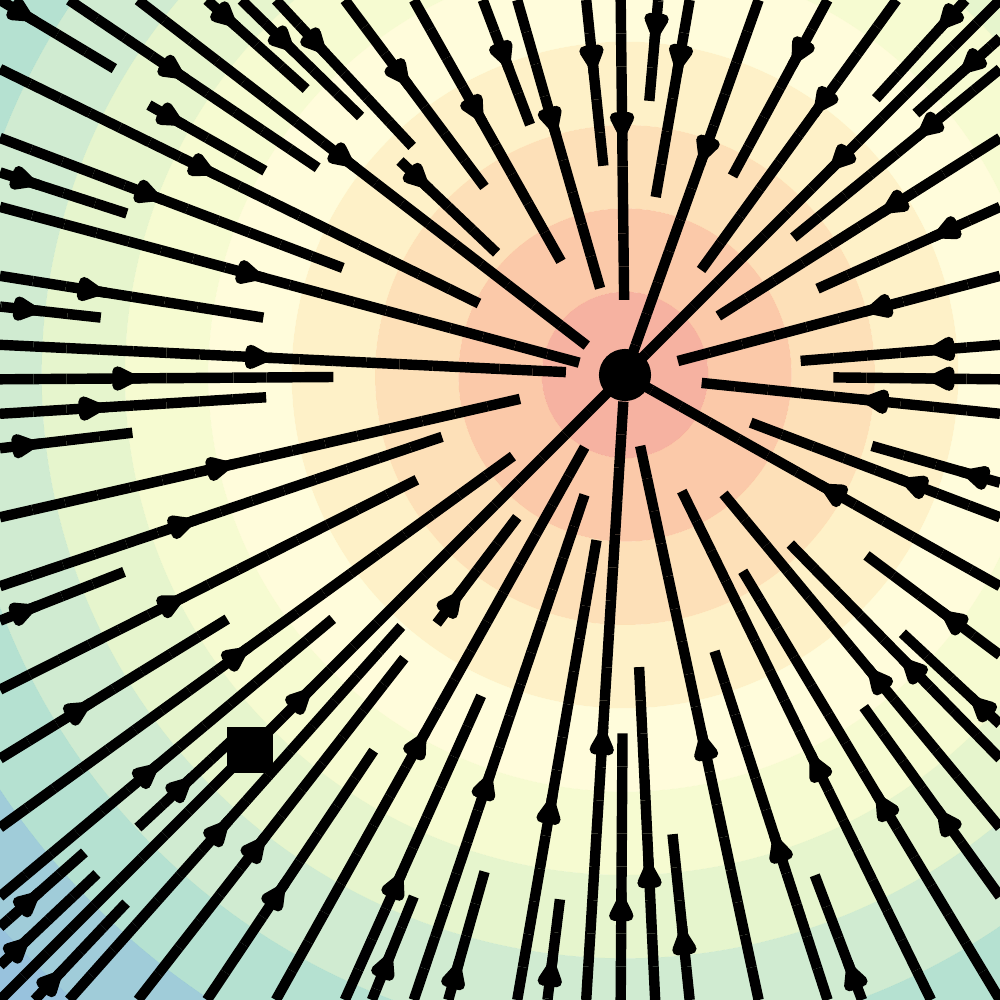}
        \vspace{-1.5em}
        \caption*{\scalebox{0.8}{$t=\nicefrac{1}{3}$}}
    \end{subfigure}
    \begin{subfigure}[t]{0.23\linewidth}
        \centering
        \includegraphics[width=\linewidth]{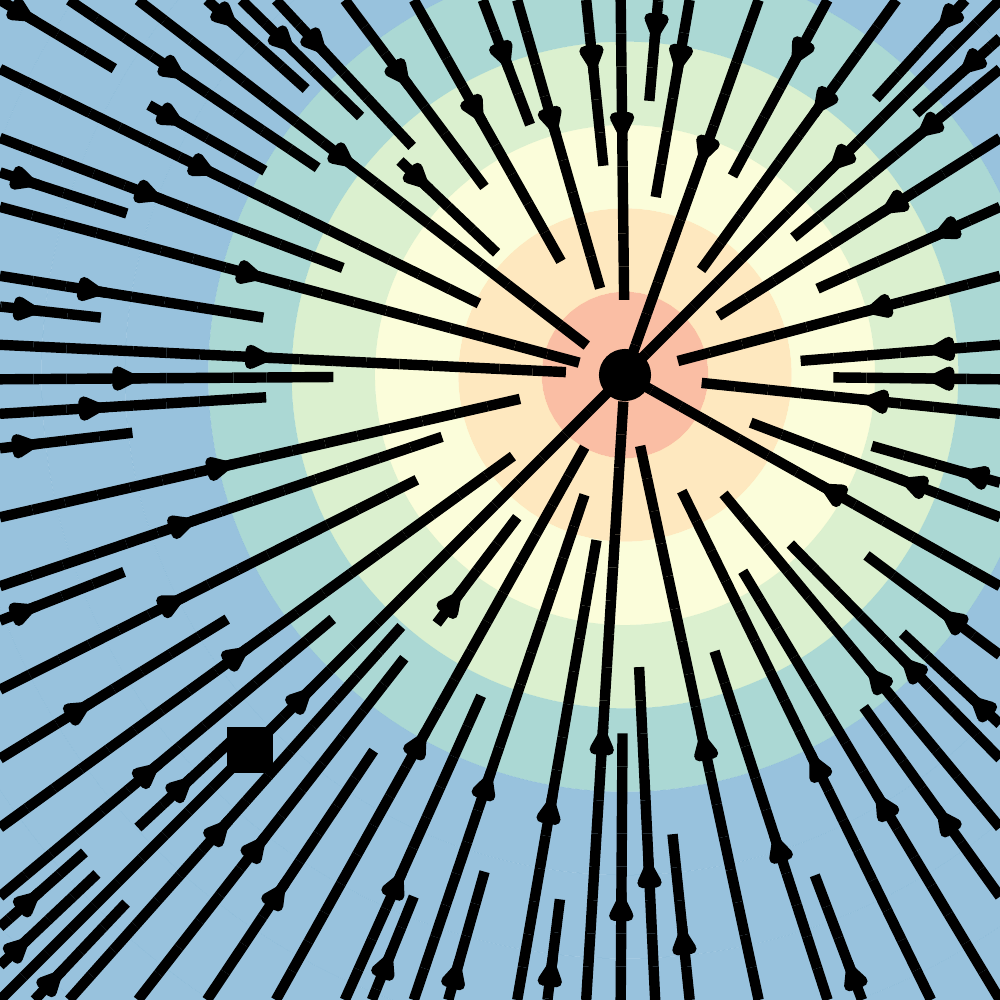}
        \vspace{-1.5em}
        \caption*{\scalebox{0.8}{$t=\nicefrac{2}{3}$}}
    \end{subfigure}
    \begin{subfigure}[t]{0.23\linewidth}
        \centering
        \includegraphics[width=\linewidth]{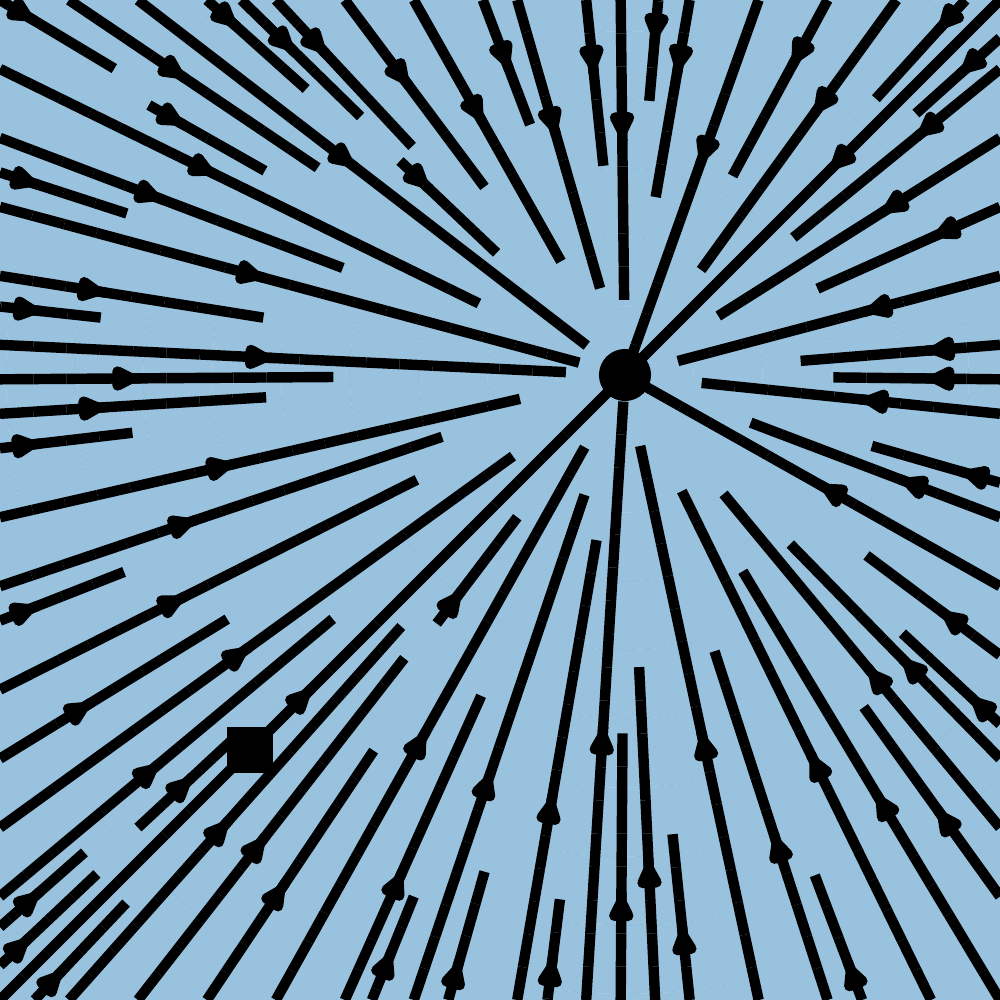}
        \vspace{-1.5em}
        \caption*{\scalebox{0.8}{$t=1.0$}}
    \end{subfigure}\vspace{-5pt}
\caption*{OT path -- conditional vector field}
\end{subfigure}\vspace{-5pt}
\caption{Compared to the diffusion path's conditional score function, the OT path's conditional vector field has constant direction in time and is arguably simpler to fit with a parametric model. Note the blue color denotes larger magnitude while red color denotes smaller magnitude.\vspace{-5pt}}
\label{fig:2d_fv}
\end{figure}

\paragraph{Example II: Optimal Transport conditional VFs.}
An arguably more natural choice for conditional probability paths is to define the mean and the std to simply change linearly in time, \ie, 
\begin{equation}
    \mu_t(x) = tx_1, \text{ and } \sigma_t(x) = 1-(1-\sigmamin)t.
\end{equation}
According to Theorem \ref{thm:cond_vf} this path is generated by the VF
\begin{equation}\label{e:ut_ot}
    u_t(x|x_1)= \frac{x_1-(1-\sigmamin)x}{1-(1-\sigmamin)t},
\end{equation}
which, in contrast to the diffusion conditional VF (\eqref{e:ut_dif_with_our_method}), is defined for all $t\in[0,1]$. The conditional flow that corresponds to $u_t(x|x_1)$ is \begin{equation}\label{e:phi_t_ot}
    \psi_t(x) = (1-(1-\sigmamin)t)x + tx_1,
\end{equation}
and in this case, the CFM loss (see equations \ref{e:cfm}, \ref{e:cfm_dt_psi}) takes the form: 
\begin{equation}\label{e:cfm_cond_ot}
     \gL_{\CFM}(\theta) 
     =\E_{t, q(x_1), p(x_0)} \Big\|v_t(\psi_t(x_0)) - \Big (x_1 - (1-\sigma_{\min})x_0\Big) \Big\|^2.     
\end{equation}
Allowing the mean and std to change linearly not only leads to simple and intuitive paths, but it is actually also optimal in the following sense.
{The conditional flow $\psi_t(x)$} is in fact the Optimal Transport (OT) \emph{displacement map} between the two Gaussians $p_0(x|x_1)$ and $p_1(x|x_1)$. The OT \emph{interpolant}, which is a probability path, is defined to be (see Definition 1.1 in \cite{mccann1997convexity}):
\begin{equation}
    p_t = [(1-t)\mathrm{id} + t\psi]_\star p_0
\end{equation}
where $\psi:\Real^d\too\Real^d$ is the OT map pushing $p_0$ to $p_1$, $\mathrm{id}$ denotes the identity map, \ie, $\mathrm{id}(x)=x$, and $(1-t)\mathrm{id} + t\psi$ is called the OT displacement map. Example 1.7 in \cite{mccann1997convexity} shows,  that in our case of two Gaussians where the first is a standard one, the OT displacement map takes the form of \eqref{e:phi_t_ot}. 

\begin{wrapfigure}[7]{r}{0.30\textwidth}
\vspace{-30pt}
  \begin{center}
  \begin{tabular}{cc}
      \includegraphics[width=0.13\textwidth]{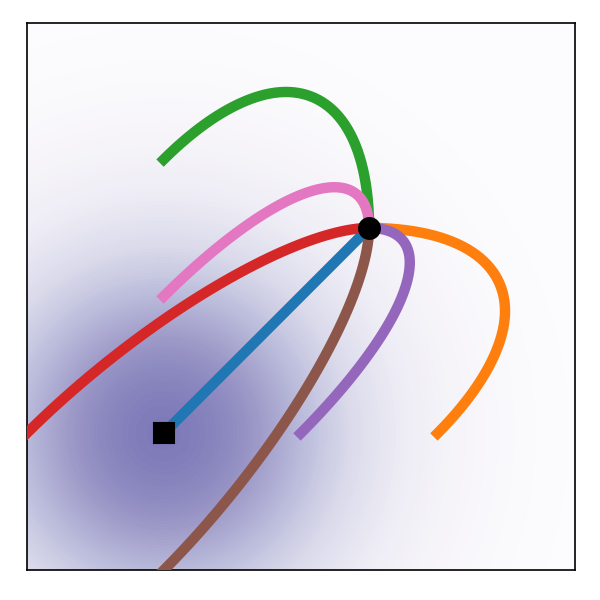} & \includegraphics[width=0.13\textwidth]{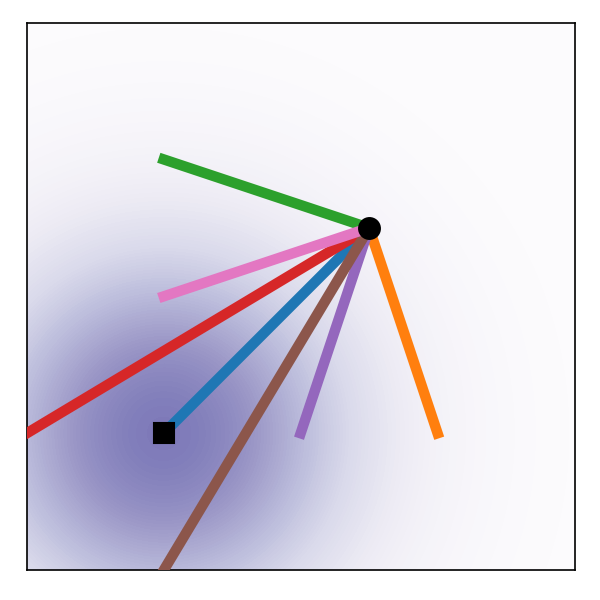}\vspace{-5pt} \\
      {\scriptsize Diffusion} & {\scriptsize OT}  \vspace{-10pt}
  \end{tabular}
  \end{center}
  \caption{Diffusion and OT trajectories. 
}\label{fig:trajectories}
\end{wrapfigure}
Intuitively, particles under the OT displacement map always move in straight line trajectories and with constant speed.
%to its target location, determined by the OT map $\psi$. 
Figure \ref{fig:trajectories} depicts sampling paths for the diffusion and OT conditional VFs. 
Interestingly, we find that sampling trajectory from diffusion paths can ``overshoot'' the final sample, resulting in unnecessary backtracking, whilst the OT paths are guaranteed to stay straight.

Figure \ref{fig:2d_fv} compares the diffusion conditional score function (the regression target in a typical diffusion methods), \ie, $\nabla \log p_t(x|x_1)$ with $p_t$ defined as in \eqref{e:VP_diffusion_path}, with the OT conditional VF (\eqref{e:ut_ot}). 
The start ($p_0$) and end ($p_1$) Gaussians are identical in both examples. 
An interesting observation is that the OT VF has a constant direction in time, which arguably leads to a simpler regression task. This property can also be verified directly from \eqref{e:ut_ot} as the VF can be written in the form $u_t(x|x_1)=g(t)h(x|x_1)$. Figure \ref{fig:2d_diff_fv} in the Appendix shows a visualization of the Diffusion VF. 
% Furthermore, constant straight line trajectories imply that the flow is easy to solve, requiring very few vector field evaluations for simulation.
%
Lastly, we note that although the conditional flow is optimal, this by no means imply that the marginal VF is an optimal transport solution. Nevertheless, we expect the marginal vector field to remain relatively simple.

\section{Related Work}

% \paragraph{Continuous Normalizing Flows}
Continuous Normalizing Flows were introduced in \citep{chen2018neural} as a continuous-time version of Normalizing Flows (see \eg, \citet{kobyzev2020normalizing,papamakarios2021normalizing} for an overview). Originally, CNFs are trained with the maximum likelihood objective, but this involves expensive ODE simulations for the forward and backward propagation, resulting in high time complexity due to the sequential nature of ODE simulations. Although some works demonstrated the capability of CNF generative models for image synthesis \citep{ffjord2018}, scaling up to very high dimensional images is inherently difficult. %{Some works regularized the ODE to simplify its solutions but did not change the fundamental training algorithm \citep{dupont2019aug,yang2019potential,finlay2020how,onken2021ot-flow,tong2020trajectorynet,kelly2020learning,du2022toflow}.  }
A number of works attempted to regularize the ODE to be easier to solve, \eg, using augmentation~\citep{dupont2019aug}, adding regularization terms~\citep{yang2019potential,finlay2020how,onken2021ot-flow,tong2020trajectorynet,kelly2020learning}, or stochastically sampling the integration interval \citep{du2022toflow}. These works merely aim to regularize the ODE but do not change the fundamental training algorithm.

In order to speed up CNF training, some works have developed simulation-free CNF training frameworks by explicitly designing the target probability path and the dynamics. For instance, \citet{rozen2021moser} consider a linear interpolation between the prior and the target density but involves integrals that were difficult to estimate in high dimensions, while \citet{ben2022matching} consider general probability paths similar to this work but suffers from biased gradients in the stochastic minibatch regime.
In contrast, the Flow Matching framework allows simulation-free training with unbiased gradients and readily scales to very high dimensions.

%\cite{rozen2021moser} considers a path that linearly interpolates between the prior and the target density, for which one can express the target density as a function of the prior density and the divergence of some time independent learned vector field, ridding the need to solve the ODE during training. This method is however formulated in terms of densities rather than log-densities making it impossible (numerically) to scale to high dimensions. Closely related to our  work, \cite{ben2022matching} also construct target marginal probability paths from conditional probability paths and try to match the learned flow to the one generating the target path. \cite{ben2022matching} use a logarithmic formulation of the continuity equation (see \eqref{e:continuity}) to define a divergence between probability density paths. Computing this divergence involves approximating the log of the marginal density paths making its gradients biased, limiting the applicability of this approach to high dimensions as well. Furthermore, both approaches require costly divergence computation of the learned vector field in order to compute their losses. 

% \paragraph{Diffusion Models}

Another approach to simulation-free training relies on the construction of a diffusion process to indirectly define the target probability path~\citep{sohl2015deep,ho2020denoising,song2019score}.
\citet{song2020score} shows that diffusion models are trained using denoising score matching~\citep{vincent2011connection}, a conditional objective that provides unbiased gradients with respect to the score matching objective. 
Conditional Flow Matching draws inspiration from this result, but generalizes to matching vector fields directly.
Due to the ease of scalability, diffusion models have received increased attention, producing a variety of improvements such as loss-rescaling~\citep{song2021maximum}, adding classifier guidance along with architectural improvements~\citep{dhariwal2021diffusion}, and learning the noise schedule~\citep{nichol2021improved,kingma2021vdm}. However, \citep{nichol2021improved} and \citep{kingma2021vdm} only consider a restricted setting of Gaussian conditional paths defined by simple diffusion processes with a single parameter---in particular, it does not include our conditional OT path.
%{reducing approximation error by defining a finite time diffusion process via diffusion bridges \citep{DeBortoli2021schscore, wang2021schbridges,peluchetti2021non},} 
%
% propability paths originating from diffusion processes
{In an another line of works, \citep{DeBortoli2021schscore, wang2021schbridges,peluchetti2021non} proposed finite time diffusion constructions via diffusion bridges theory resolving the approximation error incurred by infinite time denoising constructions.}
While existing works make use of a connection between diffusion processes and continuous normalizing flows with the same probability path~\citep{maoutsa2020interacting,song2020score,song2021maximum}, our work allows us to generalize beyond the class of probability paths modeled by simple diffusion.
With our work, it is possible to completely sidestep the diffusion process construction and reason directly with probability paths, while still retaining efficient training and log-likelihood evaluations.  Lastly, concurrently to our work \citep{liu2022flow,albergo2022building} arrived at similar conditional objectives for simulation-free training of CNFs, while \citet{neklyudov2023action} derived an implicit objective when $u_t$ is assumed to be a gradient field.\looseness=-1
%. In particular, the loss suggested in \citep{liu2022flow} coincides with our CFM loss for the choice of $\sigma_{\min}=0$ in \eqref{e:cfm_cond_ot}.}

% \yl{add discussion with https://arxiv.org/pdf/2209.03003.pdf and https://arxiv.org/pdf/2209.15571.pdf}

\begin{figure}
\centering
\begin{tabular}{@{}c@{\hspace{4pt}}|@{\hspace{4pt}}c@{\hspace{2pt}}c@{\hspace{2pt}}c@{\hspace{2pt}}c@{\hspace{2pt}}c@{}}
    \includegraphics[width=0.68\textwidth]{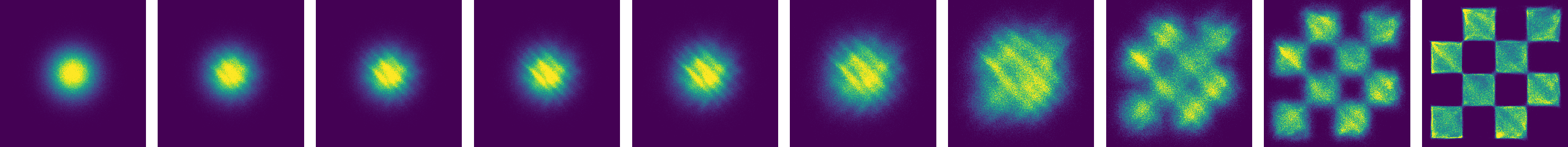} & \rot{\tiny{SM \textsuperscript{w}/ Dif}} &
      \includegraphics[width=0.0632\textwidth]{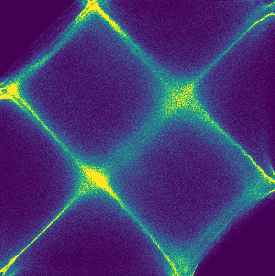} & 
      \includegraphics[width=0.0632\textwidth]{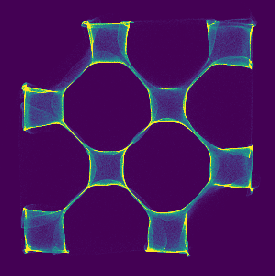} &
      \includegraphics[width=0.0632\textwidth]{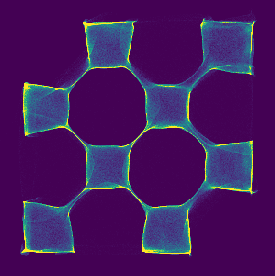} &
      \includegraphics[width=0.0632\textwidth]{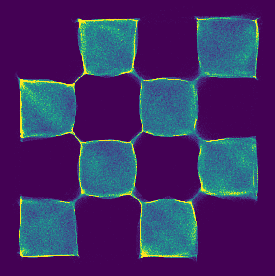}  \vspace{-3pt} \\
    \scriptsize{Score matching \textsuperscript{w}/ Diffusion} & & & & &  \\
    \includegraphics[width=0.68\textwidth]{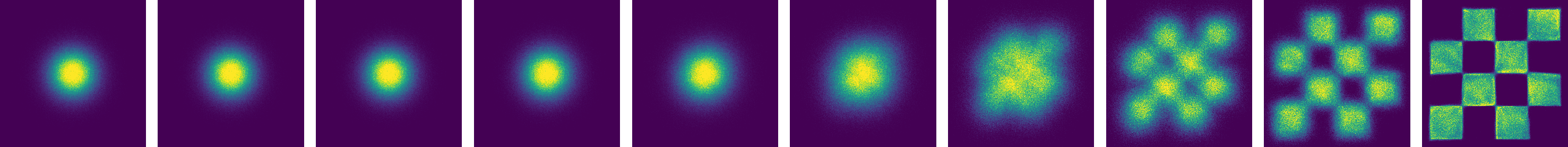} &
    \rot{\tiny{FM \textsuperscript{w}/ Dif}} &
      \includegraphics[width=0.0632\textwidth]{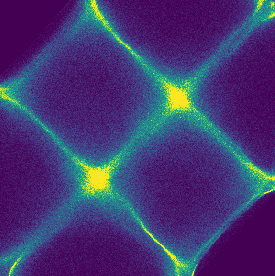} & 
      \includegraphics[width=0.0632\textwidth]{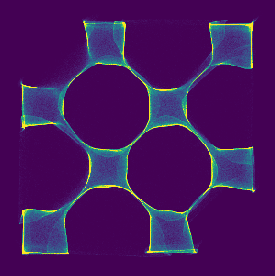} &
      \includegraphics[width=0.0632\textwidth]{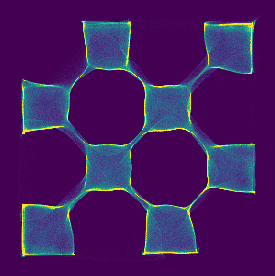} &
      \includegraphics[width=0.0632\textwidth]{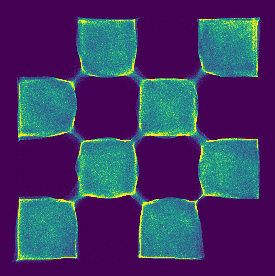} \vspace{-3pt}
      \\
    \scriptsize{Flow Matching \textsuperscript{w}/ Diffusion } & & & & & \\
    \includegraphics[width=0.68\textwidth]{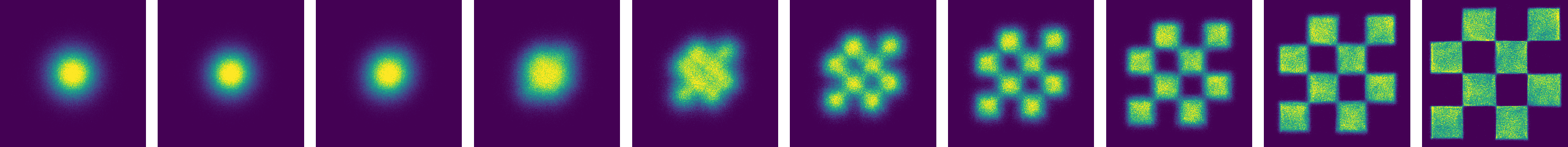} &
    \rot{\tiny{FM \textsuperscript{w}/ OT}} &
      \includegraphics[width=0.0632\textwidth]{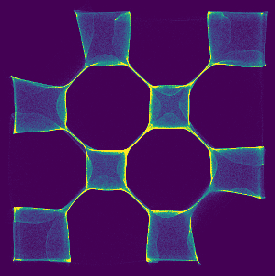} & 
      \includegraphics[width=0.0632\textwidth]{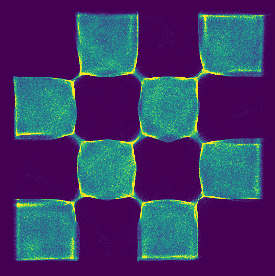} &
      \includegraphics[width=0.0632\textwidth]{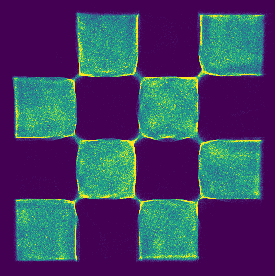} &
      \includegraphics[width=0.0632\textwidth]{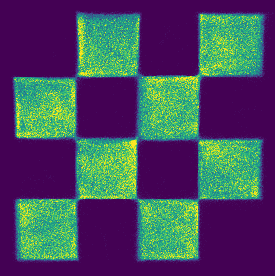} \vspace{-3pt}  \\
      \scriptsize{Flow Matching \textsuperscript{w}/ OT } & & {\scriptsize NFE=4} & {\scriptsize NFE=8} & {\scriptsize NFE=10} & {\scriptsize NFE=20}
\end{tabular}
    \caption{(\emph{left}) Trajectories of CNFs trained with different objectives on 2D checkerboard data. 
    The OT path introduces the checkerboard pattern much earlier, while FM results in more stable training.
    (\emph{right}) FM with OT results in more efficient sampling, solved using the midpoint scheme.\vspace{-10pt}}
    \label{fig:2d_checkerboard}
\end{figure}

\section{Experiments}

We explore the empirical benefits of using Flow Matching on the image datasets of  CIFAR-10~\citep{krizhevsky2009learning} and ImageNet at resolutions 32, 64, and 128~\citep{chrabaszcz2017downsampled,deng2009-imagenet}. We also ablate the choice of diffusion path in Flow Matching, particularly between the standard variance preserving diffusion path and the optimal transport path. 
We discuss how sample generation is improved by directly parameterizing the generating vector field and using the Flow Matching objective. Lastly we show Flow Matching can also be used in the conditional generation setting. Unless otherwise specified, we evaluate likelihood and samples from the model using \texttt{dopri5} \citep{dormand1980family} at absolute and relative tolerances of 1e-5. Generated samples can be found in the Appendix, and all implementation details are in Appendix \ref{A:implementation-details}.\vspace{-5pt}

% We experimented with training CNFs using the Flow Matching (FM) conditional loss (\eqref{e:cfm}) and the two Gaussian probability paths described in Section \ref{ss:instances}: VP Diffusion (FM-Diffusion) and Optimal Transport (FM-OT). For ablation, we consider the Score Matching with VP Diffusion path (SM-Diffusion). 

\begin{table}\centering
\ra{1.05}
\setlength{\tabcolsep}{2.0pt}
\resizebox{\linewidth}{!}{%
% \small
\begin{tabular}{@{} l @{} R{3.0em} R{3.0em} R {3.0em} r @{} R{3.0em} R{3.0em} R{3.0em} r @{} R{3.0em} R{3.0em} R{3.0em} r @{}}\toprule
  & \multicolumn{3}{c}{\bf CIFAR-10} &
  & \multicolumn{3}{c}{\bf ImageNet 32$\times$32} &
  & \multicolumn{3}{c}{\bf ImageNet 64$\times$64} \\
\cmidrule(lr){2-4} \cmidrule(lr){6-8} \cmidrule(lr){10-12}
Model & {NLL$\downarrow$} & {FID$\downarrow$} & {NFE$\downarrow$}
& & {NLL$\downarrow$} & {FID$\downarrow$} & {NFE$\downarrow$}
& & {NLL$\downarrow$} & {FID$\downarrow$} & {NFE$\downarrow$} \\
\cmidrule(r){1-1}\cmidrule(lr){2-4} \cmidrule(lr){6-8} \cmidrule(lr){10-12}
\textit{\small Ablations}\\
\;\; DDPM   & 
3.12 & 7.48 & 274 & & 
3.54 & 6.99 & 262
 & & 
3.32 & 17.36 & 264 \\
\;\; Score Matching & 
3.16 & 19.94 & 242 & & 
3.56 & 5.68 & 178 & & 
3.40 & 19.74 & 441 \\
\;\; ScoreFlow  & 
3.09 & 20.78 & 428 & & 
3.55 & 14.14  & 195 & & 
3.36 & 24.95 & 601 \\

\cmidrule(r){1-1}\cmidrule(lr){2-4} \cmidrule(lr){6-8} \cmidrule(lr){10-12}%
\textit{\small Ours}\\
\;\; FM \textsuperscript{w}/ Diffusion  &  
3.10 & 8.06 & 183 & & 
3.54 & 6.37 & 193 & & 
3.33 & 16.88 & 187 \\
\;\; FM \textsuperscript{w}/ OT  & 
\textbf{2.99} & \textbf{6.35} & \textbf{142} & & 
\textbf{3.53} & \textbf{5.02} & \textbf{122} & & 
\textbf{3.31} & \textbf{14.45} & \textbf{138} \\
\bottomrule
\end{tabular}\quad 
\begin{tabular}{@{} l @{} R{2.8em} R{2.8em} r @{}}\toprule
  & \multicolumn{2}{c}{\bf ImageNet 128$\times$128} \\
\cmidrule(lr){2-3} 
Model & {NLL$\downarrow$} & {FID$\downarrow$} \\
\cmidrule(r){1-1} \cmidrule(lr){2-3} 
MGAN~{\tiny\citep{hoang2018mgan}} & 
-- & 58.9 \\
PacGAN2~{\tiny\citep{lin2018pacgan}} & 
-- & 57.5 \\
Logo-GAN-AE~{\tiny\citep{sage2018logo}} & 
-- & 50.9 \\
Self-cond. GAN~{\tiny\citep{luvcic2019high}} & 
-- & 41.7 \\
Uncond. BigGAN~{\tiny\citep{luvcic2019high}} & 
-- & 25.3 \\
PGMGAN~{\tiny\citep{armandpour2021partition}}  & 
-- & 21.7 \\
\cmidrule(r){1-1} \cmidrule(lr){2-3}%
FM \textsuperscript{w}/ OT  & 
\textbf{2.90} & \textbf{20.9} \\
\bottomrule
\end{tabular}\vspace{-5pt}
}
\caption{Likelihood (BPD),  quality of generated samples (FID), and evaluation time (NFE) for the same model trained with different methods. \vspace{-5pt}}
\label{tab:img_results}
\end{table}

\pagebreak
\subsection{Density Modeling and Sample Quality on ImageNet}\vspace{-5pt}

{We start by comparing the same model architecture, \ie, the U-Net architecture from \citet{dhariwal2021diffusion} with minimal changes, trained on CIFAR-10, and ImageNet 32/64 with different popular diffusion-based losses: DDPM from \citep{ho2020denoising}, Score Matching (SM) \citep{song2020score}, and Score Flow (SF) \citep{song2021maximum}; see Appendix \ref{a:dif_baselines} for exact details.} Table \ref{tab:img_results} (left)  summarizes our results alongside these baselines reporting negative log-likelihood (NLL) in units of bits per dimension (BPD), sample quality as measured by the Frechet Inception Distance (FID; \citet{heusel2017gans}), and averaged number of function evaluations (NFE) required for the adaptive solver to reach its a prespecified numerical tolerance, averaged over 50k samples. All models are trained using the same architecture, hyperparameter values and number of training iterations, where baselines are allowed more iterations for better convergence. Note that these are \emph{unconditional} models. 
%
% %
{On both CIFAR-10 and ImageNet, FM-OT consistently obtains best results across all our quantitative measures compared to competing methods. We are noticing a higher that usual FID performance in CIFAR-10 compared to previous works \citep{ho2020denoising,song2020score,song2021maximum} that can possibly be explained by the fact that our used architecture was not optimized for CIFAR-10.}

{Secondly, Table \ref{tab:img_results} (right) compares a model trained using Flow Matching with the OT path on ImageNet at resolution 128$\times$128. Our FID is state-of-the-art with the exception of IC-GAN~\citep{casanova2021instance} which uses conditioning with a self-supervised ResNet50 model, and therefore is left out of this table. } Figures \ref{fig:imagenet32_samples}, \ref{fig:imagenet64_samples}, \ref{fig:imagenet128_samples} in the Appendix show non-curated samples from these models.

\begin{wrapfigure}[10]{r}{0.35\textwidth}
\vspace{-19pt}
  \begin{center}
  \includegraphics[width=0.35\textwidth]{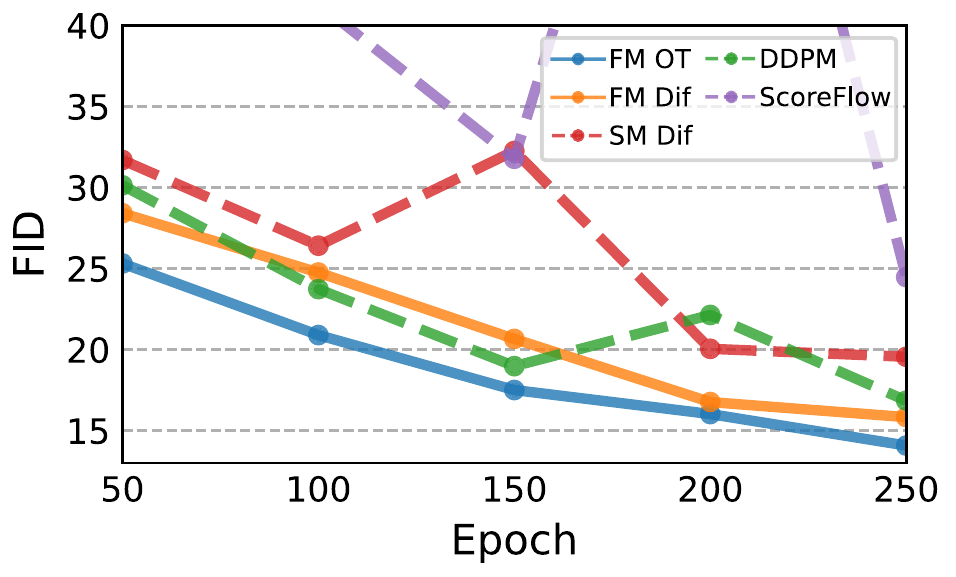} 
  \end{center}\vspace{-14pt}
  \caption{{Image quality during training, ImageNet 64$\times$64.}} \label{fig:FID_vs_epochs}
\end{wrapfigure}
\textbf{Faster training.} 
While existing works train diffusion models with a very high number of iterations (\eg, 1.3m and 10m iterations are reported by Score Flow and VDM, respectively), we  find that Flow Matching generally converges much faster. 
{Figure \ref{fig:FID_vs_epochs} shows FID curves during training of Flow Matching and all baselines for ImageNet 64$\times$64; FM-OT is able to lower the FID faster and to a greater extent than the alternatives. } 
% 
% 
% For example, \cite{dhariwal2021diffusion} train ImageNet-64 for 540k iterations with 2k batch size, while FM trains (with the same architecture) for 157k iterations with equal batch size resulting in 70\% less image throughput; 
% 
For ImageNet-128 \cite{dhariwal2021diffusion} train for 4.36m iterations with batch size 256, while FM (with 25\% larger model) used 500k iterations with batch size 1.5k, \ie, 33\% less image throughput; see Table \ref{tab:hyper-params} for exact details. 
Furthermore, the cost of sampling from a model can drastically change during training for score matching, whereas the sampling cost stays constant when training with Flow Matching (Figure \ref{fig:2d_NFE_vs_epochs} in Appendix). \vspace{-5pt}

%\pagebreak
\subsection{Sampling Efficiency}\vspace{-5pt}

\begin{figure}
    \centering
    \begin{subfigure}[b]{0.32\linewidth}
        \includegraphics[width=\linewidth]{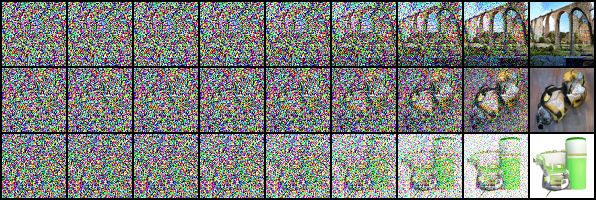}
        \caption*{\scriptsize Score Matching w/ Diffusion}
    \end{subfigure}
    \begin{subfigure}[b]{0.32\linewidth}
        \includegraphics[width=\linewidth]{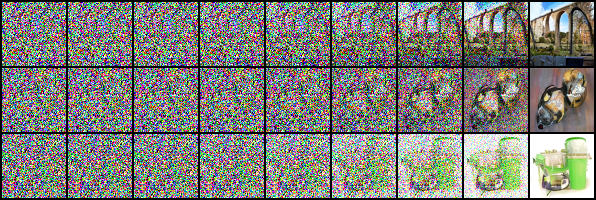}
        \caption*{\scriptsize Flow Matching w/ Diffusion}
    \end{subfigure} 
    \begin{subfigure}[b]{0.32\linewidth}
        \includegraphics[width=\linewidth]{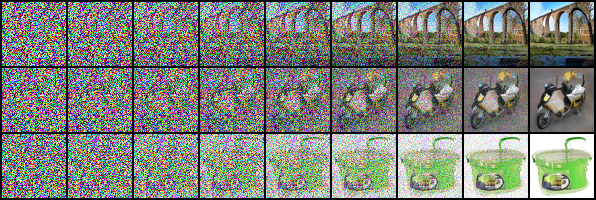}
        \caption*{\scriptsize Flow Matching w/ OT}
    \end{subfigure}\vspace{-5pt}
    \caption{Sample paths from the same initial noise with models trained on ImageNet 64$\times$64. The OT path reduces noise roughly linearly, while diffusion paths visibly remove noise only towards the end of the path. Note also the differences between the generated images.}\vspace{-10pt}
    \label{fig:sample_paths}
\end{figure}

For sampling, we first draw a random noise sample $x_0 \sim \gN(0,I)$ then compute $\phi_1(x_0)$ by solving \eqref{e:ode} with the trained VF, $v_t$, on the interval $t\in [0, 1]$ using an ODE solver. While diffusion models can also be sampled through an SDE formulation, this can be highly inefficient and many methods that propose fast samplers (\eg, ~\citet{song2020denoising,zhang2022fast}) directly make use of the ODE perspective (see Appendix \ref{A:diff_cond_VP}). 
In part, this is due to ODE solvers being much more efficient---yielding lower error at similar computational costs~\citep{kloeden2012numerical}---and the multitude of available ODE solver schemes. 
% For instance, \citet{zhang2022fast} proposed using exponential integration to analytically integrate constant portions of the vector field corresponding to a diffusion model. 
% In contrast, we can forego such consideration as we directly parameterize the vector field, and 
% Intuitively, the Flow Matching objective naturally finds the best vector field without the need for reweighting as it directly regresses on the vector field itself. 
When compared to our ablation models, we find that models trained using Flow Matching with the OT path always result in the most efficient sampler, regardless of ODE solver, as demonstrated next. 

\textbf{Sample paths.} We first qualitatively visualize the difference in sampling paths between diffusion and OT. Figure \ref{fig:sample_paths} shows samples from ImageNet-64 models using identical random seeds, where we find that the OT path model starts generating images sooner than the diffusion path models, where noise dominates the image until the very last time point.
We additionally depict the probability density paths in 2D generation of a checkerboard pattern, Figure \ref{fig:2d_checkerboard} (left), noticing a similar trend.

% \begin{wraptable}{r}{0.28\linewidth}
% \vspace{-1em}
% \centering
% \ra{1.2}
% \setlength{\tabcolsep}{2.5pt}
% \resizebox{\linewidth}{!}{%
% \begin{tabular}{ @{} l C{4.5em} C{4.5em} C{4.5em} }\toprule
%   & \bf CIFAR-10 & {\bf ImageNet 32$\times$32} & \bf ImageNet 64$\times$64 \\ \midrule
% %
% \;\; SM \textsuperscript{w}/ Dif & 
% 232 & 157 & 436 \\
% %
% \;\; FM \textsuperscript{w}/ Dif &  
% 179 & 165 & 179 \\
% %
% \;\; FM \textsuperscript{w}/ OT & 
% 156 & 164 & 162 \\
% %
% \bottomrule
% \end{tabular}
% }
% \caption{Sample costs (NFE) at fixed error tolerance, averaged over 50k samples.\yl{XXX REVISION: update the cifar10/imagenet32 results also update here...}}
% \label{tab:nfe_for_sampling}
% \end{wraptable}
%
% \textbf{Computational costs.} {As can be read from Table \ref{tab:img_results}, the average number of function evaluations (NFE) required for sampling at a tolerance of \texttt{1e-5} are 2-3$\times$ higher for models that were trained with standard Diffusion losses compared to Flow Matching trained models.} Furthermore, the cost of sampling from a model can drastically change during training for score matching, whereas the sampling cost stays constant when training with Flow Matching (Figure \ref{fig:2d_NFE_vs_epochs} in Appendix). 
%Lastly, models trained using OT path are always easier to simulate than models trained using diffusion paths.

\textbf{Low-cost samples.} We next switch to fixed-step solvers and compare low ($\leq$100) NFE samples computed with the ImageNet-32 models from Table \ref{tab:img_results}. 
In Figure \ref{fig:fid_vs_nfe} (left), we compare the per-pixel MSE of low NFE solutions compared with 1000 NFE solutions (we use 256 random noise seeds), and notice that the FM with OT model produces the best numerical error, in terms of computational cost, requiring roughly only 60\% of the NFEs to reach the same error threshold as diffusion models. Secondly, Figure \ref{fig:fid_vs_nfe} (right) shows how FID changes as a result of the computational cost, where we find FM with OT is able to achieve decent FID even at very low NFE values, producing better trade-off between sample quality and cost compared to ablated models. Figure \ref{fig:2d_checkerboard} (right) shows low-cost sampling effects for the 2D checkerboard experiment. \vspace{-3pt}

\begin{figure}
    \centering
    \begin{tabular}{@{}c@{\hspace{5pt}}|@{\hspace{5pt}}c@{\hspace{2pt}}c@{\hspace{2pt}}c@{}}
    \includegraphics[width=0.23\textwidth]{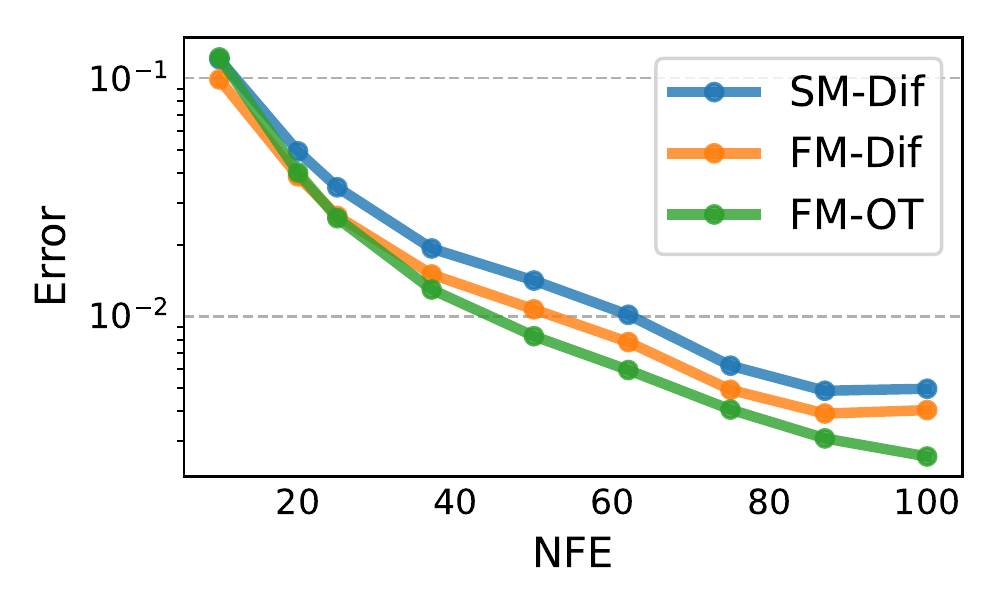} &
         \includegraphics[width=0.23\textwidth]{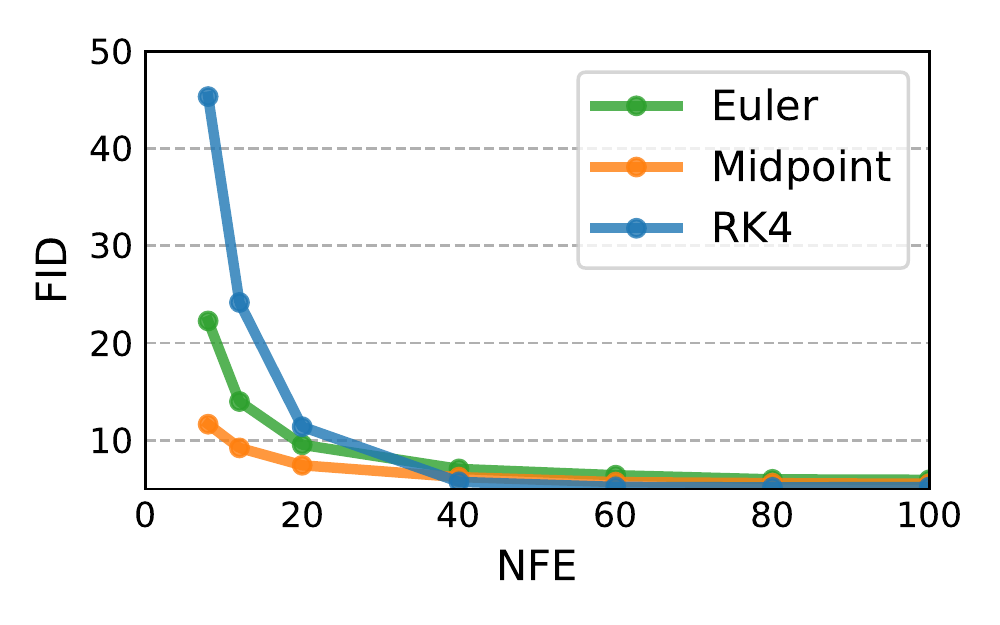} & \includegraphics[width=0.23\textwidth]{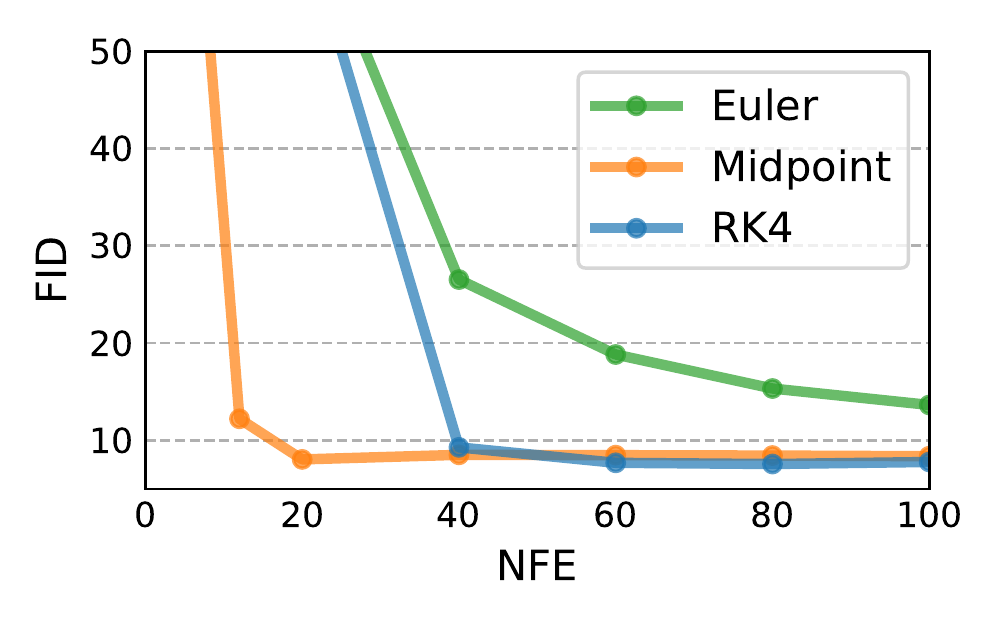} &
         \includegraphics[width=0.23\textwidth]{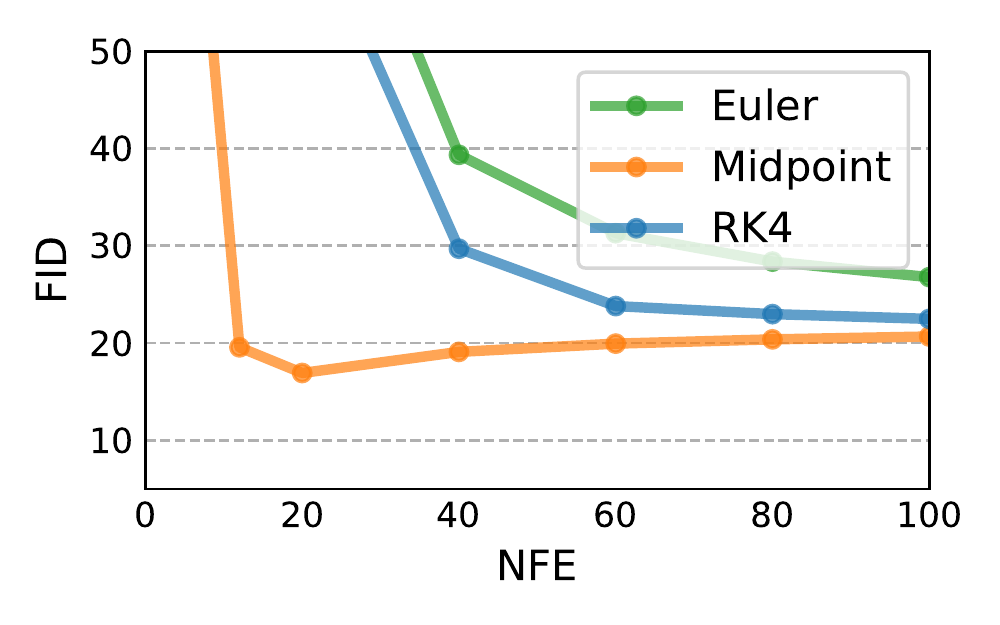}\vspace{-5pt}
         \\
         \quad \ \  {\scriptsize Error of ODE solution }
         & \ \ \
         {\scriptsize Flow matching \textsuperscript{w}/ OT}
         &  \ \ 
         {\scriptsize Flow matching \textsuperscript{w}/ Diffusion}
         & \ \ 
         {\scriptsize Score matching \textsuperscript{w}/ Diffusion}\vspace{-5pt}
    \end{tabular}
    \caption{Flow Matching, especially when using OT paths, allows us to use fewer evaluations for sampling while retaining similar numerical error (left) and sample quality (right). Results are shown for models trained on ImageNet 32$\times$32, and numerical errors are for the midpoint scheme.\vspace{-8pt}}
    \label{fig:fid_vs_nfe}
\end{figure}

\subsection{Conditional sampling from low-resolution images}

\begin{wraptable}[9]{r}{5.5cm}\vspace{-10pt}
\ra{1.05}
\setlength{\tabcolsep}{2.0pt}
\resizebox{5.5cm}{!}{
\begin{tabular}{@{} l @{} R{3.0em} R{3.3em} R{3.3em} R{3.3em} }\toprule 
Model &  {FID$\downarrow$} & {IS$\uparrow$} & {PSNR$\uparrow$} & {SSIM$\uparrow$} \\
\cmidrule(r){1-1} \cmidrule(lr){2-5}
Reference & 
1.9 & 240.8 & {--} & -- \\
\cmidrule(r){1-1} \cmidrule(lr){2-5}%
Regression & 
15.2 & 121.1 & \textbf{27.9} & \textbf{0.801} \\
SR3~{\tiny\citep{saharia2022image}} & 
5.2 & 180.1 & 26.4 & 0.762 \\
\cmidrule(r){1-1} \cmidrule(lr){2-5}%
FM \textsuperscript{w}/ OT  & 
\textbf{3.4} & \textbf{200.8} & 24.7 & 0.747  \\
\bottomrule
\end{tabular} }
\caption{Image super-resolution on the ImageNet validation set.}\label{tab:super_resolution}
\end{wraptable} 
{Lastly, we experimented with Flow Matching for conditional image generation. In particular, upsampling images from 64$\times$64 to 256$\times$256.
%, where we used $v_t(x|x')$, $x'$ is a 64$\times$64 image, and $v_t$ is a 256$\times$256$\times$3 VF. 
We follow the evaluation procedure in \citep{saharia2022image} and compute the FID of the upsampled validation images; baselines include reference (FID of original validation set), and regression. Results are in Table \ref{tab:super_resolution}. Upsampled image samples are shown in Figures \ref{fig:upsampled_1}, \ref{fig:upsampled_2} in the Appendix. FM-OT achieves similar PSNR and SSIM values to \citep{saharia2022image} while considerably improving on FID and IS, which as argued by \citep{saharia2022image} is a better indication of generation quality.

% THIS EMPTY LINE IS IMPORTANT FOR FORMATTING. DO NOT TOUCH.
}

% \section{Acks} Karsten Roth, Diane Bouchacourt for comments 

\section{Conclusion}\vspace{-3pt}
We introduced Flow Matching, a new simulation-free framework for training Continuous Normalizing Flow models, relying on conditional constructions to effortlessly scale to very high dimensions.
Furthermore, the FM framework provides an alternative view on diffusion models, and suggests forsaking the stochastic/diffusion construction in favor of more directly specifying the probability path, allowing us to, \eg, construct paths that allow faster sampling and/or improve generation.
We experimentally showed the ease of training and sampling when using the Flow Matching framework, and in the future, we expect FM to open the door to allowing a multitude of probability paths (\eg, non-isotropic Gaussians or more general kernels altogether). 
%In particular, FM can {potentially} be generalized to manifold data, where diffusion processes are less understood and no longer simulation-free (\eg, \citet{de2022riemannian}), but probability paths can be easily defined (\eg, \citet{ben2022matching}).

\newpage

\section{Social responsibility}
Along side its many positive applications, image generation can also be used for harmful proposes. Using content-controlled training sets and image validation/classification can help reduce these uses. Furthermore, the energy demand for training large deep learning models is increasing at a rapid pace~\citep{openaiandcompute,thompson2020computational}, focusing on methods that are able to train using less gradient updates / image throughput can lead to significant time and energy savings.

\bibliography{fm_arxiv_v2}
\bibliographystyle{iclr2023_conference_arxiv}

\newpage 
\appendix

\section{Theorem Proofs}
\label{A:proofs}

\marginalvf*
\begin{proof}
To verify this, we check that $p_t$ and $u_t$ satisfy the continuity equation (\eqref{e:continuity}):

\begin{align*}
    \frac{d}{dt}p_t(x) &= \int \Big( \frac{d}{dt} p_t(x\vert x_1)  \Big) q(x_1) dx_1 = -\int \mathrm{div}\Big(   u_t(x\vert x_1)p_t(x\vert x_1)  \Big) q(x_1) dx_1 \\
    &= -\mathrm{div}\Big(\int    u_t(x\vert x_1)p_t(x\vert x_1)   q(x_1) dx_1\Big) = -\mathrm{div} \Big( u_t(x) p_t(x) \Big ),
\end{align*}
where in the second equality we used the fact that $u_t(\cdot\vert x_1)$ generates $p_t(\cdot\vert x_1)$, in the last equality we used \eqref{e:u_t}. Furthermore, the first and third equalities are justified by assuming the integrands satisfy the regularity conditions of the Leibniz Rule (for exchanging integration and differentiation).
\end{proof}

\cfm*
\begin{proof}
To ensure existence of all integrals and to allow the changing of integration order (by Fubini's Theorem) in the following we assume that $q(x)$ and $p_t(x|x_1)$ are decreasing to zero at a sufficient speed as $\norm{x}\too \infty$, and that $u_t,v_t,\nabla_\theta v_t$ are bounded.

First, using the standard bilinearity of the $2$-norm we have that 
\begin{align*}
    \|v_t(x)-u_t(x)\|^2 &= \norm{v_t(x)}^2 - 2\ip{v_t(x),u_t(x)} + \norm{u_t(x)}^2 \\
    \|v_t(x)-u_t(x\vert x_1)\|^2  &= \norm{v_t(x)}^2 - 2\ip{v_t(x),u_t(x\vert x_1)} + \norm{u_t(x\vert x_1)}^2
\end{align*}
Next, remember that $u_t$ is independent of $\theta$ and note that 
\begin{align*}
    \E_{p_t(x)} \|v_t(x)\|^2 &= \int \|v_t(x)\|^2 p_t(x) dx = \int \|v_t(x)\|^2 p_t(x\vert x_1)q(x_1) dx_1 dx \\ &= \E_{q(x_1),p_t(x\vert x_1)}\|v_t(x)\|^2,
\end{align*}
where in the second equality we use \eqref{e:p_t}, and in the third equality we change the order of integration.  
Next,
\begin{align*}
    \E_{p_t(x)}\ip{v_t(x),u_t(x)} &= \int\ip{v_t(x),\frac{\int u_t(x\vert x_1) p_t(x\vert x_1)q(x_1)dx_1}{p_t(x)}}p_t(x)dx \\
    &=
    \int\ip{v_t(x),\int u_t(x\vert x_1) p_t(x\vert x_1)q(x_1)dx_1}dx \\
    &= 
    \int\ip{v_t(x), u_t(x\vert x_1) }p_t(x\vert x_1)q(x_1)dx_1dx \\
    &= \E_{q(x_1),p_t(x\vert x_1)} \ip{v_t(x), u_t(x\vert x_1) },
\end{align*}
where in the last equality we change again the order of integration. 
\end{proof}

\condvf*
\begin{proof}
For notational simplicity let $w_t(x)=u_t(x\vert x_1)$. Now consider \eqref{e:ode}:
\begin{equation*}
    \frac{d}{dt}\psi_t(x) = w_t(\psi_t(x)).
\end{equation*}
Since $\psi_t$ is invertible (as $\sigma_t(x_1)>0$) we let $x=\psi^{-1}(y)$ and get
\begin{equation}\label{ea:phi_phi-inv}
    \psi'_t(\psi^{-1}(y)) = w_t(y),
\end{equation}
where we used the apostrophe notation for the derivative to emphasis that $\psi'_t$ is evaluated at $\psi^{-1}(y)$. Now, inverting $\psi_t(x)$ provides
\begin{equation*}
    \psi_t^{-1}(y)=\frac{y-\mu_t(x_1)}{\sigma_t(x_1)}.
\end{equation*}
Differentiating $\psi_t$ with respect to $t$ gives
\begin{equation*}
    \psi_t'(x)=\sigma_t'(x_1)x + \mu_t'(x_1).
\end{equation*}
Plugging these last two equations in \eqref{ea:phi_phi-inv} we get
\begin{equation*}
    w_t(y) = \frac{\sigma'_t(x_1)}{\sigma_t(x_1)}\parr{y-\mu_t(x_1)} + \mu_t'(x_1)
\end{equation*}
as required. 
\end{proof}

\section{The continuity equation}
\label{A:continuity_equation}
One method of testing if a vector field $v_t$ generates a probability path $p_t$ is the continuity equation \citep{villani2009optimal}. It is a Partial Differential Equation (PDE) providing a necessary and sufficient condition to ensuring that a vector field $v_t$ generates $p_t$,
\begin{equation}\label{e:continuity}
    \frac{d}{dt}p_t(x) + \divv(p_t(x) v_t(x)) = 0,
\end{equation}
where the divergence operator, $\divv$, is defined with respect to the spatial variable $x=(x^1,\ldots,x^d)$, \ie, $\divv=\sum_{i=1}^d \frac{\partial }{\partial x^i}$. 

%\section{Additional figures and tables}

\section{Computing probabilities of the CNF model}
\label{A:cnf_prob}

We are given an arbitrary data point $x_1\in \Real^d$ and need to compute the model probability at that point, \ie, $p_1(x_1)$. Below we recap how this can be done covering the basic relevant ODEs, the scaling of the divergence computation, taking into account data transformations (\eg, centering of data), and Bits-Per-Dimension computation. 

\paragraph{ODE for computing $p_1(x_1)$}
The continuity equation with \eqref{e:ode} lead to the instantaneous change of variable  \citep{chen2018neural,ben2022matching}:
\begin{align*}
    \frac{d}{dt}\log p_t(\phi_t(x)) + \divv (v_t(\phi_t(x))=0.
\end{align*}
Integrating $t\in [0,1]$ gives:
\begin{align}\label{ea:log_diff}
    \log p_1(\phi_1(x)) - \log p_0(\phi_0(x)) = -\int_0^1 \divv(v_t(\phi_t(x))) dt
\end{align}
Therefore, the log probability can be computed together with the flow trajectory by solving the ODE:
\begin{align}\label{ea:system}
    \frac{d}{dt}\begin{bmatrix}
    \phi_t(x) \\
    f(t) 
    \end{bmatrix}=
    \begin{bmatrix}
    v_t(\phi_t(x)) \\ 
    -\divv (v_t(\phi_t(x)))
    \end{bmatrix}
\end{align}
Given initial conditions 
\begin{align}\label{ea:initial_conds}
    \begin{bmatrix}
    \phi_0(x) \\ 
    f(0)
    \end{bmatrix} = \begin{bmatrix}
    x_0 \\ c
    \end{bmatrix}.
\end{align}
the solution $\brac{\phi_t(x),f(t)}^T$ is uniquely defined (up to some mild conditions on the VF $v_t$). Denote $x_1 = \phi_1(x)$, and according to \eqref{ea:log_diff}, 
\begin{equation}\label{ea:f1}
    f(1) = c + \log p_1(x_1) - \log p_0(x_0).
\end{equation}

Now, we are given an arbitrary $x_1$ and want to compute $p_1(x_1)$. For this end, we will need to solve \eqref{ea:system} in reverse. That is,
\begin{align}\label{ea:system_final}
    \frac{d}{ds}\begin{bmatrix}
    \phi_{1-s}(x) \\
    f(1-s) 
    \end{bmatrix}=
    \begin{bmatrix}
    -v_{1-s}(\phi_{1-s}(x)) \\ 
    \divv (v_{1-s}(\phi_{1-s}(x)))
    \end{bmatrix}
\end{align}
and we solve this equation for $s\in [0,1]$ with the initial conditions at $s=0$:
\begin{align}\label{ea:initial_conds_final}
    \begin{bmatrix}
    \phi_1(x) \\ 
    f(1)
    \end{bmatrix} = \begin{bmatrix}
    x_1 \\ 0
    \end{bmatrix}.
\end{align}
From uniqueness of ODEs, the solution will be identical to the solution of \eqref{ea:system} with initial conditions in \eqref{ea:initial_conds} where $c=\log p_0(x_0)-\log p_1(x_1)$. This can be seen from \eqref{ea:f1} and setting $f(1)=0$. Therefore we get that 
\begin{align*}
    f(0) &= \log p_0(x_0)-\log p_1(x_1)
\end{align*}
and consequently 
\begin{align}\label{ea:p_1_final}
    \log p_1(x_1) = \log p_0(x_0) - f(0).
\end{align}
To summarize, to compute $p_1(x_1)$ we first solve the ODE in \eqref{ea:system_final} with initial conditions in \eqref{ea:initial_conds_final}, and the compute \eqref{ea:p_1_final}. 

\paragraph{Unbiased estimator to $p_1(x_1)$}
Solving \eqref{ea:system_final} requires computation of $\divv$ of VFs in $\Real^d$ which is costly. \cite{ffjord2018} suggest to replace the divergence by the (unbiased) Hutchinson trace estimator, 
\begin{align}\label{ea:system_final_hutchinson}
    \frac{d}{ds}\begin{bmatrix}
    \phi_{1-s}(x) \\
    \tilde{f}(1-s)
    \end{bmatrix}=
    \begin{bmatrix}
    -v_{1-s}(\phi_{1-s}(x)) \\ 
    z^T Dv_{1-s}(\phi_{1-s}(x)) z
    \end{bmatrix},
\end{align}
where $z\in \Real^d$ is a sample from a random variable such that $\E zz^T=I$. Solving the ODE in \eqref{ea:system_final_hutchinson} exactly (in practice, with a  small controlled error) with initial conditions in \eqref{ea:initial_conds_final} leads to
\begin{align*}
    \E_z \brac{\log p_0(x_0) - \tilde{f}(0)} &= \log p_0(x_0) - \E_z \brac{\tilde{f}(0)-\tilde{f}(1)} \\
    &= \log p_0(x_0) - \E_z\brac{\int_0^1 z^T Dv_{1-s}(\phi_{1-s}(x)) z \, ds}  \\
    &= \log p_0(x_0) - \int_0^1 \E_z \brac{z^T Dv_{1-s}(\phi_{1-s}(x)) z } ds  \\
    &= \log p_0(x_0) - \int_0^1 \divv(v_{1-s}(\phi_{1-s}(x))) ds 
    \\
    &= \log p_0(x_0) - \parr{f(0)-f(1)} \\
    &= \log p_0(x_0) - \parr{\log p_0(x_0) - \log p_1(x_1)} \\
    &= \log p_1(x_1),
\end{align*} 
where in the third equality we switched order of integration assuming the sufficient condition of Fubini's theorem hold, and in the previous to last equality we used \eqref{ea:f1}. 
Therefore the random variable 
\begin{align}\label{ea:ubiased_p1}
    \log p_0(x_0) - \tilde{f}(0)
\end{align}
is an unbiased estimator for $\log p_1(x_1)$. To summarize, for a scalable unbiased estimation of $p_1(x_1)$ we first solve the ODE in \eqref{ea:system_final_hutchinson} with initial conditions in \eqref{ea:initial_conds_final}, and then output  \eqref{ea:ubiased_p1}. 

\paragraph{Transformed data}
Often, before training our generative model we transform the data, \eg, we scale and/or translate the data. Such a transformation is denoted by $\varphi^{-1}:\Real^d\too\Real^d$ and our generative model becomes a composition 
\begin{align*}
    \psi(x) = \varphi \circ \phi(x)
\end{align*}
where $\phi:\Real^d \too \Real^d$ is the model we train. Given a prior probability $p_0$ we have that the push forward of this probability under $\psi$  (\eqref{e:push_forward} and \eqref{e:push_forward_explicit}) takes the form
\begin{align*}
    p_1(x) = \psi_* p_0 (x) &= p_0(\phi^{-1}(\varphi^{-1}(x)))\det\brac{D\phi^{-1}(\varphi^{-1}(x))}\det\brac{D\varphi^{-1}(x)}\\ 
    &= \parr{\phi_* p_0 (\varphi^{-1}(x))} \det\brac{D\varphi^{-1}(x)}
\end{align*}
and therefore
\begin{align*}
    \log p_1(x) = \log \phi_* p_0 (\varphi^{-1}(x)) +  \log \det\brac{D\varphi^{-1}(x)}.
\end{align*}
For images $d=H\times W\times 3$ and we consider a transform $\phi$ that maps each pixel value from $[-1,1]$ to $[0,256]$. Therefore, 
\begin{equation*}
    \varphi(y) = 2^7(y+1)
\end{equation*}
and
\begin{equation*}
    \varphi^{-1}(x) = 2^{-7}x  - 1
\end{equation*}
For this case we have
\begin{equation}\label{ea:lop_p1}
\log p_1(x) = \log \phi_* p_0 (\varphi^{-1}(x)) - 7d\log 2.
\end{equation}

\paragraph{Bits-Per-Dimension (BPD) computation}
BPD is defined by
\begin{equation}
    \mathrm{BPD} = \E_{x_1}\brac{ -\frac{\log_2 p_1(x_1)}{d}} = \E_{x_1}\brac{ -\frac{\log p_1(x_1)}{d\log 2}}
\end{equation}
Following \eqref{ea:lop_p1} we get
\begin{align*}
    \mathrm{BPD} = -\frac{\log \phi_* p_0 (\varphi^{-1}(x))}{d\log 2 } + 7
\end{align*}
and $\log \phi_* p_0 (\varphi^{-1}(x))$ is approximated using the unbiased estimator in \eqref{ea:ubiased_p1} over the transformed data $\varphi^{-1}(x_1)$. Averaging the unbiased estimator on a large test test $x_1$ provides a good approximation to the test set BPD.

\section{Diffusion conditional vector fields}
\label{A:diff_cond_VP}

We derive the vector field governing the Probability Flow ODE (equation 13 in \cite{song2020score}) for the VE and VP diffusion paths (\eqref{e:VP_diffusion_path}) and note that it coincides with the conditional vector fields we derive using Theorem \ref{thm:cond_vf}, namely the vector fields defined in equations \ref{e:VE} and \ref{e:ut_dif_with_our_method}.

We start with a short primer on how to find a conditional vector field for the probability path described by the Fokker-Planck equation, then instantiate it for the VE and VP probability paths. 

Since in the diffusion literature the diffusion process runs from data at time $t=0$ to noise at time $t=1$, we will need the following lemma to translate the diffusion VFs to our convention of $t=0$ corresponds to noise and $t=1$ corresponds to data:
\begin{lemma}\label{lem:reverse}
Consider a flow defined by a vector field $u_t(x)$ generating probability density path $p_t(x)$. Then, the vector field $\tilde{u}_t(x) = -u_{1-t}(x)$ generates the path $\tilde{p}_t(x) = p_{1-t}(x)$ when initiated from $\tilde{p}_0(x) = p_1(x)$.
\end{lemma}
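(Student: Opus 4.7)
The plan is to verify the claim via the continuity equation, which the paper has already identified as a necessary and sufficient condition for a vector field to generate a given probability path (see Appendix~\ref{A:continuity_equation}). Since $u_t$ generates $p_t$, we have by hypothesis
\begin{equation*}
    \frac{d}{ds} p_s(x) + \divv(p_s(x)\, u_s(x)) = 0
\end{equation*}
for all $s\in[0,1]$. My strategy is simply to substitute $s = 1-t$ into this identity and rearrange, using the chain rule for the time derivative.

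Concretely, I would first compute
\begin{equation*}
    \frac{d}{dt}\tilde{p}_t(x) = \frac{d}{dt} p_{1-t}(x) = -\left.\frac{d}{ds} p_s(x)\right|_{s=1-t} = \divv\bigl(p_{1-t}(x)\, u_{1-t}(x)\bigr).
\end{equation*}
Because the divergence is linear in its argument and acts only on the spatial variable, this can be rewritten as
\begin{equation*}
    \frac{d}{dt}\tilde{p}_t(x) = -\divv\bigl(p_{1-t}(x)\cdot(-u_{1-t}(x))\bigr) = -\divv\bigl(\tilde{p}_t(x)\, \tilde{u}_t(x)\bigr),
\end{equation*}
which is exactly the continuity equation for the pair $(\tilde{p}_t,\tilde{u}_t)$. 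Together with the initial condition $\tilde{p}_0(x)=p_1(x)$, the sufficiency direction recalled in Appendix~\ref{A:continuity_equation} then yields the conclusion.

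If a direct push-forward verification is preferred over invoking the continuity equation, I would instead define $\tilde{\phi}_t = \phi_{1-t}\circ \phi_1^{-1}$, where $\phi_t$ is the flow of $u_t$. A short computation shows $\tilde{\phi}_0 = \mathrm{id}$ and $\tfrac{d}{dt}\tilde{\phi}_t(x) = -u_{1-t}(\tilde{\phi}_t(x)) = \tilde{u}_t(\tilde{\phi}_t(x))$, so $\tilde{\phi}_t$ is the flow of $\tilde{u}_t$; then $[\tilde{\phi}_t]_*\tilde{p}_0 = [\phi_{1-t}]_*[\phi_1^{-1}]_* p_1 = [\phi_{1-t}]_* p_0 = p_{1-t} = \tilde{p}_t$. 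There is no real obstacle here; the only thing to be careful about is the sign introduced by the chain rule when differentiating in the reversed time variable, and (in the second approach) keeping the order of composition straight so that the initial condition at $t=0$ matches $\tilde{p}_0 = p_1$.
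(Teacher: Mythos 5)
Your first argument is exactly the paper's proof: apply the continuity equation to $p_t,u_t$, substitute $s=1-t$, pick up the sign from the chain rule, and absorb it into $\tilde u_t=-u_{1-t}$. The alternative push-forward check with $\tilde\phi_t=\phi_{1-t}\circ\phi_1^{-1}$ is a nice, equally valid supplement, but the core reasoning matches the paper's.
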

\begin{proof}
We use the continuity equation (\eqref{e:continuity}):
\begin{align*}
    \frac{d}{dt}\tilde{p}_t(x)=\frac{d}{dt}p_{1-t}(x) &= -p'_{1-t}(x) \\
    &= \divv(p_{1-t}(x)u_{1-t}(x)) \\
    &= -\divv(\tilde{p}_{t}(x)(-u_{1-t}(x))) 
\end{align*}
and therefore $\tilde{u}_t(x)=-u_{1-t}(x)$ generates $\tilde{p}_t(x)$. 
\end{proof}

\paragraph{Conditional VFs for Fokker-Planck probability paths} Consider a Stochastic Differential Equation (SDE) of the standard form
\begin{equation}\label{e:sde}
    dy = f_t dt + g_t dw
\end{equation}
with time parameter $t$, drift $f_t$, diffusion coefficient $g_t$, and $dw$ is the Wiener process. The solution $y_t$ to the SDE is a stochastic process, \ie, a continuous time-dependent random variable, the probability density of which, $p_t(y_t)$, is characterized by the Fokker-Planck equation:
\begin{equation}
    \frac{dp_t}{dt} = -\mathrm{div}(f_t p_t) + \frac{g_t^2}{2}\Delta p_t
\end{equation}
where $\Delta$ represents the Laplace operator (in $y$), namely $\mathrm{div}\nabla$, where $\nabla$ is the gradient operator (also in $y$). Rewriting this equation in the form of the continuity equation can be done as follows \citep{maoutsa2020}:
\begin{align*}
    \frac{dp_t}{dt} 
    = -\mathrm{div}\Big( f_t p_t - \frac{g^2}{2}\frac{\nabla p_t}{p_t}p_t \Big) 
    = -\mathrm{div}\Big(\big( f_t - \frac{g_t^2}{2}\nabla \log p_t\big)p_t \Big) = -\mathrm{div}\Big( w_t p_t \Big)
\end{align*}
where the vector field
\begin{equation}\label{e:w}
    w_t = f_t - \frac{g_t^2}{2}\nabla \log p_t
\end{equation}
satisfies the continuity equation with the probability path $p_t$, and therefore generates $p_t$. 

\paragraph{Variance Exploding (VE) path}
The SDE for the VE path is 
\begin{equation*}
 dy = \sqrt{\frac{d}{dt}\sigma_t^2}dw,
\end{equation*}
where $\sigma_0=0$ and increasing to infinity as $t\too 1$.
The SDE is moving from data, $y_0$, at $t=0$ to noise, $y_1$, at $t=1$ with the probability path
\begin{equation*}
    p_t(y|y_0) = \gN(y | y_0, \sigma^2_t I).
\end{equation*}
The conditional VF according to \eqref{e:w} is:
\begin{equation*}
    w_t(y|y_0) = \frac{ \sigma_t'}{\sigma_t}(y-y_0)
\end{equation*}
Using Lemma \ref{lem:reverse} we get that the probability path 
\begin{equation*}
    \tilde{p}_t(y|y_0) = \gN(y|y_0, \sigma_{1-t}^2 I)
\end{equation*}
is generated by 
\begin{align*}
    \tilde{w}_t(y|y_0) =-\frac{ \sigma_{1-t}'}{\sigma_{1-t}}(y-y_0),
\end{align*}
which coincides with \eqref{e:u_t_VE}.

\paragraph{Variance Preserving (VP) path}
% Given a single data point $x_1$, the convention in diffusion models is that the path starts at time $s=0$ with a concentrated distribution around $x_1$, \ie, $y_0 \sim q_0(y_0\vert x_1) = \gN(y_0 \vert x_1, \sigma^2 I)$ and the coefficients $f_s,g_s$ in the SDE (\eqref{e:sde}) are chosen so that at time $s=1$, $y_1$ is a pure standard normal noise, \ie, $y_1 \sim \gN(y_1\vert 0,I)$. 
% %
The SDE for the VP path is 
\begin{equation*}
    dy = -\frac{T'(t)}{2}y + \sqrt{T'(t)}dw,
\end{equation*}
where $T(t)=\int_0^t \beta(s)ds$, $t\in [0,1]$. The SDE coefficients are therefore
\begin{align*}
    f_s(y) = -\frac{T'(s)}{2}y, \quad g_s = \sqrt{T'(s)} 
\end{align*}
and
\begin{align*}
    &p_t(y \vert  y_0) = \gN(y \vert e^{-\frac{1}{2}T(t)}y_0, (1-e^{-T(t)})I).
\end{align*}
Plugging these choices in \eqref{e:w} we get the conditional VF
\begin{equation}
    w_t(y\vert y_0) =\frac{T'(t)}{2}\parr{\frac{y-e^{-\frac{1}{2}T(t)}y_0}{1-e^{-T(t)}} -y }
\end{equation}
Using Lemma \ref{lem:reverse} to reverse the time we get the conditional VF for the reverse probability path:
\begin{align*}
    \tilde{w}_t(y|y_0) &= -\frac{T'(1-t)}{2}\parr{\frac{y-e^{-\frac{1}{2}T(1-t)}y_0}{1-e^{-T(1-t)}} -y } \\
    &= -\frac{T'(1-t)}{2}\brac{\frac{e^{-T(1-t)}y-e^{-\frac{1}{2}T(1-t)}y_0}{1-e^{-T(1-t)}}},
\end{align*}
which coincides with \eqref{e:ut_dif_with_our_method}.

% The SDE coefficients for the VP path is defined in \cite{song2020score} as follows: Let $\tilde{T}(s)=\int_0^s \beta(r)dr$, $s\in [0,1]$. The SDE coefficients for the VP diffusion path are
% \begin{align*}
%     f_s(y) = -\frac{\tilde{T}'(s)}{2}y, \quad g_s = \sqrt{\tilde{T}'(s)} 
% \end{align*}
% and
% \begin{align*}
%     &q_s(y \vert  x_1) = \gN(y \vert e^{-\frac{1}{2}\tilde{T}(s)}x_1, (1-e^{-\tilde{T}(s)})I).
% \end{align*}
% Plugging these choices in \eqref{e:w} we get the generating VF
% \begin{equation}
%     u_s(y\vert x_1) =\frac{\tilde{T}'(s)}{2}\parr{\frac{y-e^{-\frac{1}{2}\tilde{T}(s)}x_1}{1-e^{-\tilde{T}(s)}} -y }
% \end{equation}
% Next, as can be verified from the continuity equation (\eqref{e:continuity}), the vector field $-u_{1-t}$ generates the reverse probability path $q_{1-t}(\cdot\vert x_1)$, which starts with pure noise at $t=0$ and concentrates probability at $x_1$ at time $t=1$. Therefore,
% \begin{equation}
%     p_t(x\vert x_1) = \gN(x\vert e^{-\frac{1}{2}\tilde{T}(1-t)}x_1, (1-e^{-\tilde{T}(1-t)})I)
% \end{equation}
% and the matching generating Diffusion Conditional VF:
% \begin{equation}
%     u_t(x\vert x_1) =-\frac{\tilde{T}'(1-t)}{2}\parr{\frac{x-e^{-\frac{1}{2}\tilde{T}(1-t)}x_1}{1-e^{-\tilde{T}(1-t)}} -x }
% \end{equation}
% In the main paper, to simplify the notations we defined $$T(t) = \tilde{T}(1-t)=\int_0^{1-t} \beta(s)ds.$$  
% Therefore we get the vector field 
% \begin{equation}
%     u_t(x\vert x_1) =\frac{T'(t)}{2}\parr{\frac{x-e^{-\frac{1}{2}T(t)}x_1}{1-e^{-T(t)}} -x },
% \end{equation}
% which coincides with \eqref{e:ut_dif_with_our_method}.

\begin{figure}
\centering
\begin{subfigure}[t]{0.0989\linewidth}
\includegraphics[width=\linewidth]{figures/plots/2d_vf_reference.png}
\end{subfigure}
%\hfill
\begin{subfigure}[t]{0.43\linewidth}
\centering
    \begin{subfigure}[t]{0.23\linewidth}
        \centering
        \includegraphics[width=\linewidth]{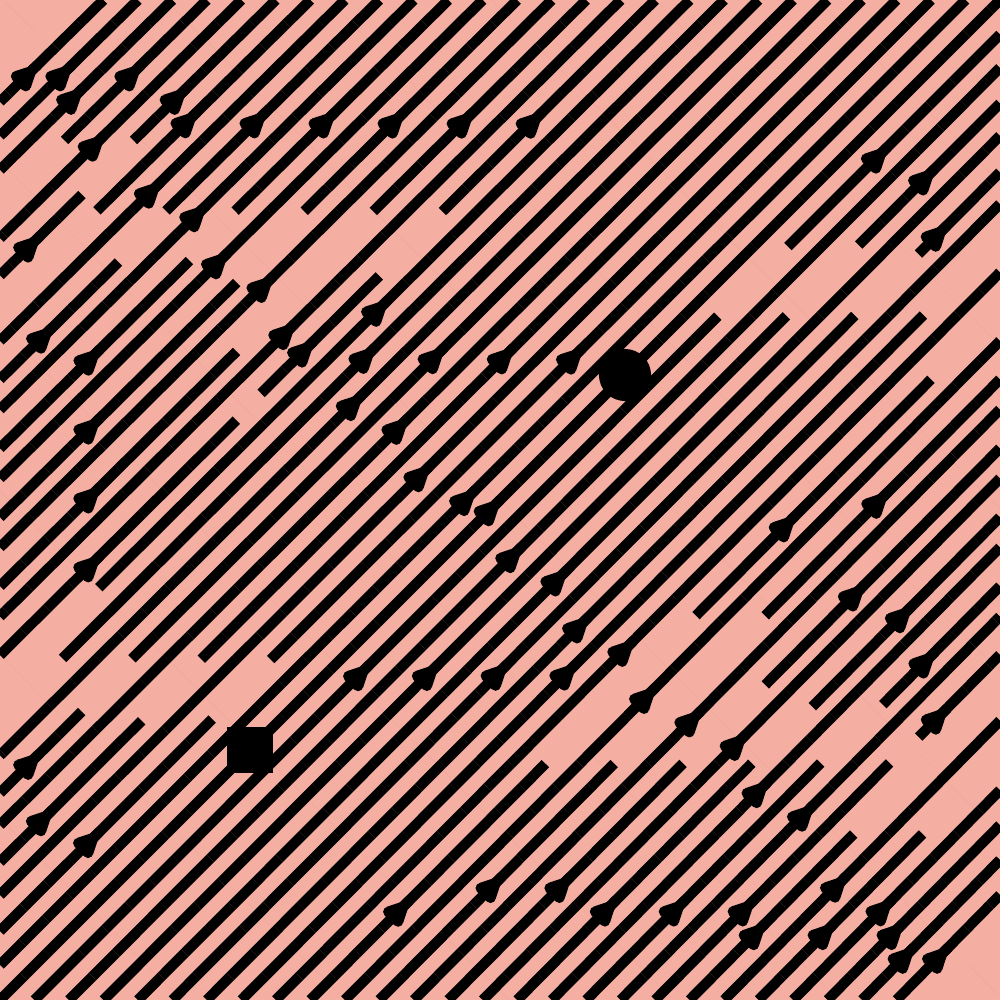}
        \vspace{-1.5em}
        \caption*{\scalebox{0.8}{$t=0.0$}}
    \end{subfigure}
    \begin{subfigure}[t]{0.23\linewidth}
        \centering
        \includegraphics[width=\linewidth]{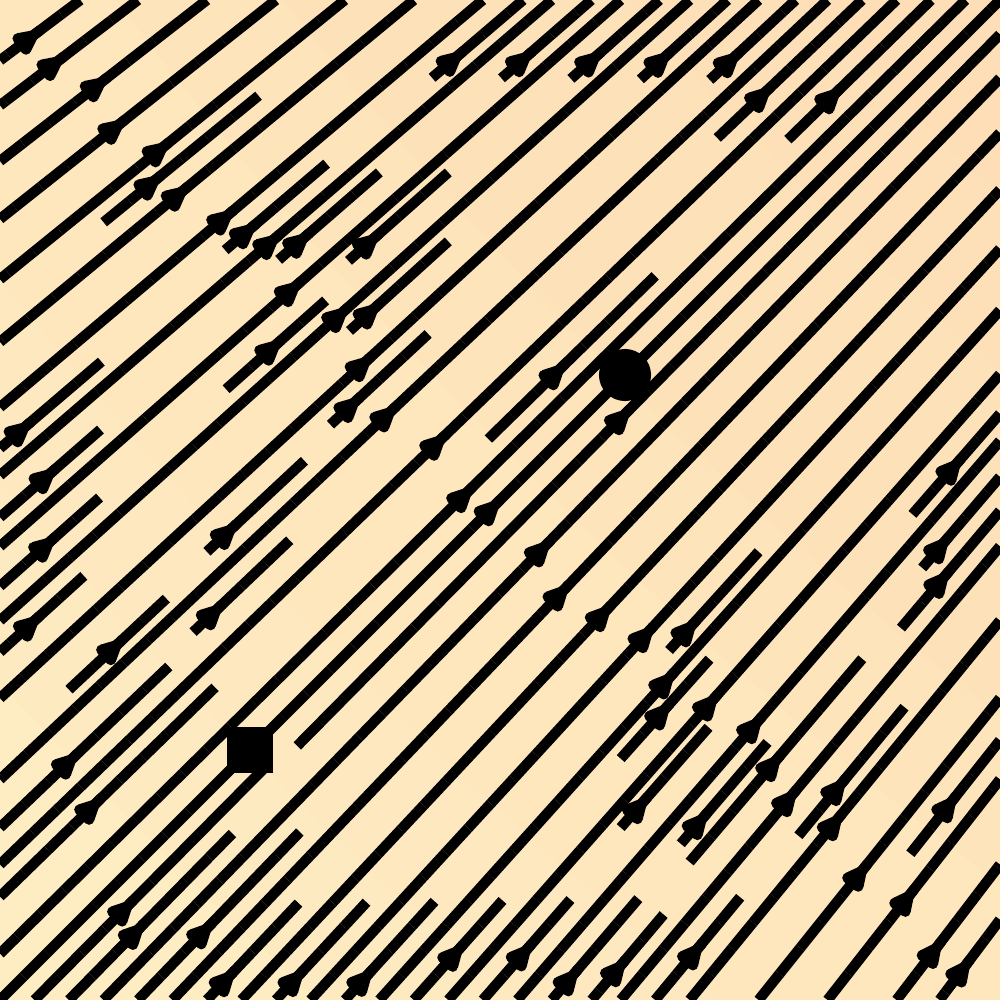}
        \vspace{-1.5em}
        \caption*{\scalebox{0.8}{$t=\nicefrac{1}{3}$}}
    \end{subfigure}
    \begin{subfigure}[t]{0.23\linewidth}
        \centering
        \includegraphics[width=\linewidth]{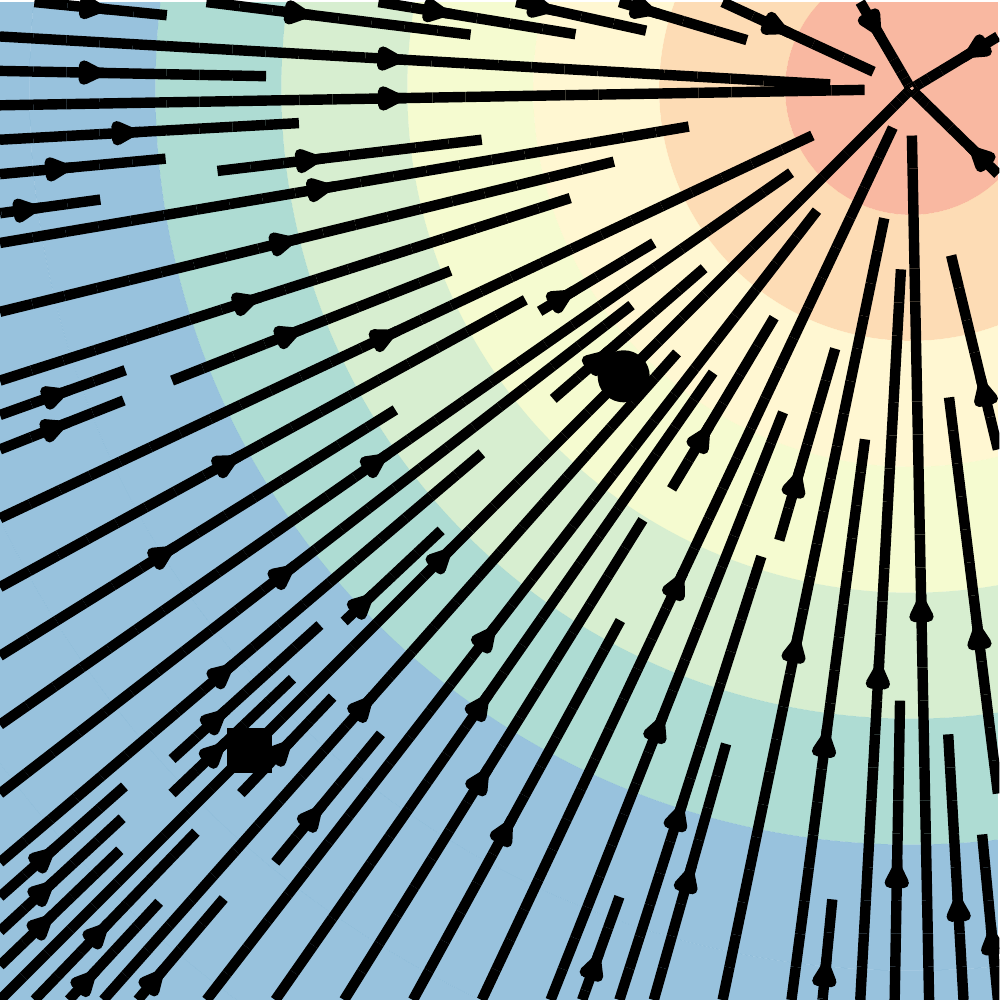}
        \vspace{-1.5em}
        \caption*{\scalebox{0.8}{$t=\nicefrac{2}{3}$}}
    \end{subfigure}
    \begin{subfigure}[t]{0.23\linewidth}
        \centering
        \includegraphics[width=\linewidth]{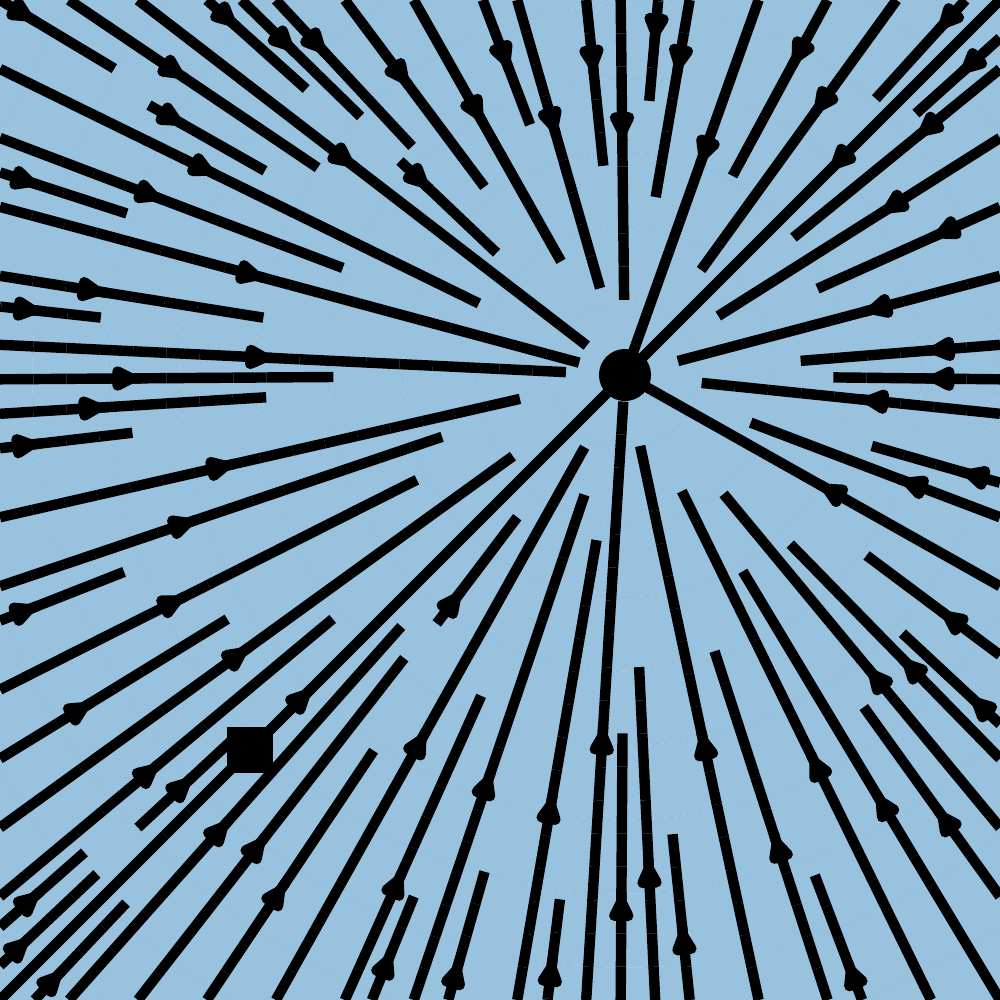}
        \vspace{-1.5em}
        \caption*{\scalebox{0.8}{$t=1.0$}}
    \end{subfigure}
\caption*{Diffusion path -- conditional vector field}
\end{subfigure}
\caption{VP Diffusion path's conditional vector field. Compare to Figure \ref{fig:2d_fv}. }
\label{fig:2d_diff_fv}
\end{figure}

\begin{figure}
\centering
\begin{tabular}{c}
    \includegraphics[width=0.68\textwidth]{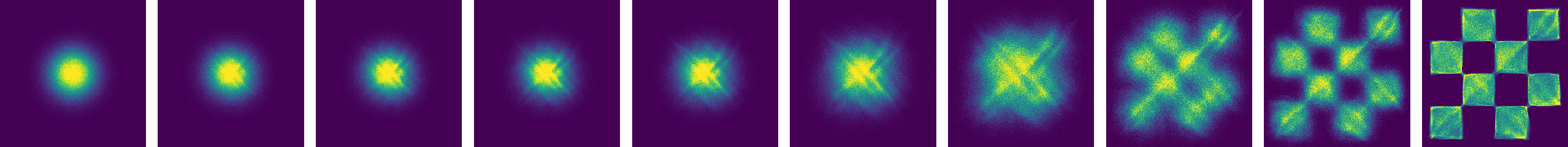}   \\
     \scriptsize{ScoreFlow}\\ 
     \includegraphics[width=0.68\textwidth]{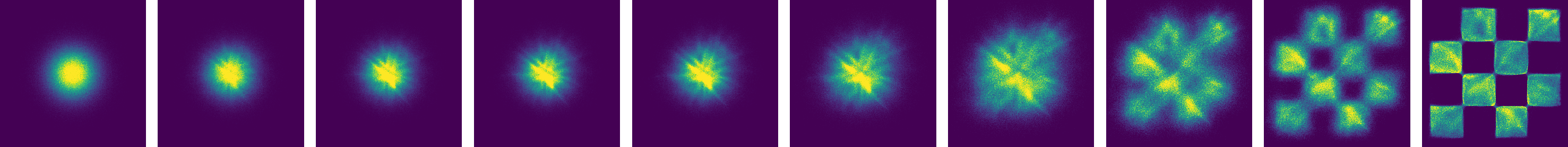}   \\
     \scriptsize{DDPM}
\end{tabular}
    \caption{Trajectories of CNFs trained with ScoreFlow~\citep{song2021maximum} and DDPM~\citep{ho2020denoising} losses on 2D checkerboard data, using the same learning rate and other hyperparameters as Figure \ref{fig:2d_checkerboard}. }
    \label{fig:2d_checkerboard_ScoreFlow}
\end{figure}

\section{Implementation details}
\label{A:implementation-details}

For the 2D example we used an MLP with 5-layers of 512 neurons each, while for images we used the UNet architecture from \citet{dhariwal2021diffusion}. 
For images, we center crop images and resize to the appropriate dimension, whereas for the 32$\times$32 and 64$\times$64 resolutions we use the same pre-processing as \citep{chrabaszcz2017downsampled}. 
The three methods (FM-OT, FM-Diffusion, and SM-Diffusion) are always trained on the same architecture, same hyper-parameters, and for the same number of epochs.

\subsection{Diffusion baselines}\label{a:dif_baselines}
{
\paragraph{Losses.}
We consider three options as diffusion baselines that correspond to the most popular diffusion loss parametrizations \citep{song2019score,song2021maximum,ho2020denoising,kingma2021vdm}. We will assume general Gaussian path form of \eqref{e:pt_gau}, \ie, 
\begin{equation*}
    p_t(x|x_1)=\gN(x|\mu_t(x_1),\sigma_t^2(x_1) I).
\end{equation*}

Score Matching loss is 
\begin{align}
    \gL_{\SM}(\theta) &= \E_{t,q(x_1),p_t(x|x_1)} \lambda(t) \norm{s_t(x) - \nabla \log p_t(x|x_1)}^2 \\
    &= \E_{t,q(x_1),p_t(x|x_1)} \lambda(t) \norm{s_t(x) - \frac{x-\mu_t(x_1)}{\sigma_t^2(x_1)}}^2.
\end{align}
Taking $\lambda(t) = \sigma_t^{2}(x_1)$ corresponds to the original Score Matching (SM) loss from \cite{song2019score}, while considering $\lambda(t)=\beta(1-t)$ ($\beta$ is defined below) corresponds to the Score Flow (SF) loss motivated by an NLL upper bound \citep{song2021maximum}; $s_t$ is the learnable score function. DDPM (Noise Matching) loss from \cite{ho2020denoising} (equation 14) is
\begin{align}
    \gL_{\NM}(\theta)  &= \E_{t,q(x_1),p_t(x|x_1)}  \norm{\epsilon_t(x) - \frac{x-\mu_t(x_1)}{\sigma_t(x_1)} }^2 \\
    &= 
    \E_{t,q(x_1),p_0(x_0)} \Big \|  {\epsilon_t(\sigma_t(x_1) x_0 + \mu_t(x_1)) - x_0 }\Big \|^2
\end{align}
where $p_0(x)=\gN(x|0,I)$ is the standard Gaussian, and $\epsilon_t$ is the learnable noise function.

\paragraph{Diffusion path.} For the diffusion path we use the standard VP diffusion (\eqref{e:ut_dif_with_our_method}), namely,
\begin{equation*}
    \mu_t(x_1) = \alpha_{1-t}x_1, \quad \sigma_t(x_1) = \sqrt{1-\alpha_{1-t}^2},\quad  \text{where } \alpha_t = e^{-\frac{1}{2}T(t)},\quad  T(t)=\int_0^{t} \beta(s)ds,
\end{equation*}
with, as suggested in \cite{song2020score}, $\beta(s) = \beta_{\min} + s(\beta_{\max}-\beta_{\min})$ and consequently 
$$T(s) = \int_0^s \beta(r)dr = s\beta_{\min} + \frac{1}{2}s^2(\beta_{\max} - \beta_{\min}),$$
where $\beta_{\min}=0.1$, $\beta_{\max}=20$ and time is sampled in $[0,1-\eps]$, $\eps=10^{-5}$ for training and likelihood and $\eps=10^{-5}$ for sampling.

\paragraph{Sampling.} Score matching samples are produced by solving the ODE (\eqref{e:ode}) with the vector field
\begin{equation}\label{ea:score_flow}
    u_t(x) = -\frac{T'(1-t)}{2}\brac{s_t(x) - x}.
\end{equation} 
DDPM samples are computed with \eqref{ea:score_flow} after setting $s_t(x) = \epsilon_t(x)/\sigma_t$, where $\sigma_t = \sqrt{1-\alpha_{1-t}^2}$. 
}

\subsection{Training \& evaluation details}

\begin{table}
\centering
\resizebox{\linewidth}{!}{%
\begin{tabular}{l c c c c}
\toprule
 & CIFAR10 & ImageNet-32 & ImageNet-64 & ImageNet-128 \\
\midrule
Channels & 256 & 256 & 192 & 256  \\
Depth & 2 & 3 & 3 & 3  \\
Channels multiple & 1,2,2,2 & 1,2,2,2 & 1,2,3,4 & 1,1,2,3,4 \\
Heads & 4 & 4 & 4 & 4 \\
Heads Channels & 64 & 64 & 64 & 64 \\
Attention resolution & 16 & 16,8 & 32,16,8 & 32,16,8 \\
Dropout & 0.0 & 0.0 & 0.0 & 0.0 \\
Effective Batch size & 256 & 1024 & 2048 & 1536 \\
GPUs & 2 & 4 & 16 & 32  \\
Epochs & 1000 & 200 & 250 & 571 \\
Iterations & 391k & 250k  & 157k & {500k}  \\
Learning Rate & 5e-4 & 1e-4 & 1e-4 & 1e-4 \\
Learning Rate Scheduler & Polynomial Decay & Polynomial Decay & Constant & Polynomial Decay \\
Warmup Steps & 45k & 20k & - & 20k \\
\bottomrule
\end{tabular}
}
\caption{Hyper-parameters used for training each model}
\label{tab:hyper-params}
\end{table}

We report the hyper-parameters used in Table \ref{tab:hyper-params}.  We use full 32 bit-precision for training CIFAR10 and ImageNet-32 and 16-bit mixed precision for training ImageNet-64/128/256.  All models are trained using the Adam optimizer with the following parameters: $\beta_1 = 0.9$, $\beta_2=0.999$, weight decay = 0.0, and $\epsilon = 1e{-8}$. All methods we trained (\ie, FM-OT, FM-Diffusion, SM-Diffusion) using  identical architectures, with the same parameters for the the same number of Epochs (see Table \ref{tab:hyper-params} for details).  We use either a constant learning rate schedule or a polynomial decay schedule (see Table \ref{tab:hyper-params}).  The polynomial decay learning rate schedule includes a warm-up phase for a specified number of training steps.  In the warm-up phase, the learning rate is linearly increased from $1e{-8}$ to the peak learning rate (specified in Table \ref{tab:hyper-params}).  Once the peak learning rate is achieved, it linearly decays the learning rate down to $1e{-8}$ until the final training step.

When reporting negative log-likelihood, we dequantize using the standard uniform dequantization. We report an importance-weighted estimate using
\begin{equation}
    \log \frac{1}{K} \sum_{k=1}^K p_t (x + u_k), \text{ where } u_k \sim \gU(0, 1),
\end{equation}
with $x$ is in \{0, \dots, 255\} and solved at $t=1$ with an adaptive step size solver \texttt{dopri5} with \texttt{atol=rtol=1e-5} using the \texttt{torchdiffeq}~\citep{torchdiffeq} library. Estimated values for different values of $K$ are in Table \ref{tab:nll_k_results}.

When computing FID/Inception scores for CIFAR10, ImageNet-32/64 we use the TensorFlow \lstinline{GAN} library \footnote{\url{https://github.com/tensorflow/gan}}.  To remain comparable to \cite{dhariwal2021diffusion} for ImageNet-128 we use the evaluation script they include in their publicly available code repository \footnote{\url{https://github.com/openai/guided-diffusion}}.

\section{Additional tables and figures}

\begin{table}\centering
\ra{1.2}
\setlength{\tabcolsep}{2.5pt}
\resizebox{\linewidth}{!}{%
% \small
\begin{tabular}{@{} l R{3.5em} R{3.5em} R{3.5em} r R{3.5em} R{3.5em} R{3.5em} r R{3.5em} R{3.5em} R{3.5em} r @{}}\toprule
  & \multicolumn{3}{c}{\bf CIFAR-10} &
  & \multicolumn{3}{c}{\bf ImageNet 32$\times$32} &
  & \multicolumn{3}{c}{\bf ImageNet 64$\times$64} \\
\cmidrule(lr){2-4} \cmidrule(lr){5-8} \cmidrule(lr){9-12}
Model & {$K$=1} & {$K$=20} & {$K$=50}
& & {$K$=1} & {$K$=5} & {$K$=15}
& & {$K$=1} & {$K$=5} & {$K$=10} \\
\cmidrule(r){1-1} \cmidrule(lr){2-4} \cmidrule(lr){5-8} \cmidrule(lr){9-12}
\textit{\small Ablation}\\
\;\; DDPM &
3.24 & 3.14 & 3.12 & & 
3.62 & 3.57 & 3.54 & & 
3.36 & 3.33 & 3.32 \\
\;\; Score Matching  & 
3.28 & 3.18 & 3.16 & & 
3.65 & 3.59 & 3.57 & & 
3.43 & 3.41 & 3.40 \\
\;\; ScoreFlow & 
3.21 & 3.11  & 3.09 & & 
3.63 & 3.57 & 3.55
 & & 
3.39 & 3.37 & 3.36 \\
\cmidrule(r){1-1} \cmidrule(lr){2-4} \cmidrule(lr){5-8} \cmidrule(lr){9-12}
\textit{\small Ours}\\
\;\; FM \textsuperscript{w}/ Diffusion &  
3.23 & 3.13 & 3.10 & & 
3.64 & 3.58 & 3.56 & & 
3.37 & 3.34 & 3.33 \\
\;\; FM \textsuperscript{w}/ OT & 
3.11
 & 3.01 & 2.99 & & 
3.62 & 3.56 & 3.53 & & 
3.35 & 3.33 & 3.31 \\

\bottomrule
\end{tabular}
}
\caption{Negative log-likelihood (in bits per dimension) on the test set with different values of $K$ using uniform dequantization. }
\label{tab:nll_k_results}
\end{table}

% \begin{wrapfigure}[10]{r}{0.32\textwidth}\vspace{-5pt}
\begin{figure}
  \begin{center} 
      \includegraphics[width=0.5\textwidth]{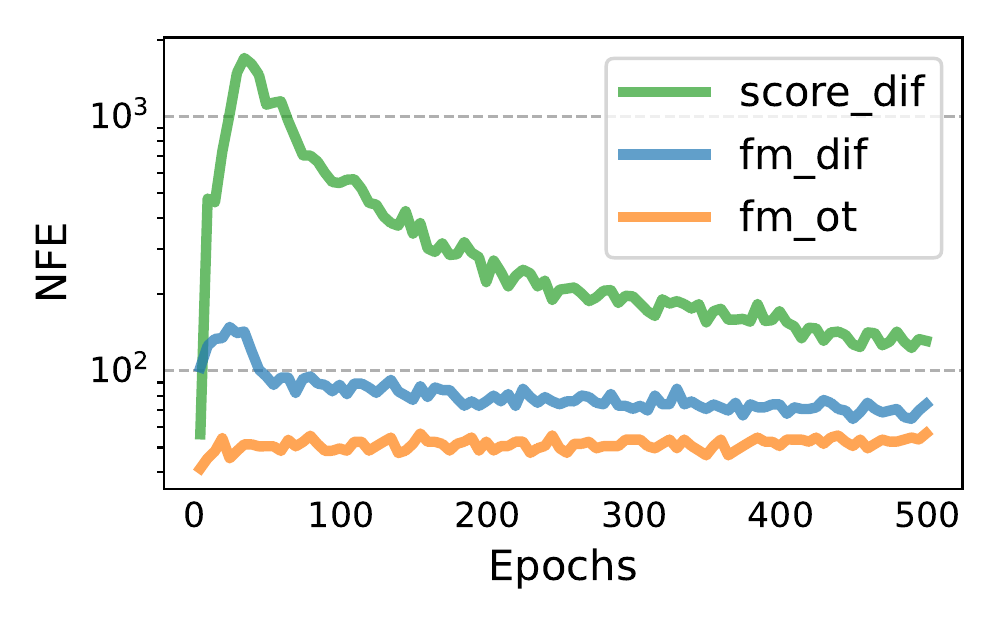} 
  \end{center}
  \caption{Function evaluations for sampling during training, for models trained on CIFAR-10 using \texttt{dopri5} solver with tolerance $1e^{-5}$.}
  \label{fig:2d_NFE_vs_epochs}
\end{figure}
% \end{wrapfigure}
%

\begin{figure}
    \centering
    \includegraphics[width=\textwidth]{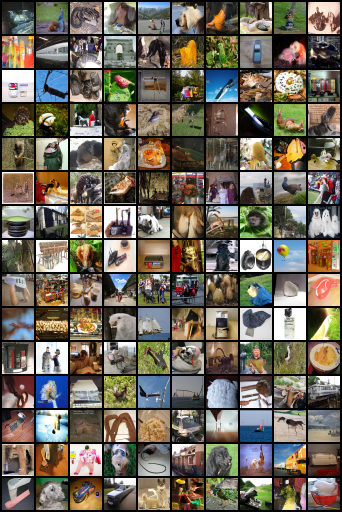}
    \caption{Non-curated unconditional ImageNet-32 generated images of a CNF trained with FM-OT. }
    \label{fig:imagenet32_samples}
\end{figure}

\begin{figure}
    \centering
    \includegraphics[width=\textwidth]{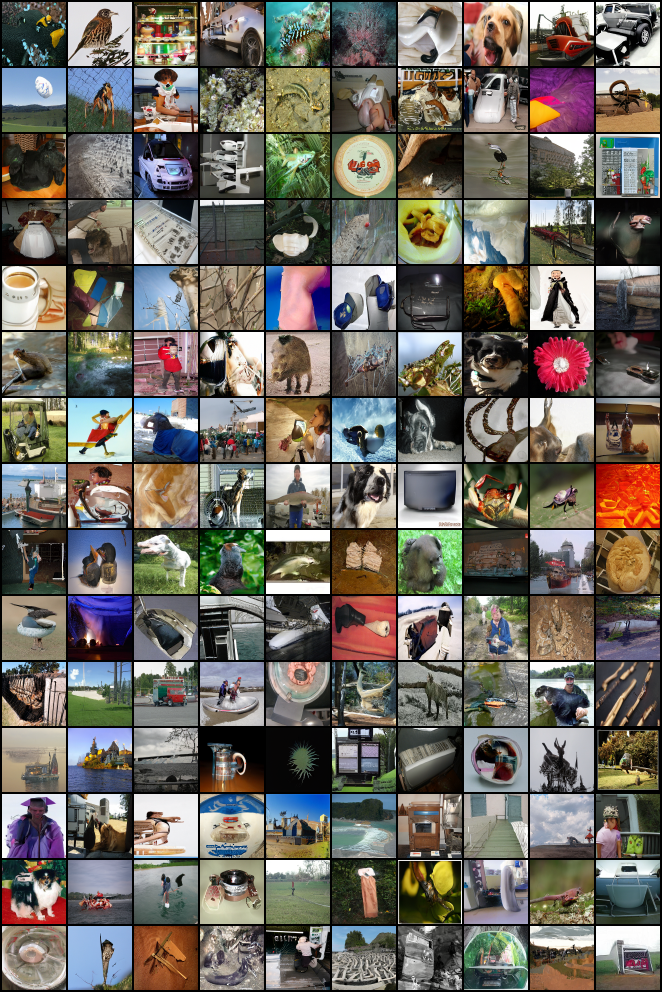}
    \caption{Non-curated unconditional ImageNet-64 generated images of a CNF trained with FM-OT. }
    \label{fig:imagenet64_samples}
\end{figure}

\begin{figure}
    \centering
    \includegraphics[width=\textwidth]{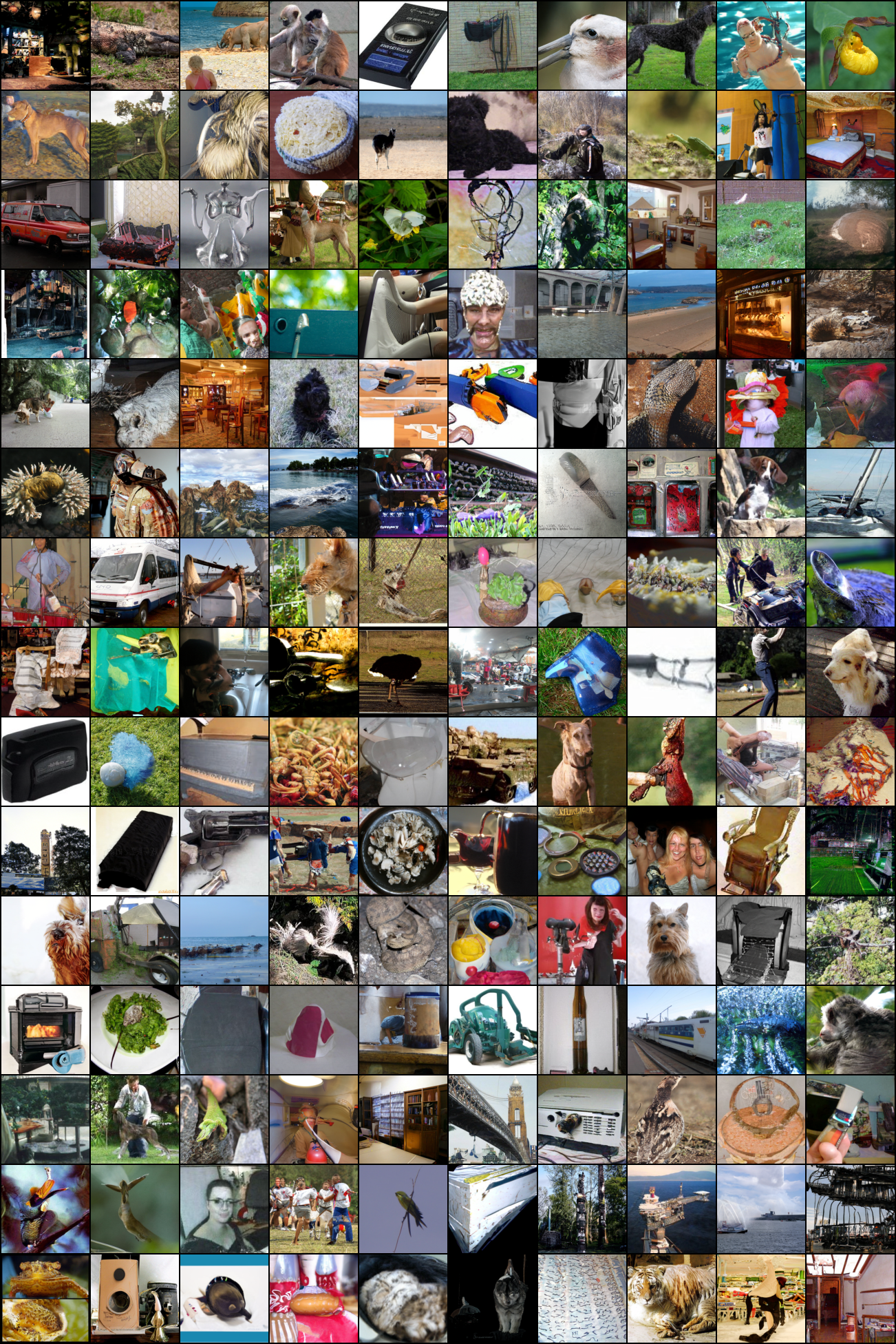}
    \caption{Non-curated unconditional ImageNet-128 generated images of a CNF trained with FM-OT. }
    \label{fig:imagenet128_samples}
\end{figure}

\begin{figure}
    \centering
\begin{tabular}{@{}ccc@{}}
     \includegraphics[width=0.5\textwidth]{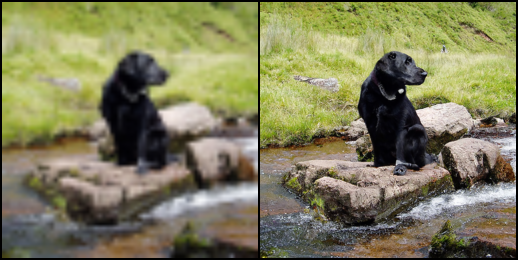} & \includegraphics[width=0.5\textwidth]{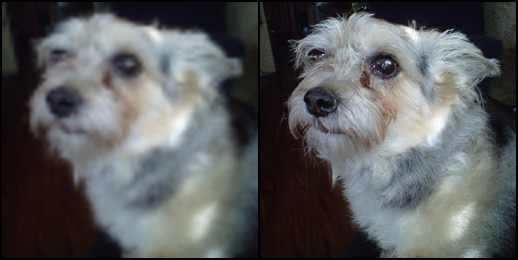} \\ \includegraphics[width=0.5\textwidth]{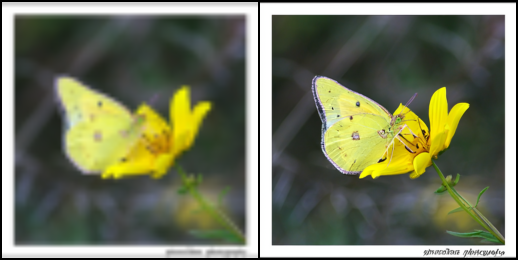} & \includegraphics[width=0.5\textwidth]{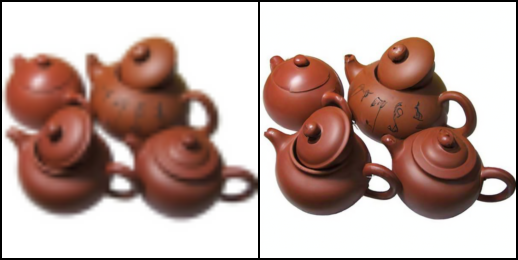} \\
     \includegraphics[width=0.5\textwidth]{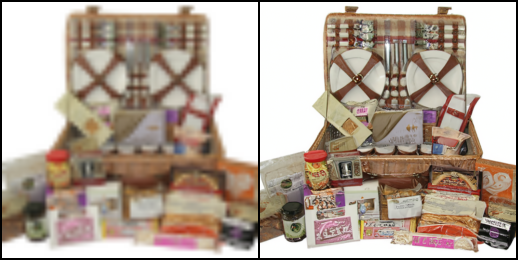} & \includegraphics[width=0.5\textwidth]{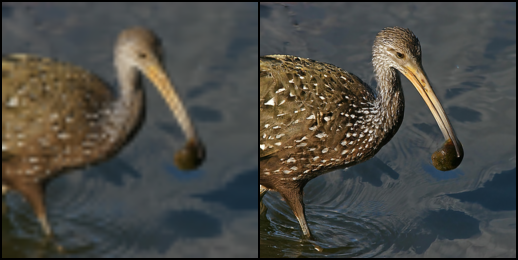} \\ \includegraphics[width=0.5\textwidth]{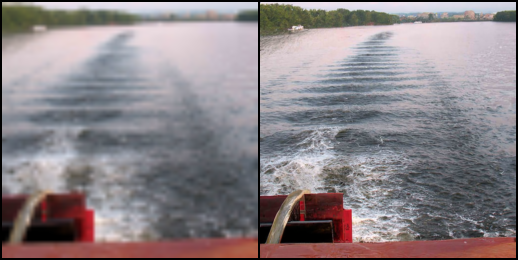} & \includegraphics[width=0.5\textwidth]{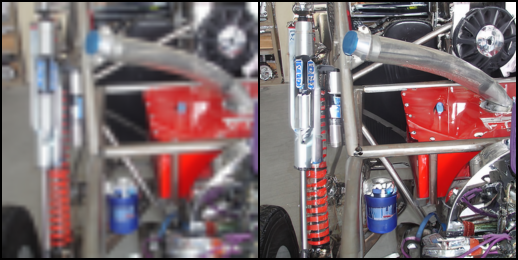} \\
     \includegraphics[width=0.5\textwidth]{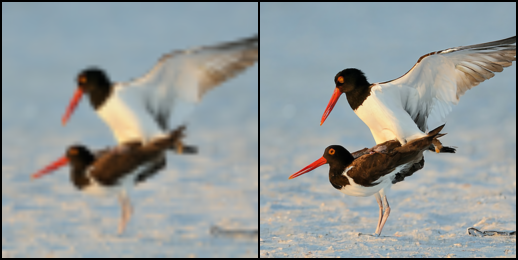} & \includegraphics[width=0.5\textwidth]{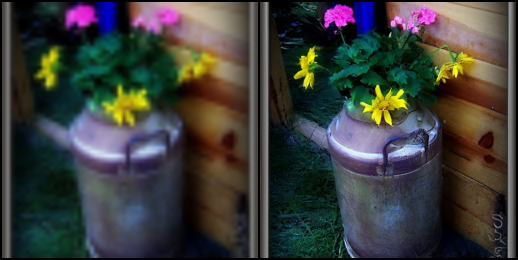} \\ \includegraphics[width=0.5\textwidth]{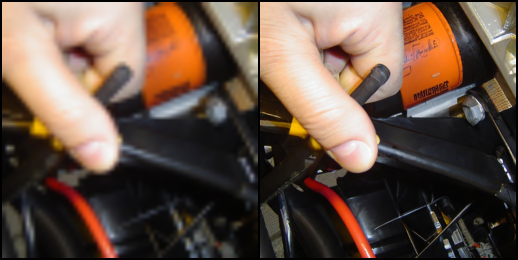} & \includegraphics[width=0.5\textwidth]{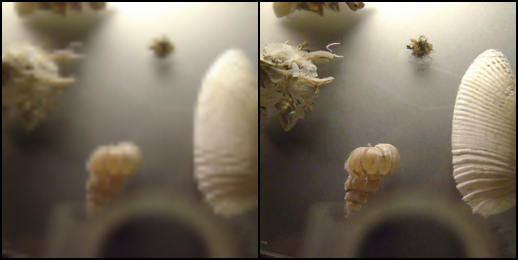} \\
\end{tabular}
    \caption{Conditional generation 64$\times$64$\rightarrow$256$\times$256. Flow Matching OT upsampled images from validation set.}
    \label{fig:upsampled_1}
\end{figure}

\begin{figure}
    \centering
\begin{tabular}{@{}ccc@{}}
     \includegraphics[width=0.5\textwidth]{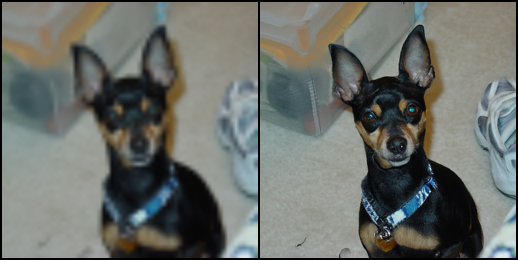} & \includegraphics[width=0.5\textwidth]{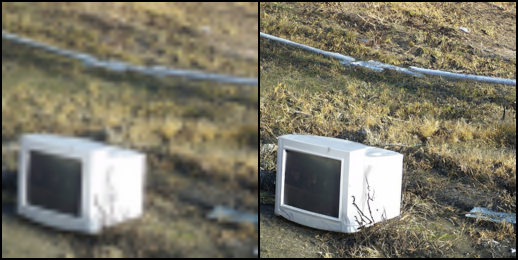} \\
     \includegraphics[width=0.5\textwidth]{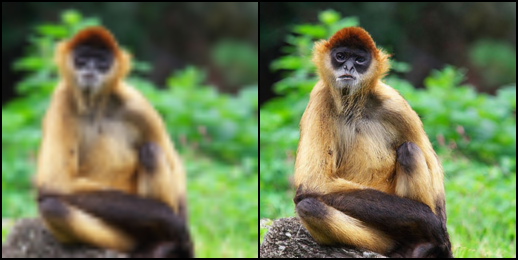} & \includegraphics[width=0.5\textwidth]{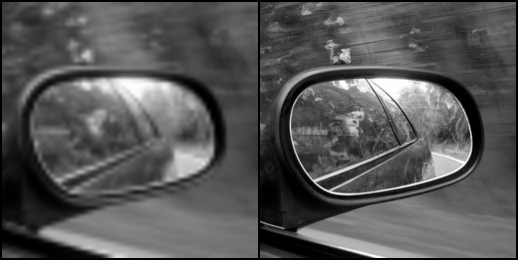} \\
     \includegraphics[width=0.5\textwidth]{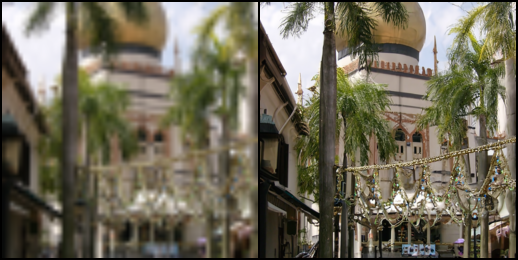} & \includegraphics[width=0.5\textwidth]{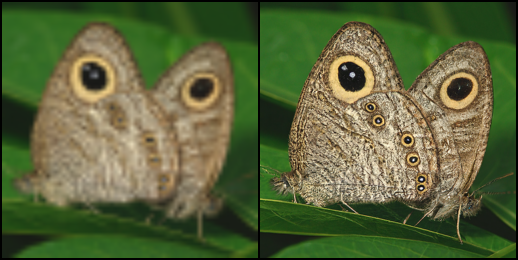} \\
     \includegraphics[width=0.5\textwidth]{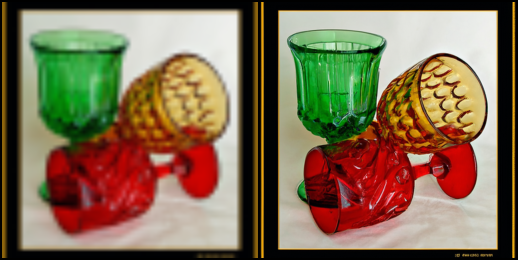} & \includegraphics[width=0.5\textwidth]{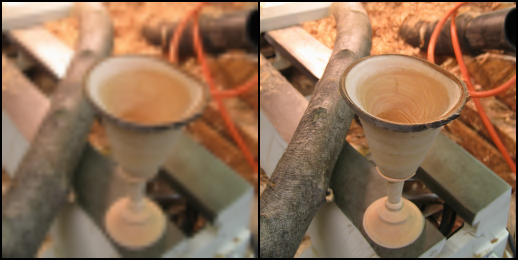} \\
     \includegraphics[width=0.5\textwidth]{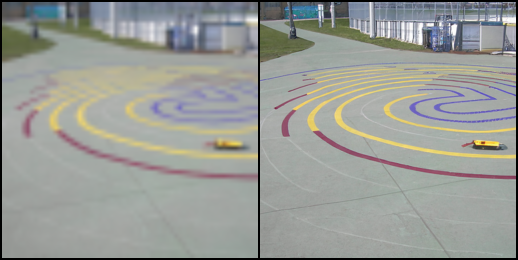} & \includegraphics[width=0.5\textwidth]{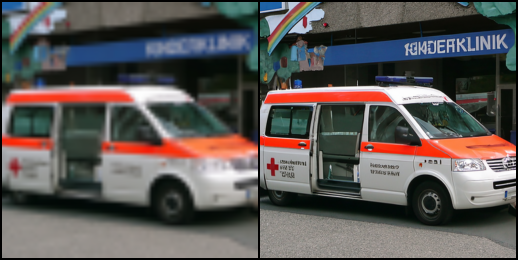} \\
     \includegraphics[width=0.5\textwidth]{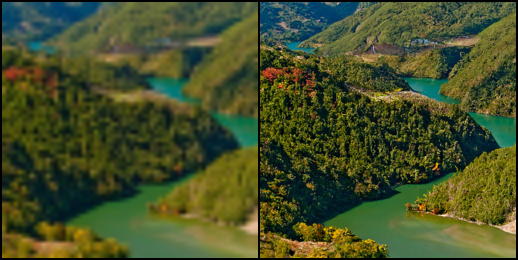} & \includegraphics[width=0.5\textwidth]{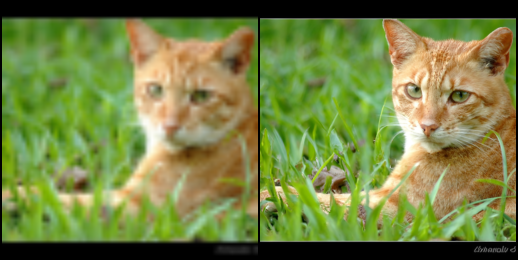} \\
\end{tabular}
    \caption{Conditional generation 64$\times$64$\rightarrow$256$\times$256. Flow Matching OT upsampled images from validation set.}
    \label{fig:upsampled_2}
\end{figure}

\begin{figure}
    \centering
    
    \begin{subfigure}[b]{0.5\linewidth}
        \begin{subfigure}[b]{0.242\linewidth}
        \includegraphics[width=\linewidth]{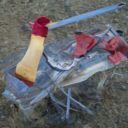}
        \end{subfigure}%
        \begin{subfigure}[b]{0.242\linewidth}
        \includegraphics[width=\linewidth]{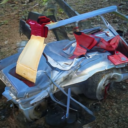}
        \end{subfigure}%
        \begin{subfigure}[b]{0.242\linewidth}
        \includegraphics[width=\linewidth]{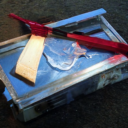}
        \end{subfigure}%
        \begin{subfigure}[b]{0.242\linewidth}
        \includegraphics[width=\linewidth]{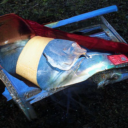}
        \end{subfigure}
        \hfill
    \end{subfigure}%
    \begin{subfigure}[b]{0.5\linewidth}
        \begin{subfigure}[b]{0.242\linewidth}
        \includegraphics[width=\linewidth]{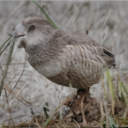}
        \end{subfigure}
        \begin{subfigure}[b]{0.242\linewidth}
        \includegraphics[width=\linewidth]{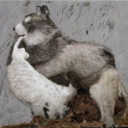}
        \end{subfigure}
        \begin{subfigure}[b]{0.242\linewidth}
        \includegraphics[width=\linewidth]{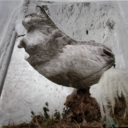}
        \end{subfigure}
        \begin{subfigure}[b]{0.242\linewidth}
        \includegraphics[width=\linewidth]{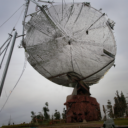}
        \end{subfigure}
    \end{subfigure}\\
    
    \begin{subfigure}[b]{0.5\linewidth}
        \begin{subfigure}[b]{0.242\linewidth}
        \includegraphics[width=\linewidth]{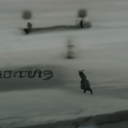}
        \end{subfigure}%
        \begin{subfigure}[b]{0.242\linewidth}
        \includegraphics[width=\linewidth]{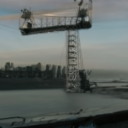}
        \end{subfigure}%
        \begin{subfigure}[b]{0.242\linewidth}
        \includegraphics[width=\linewidth]{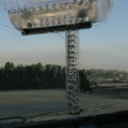}
        \end{subfigure}%
        \begin{subfigure}[b]{0.242\linewidth}
        \includegraphics[width=\linewidth]{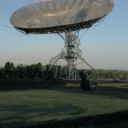}
        \end{subfigure}
        \hfill
    \end{subfigure}%
    \begin{subfigure}[b]{0.5\linewidth}
        \begin{subfigure}[b]{0.242\linewidth}
        \includegraphics[width=\linewidth]{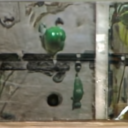}
        \end{subfigure}
        \begin{subfigure}[b]{0.242\linewidth}
        \includegraphics[width=\linewidth]{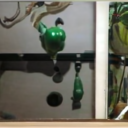}
        \end{subfigure}
        \begin{subfigure}[b]{0.242\linewidth}
        \includegraphics[width=\linewidth]{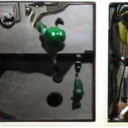}
        \end{subfigure}
        \begin{subfigure}[b]{0.242\linewidth}
        \includegraphics[width=\linewidth]{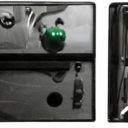}
        \end{subfigure}
    \end{subfigure}\\
    
    \begin{subfigure}[b]{0.5\linewidth}
        \begin{subfigure}[b]{0.242\linewidth}
        \includegraphics[width=\linewidth]{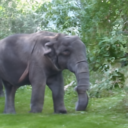}
        \end{subfigure}%
        \begin{subfigure}[b]{0.242\linewidth}
        \includegraphics[width=\linewidth]{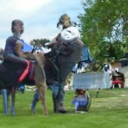}
        \end{subfigure}%
        \begin{subfigure}[b]{0.242\linewidth}
        \includegraphics[width=\linewidth]{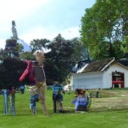}
        \end{subfigure}%
        \begin{subfigure}[b]{0.242\linewidth}
        \includegraphics[width=\linewidth]{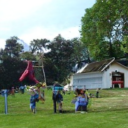}
        \end{subfigure}
        \hfill
    \end{subfigure}%
    \begin{subfigure}[b]{0.5\linewidth}
        \begin{subfigure}[b]{0.242\linewidth}
        \includegraphics[width=\linewidth]{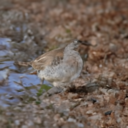}
        \end{subfigure}
        \begin{subfigure}[b]{0.242\linewidth}
        \includegraphics[width=\linewidth]{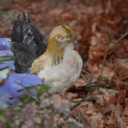}
        \end{subfigure}
        \begin{subfigure}[b]{0.242\linewidth}
        \includegraphics[width=\linewidth]{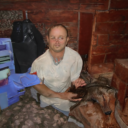}
        \end{subfigure}
        \begin{subfigure}[b]{0.242\linewidth}
        \includegraphics[width=\linewidth]{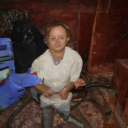}
        \end{subfigure}
    \end{subfigure}\\
    
    \begin{subfigure}[b]{0.5\linewidth}
        \begin{subfigure}[b]{0.242\linewidth}
        \includegraphics[width=\linewidth]{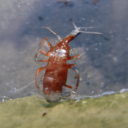}
        \end{subfigure}%
        \begin{subfigure}[b]{0.242\linewidth}
        \includegraphics[width=\linewidth]{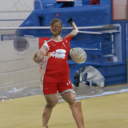}
        \end{subfigure}%
        \begin{subfigure}[b]{0.242\linewidth}
        \includegraphics[width=\linewidth]{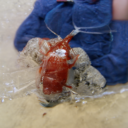}
        \end{subfigure}%
        \begin{subfigure}[b]{0.242\linewidth}
        \includegraphics[width=\linewidth]{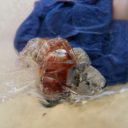}
        \end{subfigure}
        \hfill
    \end{subfigure}%
    \begin{subfigure}[b]{0.5\linewidth}
        \begin{subfigure}[b]{0.242\linewidth}
        \includegraphics[width=\linewidth]{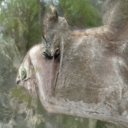}
        \end{subfigure}
        \begin{subfigure}[b]{0.242\linewidth}
        \includegraphics[width=\linewidth]{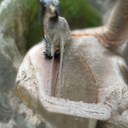}
        \end{subfigure}
        \begin{subfigure}[b]{0.242\linewidth}
        \includegraphics[width=\linewidth]{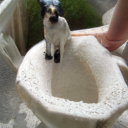}
        \end{subfigure}
        \begin{subfigure}[b]{0.242\linewidth}
        \includegraphics[width=\linewidth]{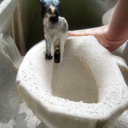}
        \end{subfigure}
    \end{subfigure}\\
    
    \begin{subfigure}[b]{0.5\linewidth}
        \begin{subfigure}[b]{0.242\linewidth}
        \includegraphics[width=\linewidth]{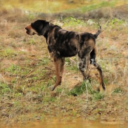}
        \end{subfigure}%
        \begin{subfigure}[b]{0.242\linewidth}
        \includegraphics[width=\linewidth]{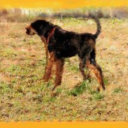}
        \end{subfigure}%
        \begin{subfigure}[b]{0.242\linewidth}
        \includegraphics[width=\linewidth]{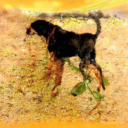}
        \end{subfigure}%
        \begin{subfigure}[b]{0.242\linewidth}
        \includegraphics[width=\linewidth]{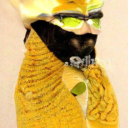}
        \end{subfigure}
        \hfill
    \end{subfigure}%
    \begin{subfigure}[b]{0.5\linewidth}
        \begin{subfigure}[b]{0.242\linewidth}
        \includegraphics[width=\linewidth]{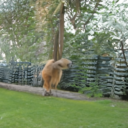}
        \end{subfigure}
        \begin{subfigure}[b]{0.242\linewidth}
        \includegraphics[width=\linewidth]{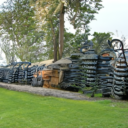}
        \end{subfigure}
        \begin{subfigure}[b]{0.242\linewidth}
        \includegraphics[width=\linewidth]{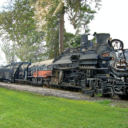}
        \end{subfigure}
        \begin{subfigure}[b]{0.242\linewidth}
        \includegraphics[width=\linewidth]{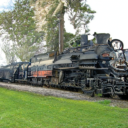}
        \end{subfigure}
    \end{subfigure}\\
    
    \begin{subfigure}[b]{0.5\linewidth}
        \begin{subfigure}[b]{0.242\linewidth}
        \includegraphics[width=\linewidth]{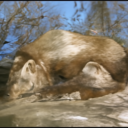}
        \end{subfigure}%
        \begin{subfigure}[b]{0.242\linewidth}
        \includegraphics[width=\linewidth]{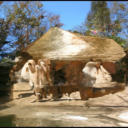}
        \end{subfigure}%
        \begin{subfigure}[b]{0.242\linewidth}
        \includegraphics[width=\linewidth]{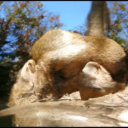}
        \end{subfigure}%
        \begin{subfigure}[b]{0.242\linewidth}
        \includegraphics[width=\linewidth]{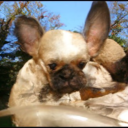}
        \end{subfigure}
        \hfill
    \end{subfigure}%
    \begin{subfigure}[b]{0.5\linewidth}
        \begin{subfigure}[b]{0.242\linewidth}
        \includegraphics[width=\linewidth]{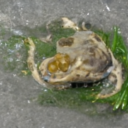}
        \end{subfigure}
        \begin{subfigure}[b]{0.242\linewidth}
        \includegraphics[width=\linewidth]{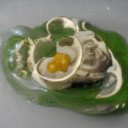}
        \end{subfigure}
        \begin{subfigure}[b]{0.242\linewidth}
        \includegraphics[width=\linewidth]{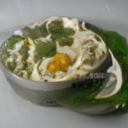}
        \end{subfigure}
        \begin{subfigure}[b]{0.242\linewidth}
        \includegraphics[width=\linewidth]{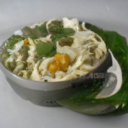}
        \end{subfigure}
    \end{subfigure}\\
    
    \begin{subfigure}[b]{0.5\linewidth}
        \begin{subfigure}[b]{0.242\linewidth}
        \includegraphics[width=\linewidth]{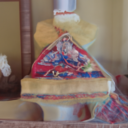}
        \end{subfigure}%
        \begin{subfigure}[b]{0.242\linewidth}
        \includegraphics[width=\linewidth]{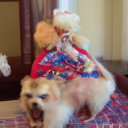}
        \end{subfigure}%
        \begin{subfigure}[b]{0.242\linewidth}
        \includegraphics[width=\linewidth]{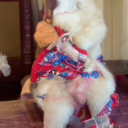}
        \end{subfigure}%
        \begin{subfigure}[b]{0.242\linewidth}
        \includegraphics[width=\linewidth]{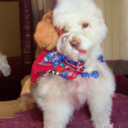}
        \end{subfigure}
        \hfill
    \end{subfigure}%
    \begin{subfigure}[b]{0.5\linewidth}
        \begin{subfigure}[b]{0.242\linewidth}
        \includegraphics[width=\linewidth]{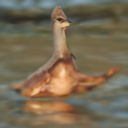}
        \end{subfigure}
        \begin{subfigure}[b]{0.242\linewidth}
        \includegraphics[width=\linewidth]{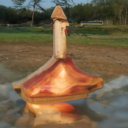}
        \end{subfigure}
        \begin{subfigure}[b]{0.242\linewidth}
        \includegraphics[width=\linewidth]{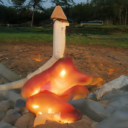}
        \end{subfigure}
        \begin{subfigure}[b]{0.242\linewidth}
        \includegraphics[width=\linewidth]{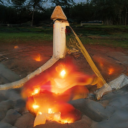}
        \end{subfigure}
    \end{subfigure}\\
    
    \begin{subfigure}[b]{0.5\linewidth}
        \begin{subfigure}[b]{0.242\linewidth}
        \includegraphics[width=\linewidth]{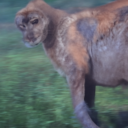}
        \end{subfigure}%
        \begin{subfigure}[b]{0.242\linewidth}
        \includegraphics[width=\linewidth]{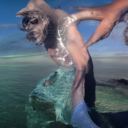}
        \end{subfigure}%
        \begin{subfigure}[b]{0.242\linewidth}
        \includegraphics[width=\linewidth]{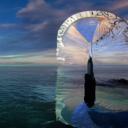}
        \end{subfigure}%
        \begin{subfigure}[b]{0.242\linewidth}
        \includegraphics[width=\linewidth]{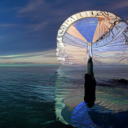}
        \end{subfigure}
        \hfill
    \end{subfigure}%
    \begin{subfigure}[b]{0.5\linewidth}
        \begin{subfigure}[b]{0.242\linewidth}
        \includegraphics[width=\linewidth]{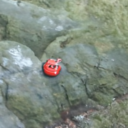}
        \end{subfigure}
        \begin{subfigure}[b]{0.242\linewidth}
        \includegraphics[width=\linewidth]{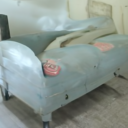}
        \end{subfigure}
        \begin{subfigure}[b]{0.242\linewidth}
        \includegraphics[width=\linewidth]{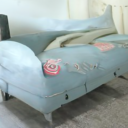}
        \end{subfigure}
        \begin{subfigure}[b]{0.242\linewidth}
        \includegraphics[width=\linewidth]{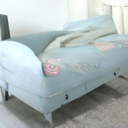}
        \end{subfigure}
    \end{subfigure}\\
    
    \begin{subfigure}[b]{0.5\linewidth}
        \begin{subfigure}[b]{0.242\linewidth}
        \includegraphics[width=\linewidth]{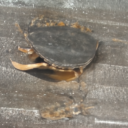}
        \end{subfigure}%
        \begin{subfigure}[b]{0.242\linewidth}
        \includegraphics[width=\linewidth]{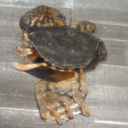}
        \end{subfigure}%
        \begin{subfigure}[b]{0.242\linewidth}
        \includegraphics[width=\linewidth]{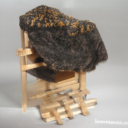}
        \end{subfigure}%
        \begin{subfigure}[b]{0.242\linewidth}
        \includegraphics[width=\linewidth]{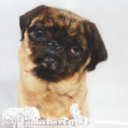}
        \end{subfigure}
        \hfill
    \end{subfigure}%
    \begin{subfigure}[b]{0.5\linewidth}
        \begin{subfigure}[b]{0.242\linewidth}
        \includegraphics[width=\linewidth]{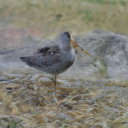}
        \end{subfigure}
        \begin{subfigure}[b]{0.242\linewidth}
        \includegraphics[width=\linewidth]{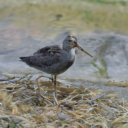}
        \end{subfigure}
        \begin{subfigure}[b]{0.242\linewidth}
        \includegraphics[width=\linewidth]{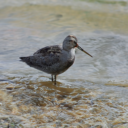}
        \end{subfigure}
        \begin{subfigure}[b]{0.242\linewidth}
        \includegraphics[width=\linewidth]{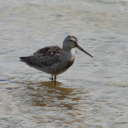}
        \end{subfigure}
    \end{subfigure}\\
    
    \begin{subfigure}[b]{0.5\linewidth}
        \begin{subfigure}[b]{0.242\linewidth}
        \includegraphics[width=\linewidth]{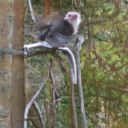}
        \end{subfigure}%
        \begin{subfigure}[b]{0.242\linewidth}
        \includegraphics[width=\linewidth]{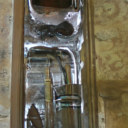}
        \end{subfigure}%
        \begin{subfigure}[b]{0.242\linewidth}
        \includegraphics[width=\linewidth]{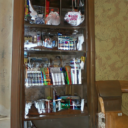}
        \end{subfigure}%
        \begin{subfigure}[b]{0.242\linewidth}
        \includegraphics[width=\linewidth]{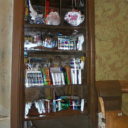}
        \end{subfigure}
        \hfill
    \end{subfigure}%
    \begin{subfigure}[b]{0.5\linewidth}
        \begin{subfigure}[b]{0.242\linewidth}
        \includegraphics[width=\linewidth]{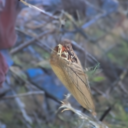}
        \end{subfigure}
        \begin{subfigure}[b]{0.242\linewidth}
        \includegraphics[width=\linewidth]{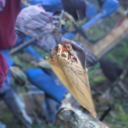}
        \end{subfigure}
        \begin{subfigure}[b]{0.242\linewidth}
        \includegraphics[width=\linewidth]{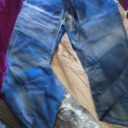}
        \end{subfigure}
        \begin{subfigure}[b]{0.242\linewidth}
        \includegraphics[width=\linewidth]{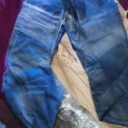}
        \end{subfigure}
    \end{subfigure}\\
    
    \begin{subfigure}[b]{0.5\linewidth}
        \begin{subfigure}[b]{0.242\linewidth}
        \includegraphics[width=\linewidth]{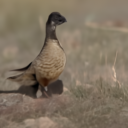}
        \end{subfigure}%
        \begin{subfigure}[b]{0.242\linewidth}
        \includegraphics[width=\linewidth]{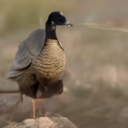}
        \end{subfigure}%
        \begin{subfigure}[b]{0.242\linewidth}
        \includegraphics[width=\linewidth]{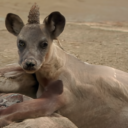}
        \end{subfigure}%
        \begin{subfigure}[b]{0.242\linewidth}
        \includegraphics[width=\linewidth]{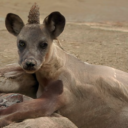}
        \end{subfigure}
        \hfill
    \end{subfigure}%
    \begin{subfigure}[b]{0.5\linewidth}
        \begin{subfigure}[b]{0.242\linewidth}
        \includegraphics[width=\linewidth]{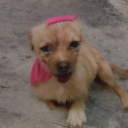}
        \end{subfigure}
        \begin{subfigure}[b]{0.242\linewidth}
        \includegraphics[width=\linewidth]{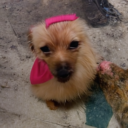}
        \end{subfigure}
        \begin{subfigure}[b]{0.242\linewidth}
        \includegraphics[width=\linewidth]{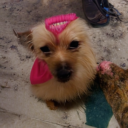}
        \end{subfigure}
        \begin{subfigure}[b]{0.242\linewidth}
        \includegraphics[width=\linewidth]{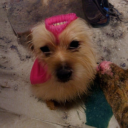}
        \end{subfigure}
    \end{subfigure}\\
    
    \begin{subfigure}[b]{0.5\linewidth}
        \begin{subfigure}[b]{0.242\linewidth}
        \includegraphics[width=\linewidth]{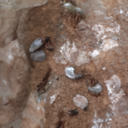}
        \caption*{NFE=10}
        \end{subfigure}%
        \begin{subfigure}[b]{0.242\linewidth}
        \includegraphics[width=\linewidth]{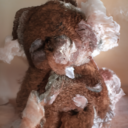}
        \caption*{NFE=20}
        \end{subfigure}%
        \begin{subfigure}[b]{0.242\linewidth}
        \includegraphics[width=\linewidth]{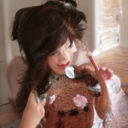}
        \caption*{NFE=40}
        \end{subfigure}%
        \begin{subfigure}[b]{0.242\linewidth}
        \includegraphics[width=\linewidth]{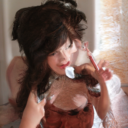}
        \caption*{NFE=100}
        \end{subfigure}
        \hfill
    \end{subfigure}%
    \begin{subfigure}[b]{0.5\linewidth}
        \begin{subfigure}[b]{0.242\linewidth}
        \includegraphics[width=\linewidth]{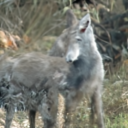}
        \caption*{NFE=10}
        \end{subfigure}
        \begin{subfigure}[b]{0.242\linewidth}
        \includegraphics[width=\linewidth]{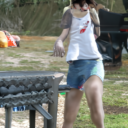}
        \caption*{NFE=20}
        \end{subfigure}
        \begin{subfigure}[b]{0.242\linewidth}
        \includegraphics[width=\linewidth]{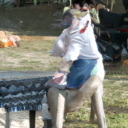}
        \caption*{NFE=40}
        \end{subfigure}
        \begin{subfigure}[b]{0.242\linewidth}
        \includegraphics[width=\linewidth]{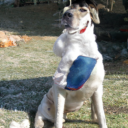}
        \caption*{NFE=100}
        \end{subfigure}
    \end{subfigure}\\
    \caption{Generated samples from the same initial noise, but with varying number of function evaluations (NFE). 
    Flow matching with OT path trained on ImageNet-128.}
    % \label{fig:imagenet256_solver_samples}
\end{figure}

\begin{figure}
    \centering
    
    \begin{subfigure}[b]{0.19\linewidth}
    \includegraphics[width=\linewidth]{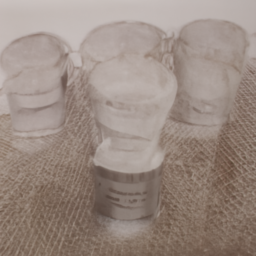}
    \end{subfigure}
    \begin{subfigure}[b]{0.19\linewidth}
    \includegraphics[width=\linewidth]{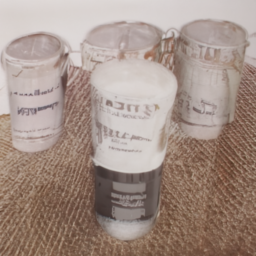}
    \end{subfigure}
    \begin{subfigure}[b]{0.19\linewidth}
    \includegraphics[width=\linewidth]{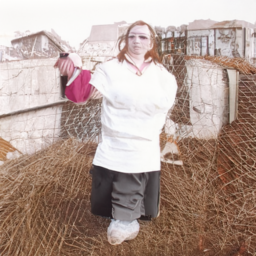}
    \end{subfigure}
    \begin{subfigure}[b]{0.19\linewidth}
    \includegraphics[width=\linewidth]{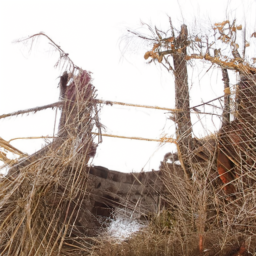}
    \end{subfigure}
    \begin{subfigure}[b]{0.19\linewidth}
    \includegraphics[width=\linewidth]{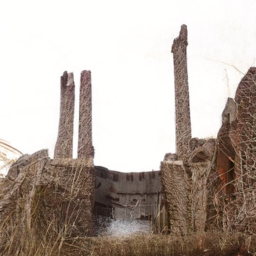}
    \end{subfigure}\\
    
    \begin{subfigure}[b]{0.19\linewidth}
    \includegraphics[width=\linewidth]{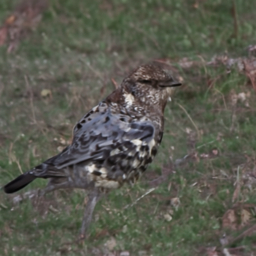}
    \end{subfigure}
    \begin{subfigure}[b]{0.19\linewidth}
    \includegraphics[width=\linewidth]{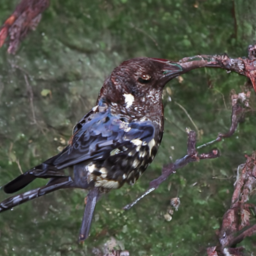}
    \end{subfigure}
    \begin{subfigure}[b]{0.19\linewidth}
    \includegraphics[width=\linewidth]{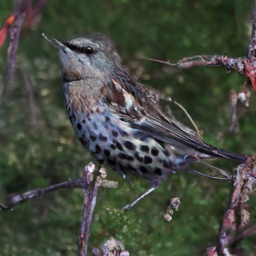}
    \end{subfigure}
    \begin{subfigure}[b]{0.19\linewidth}
    \includegraphics[width=\linewidth]{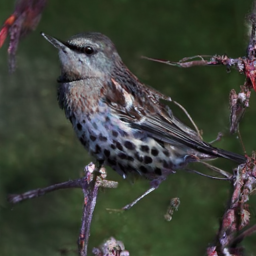}
    \end{subfigure}
    \begin{subfigure}[b]{0.19\linewidth}
    \includegraphics[width=\linewidth]{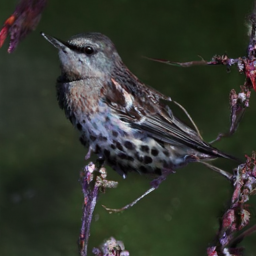}
    \end{subfigure}\\
    
    \begin{subfigure}[b]{0.19\linewidth}
    \includegraphics[width=\linewidth]{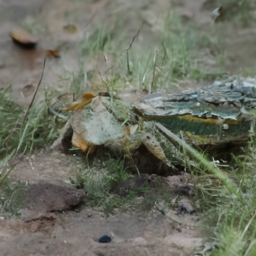}
    \end{subfigure}
    \begin{subfigure}[b]{0.19\linewidth}
    \includegraphics[width=\linewidth]{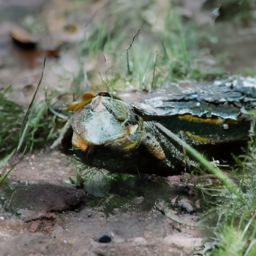}
    \end{subfigure}
    \begin{subfigure}[b]{0.19\linewidth}
    \includegraphics[width=\linewidth]{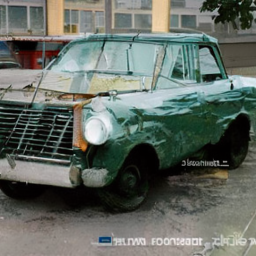}
    \end{subfigure}
    \begin{subfigure}[b]{0.19\linewidth}
    \includegraphics[width=\linewidth]{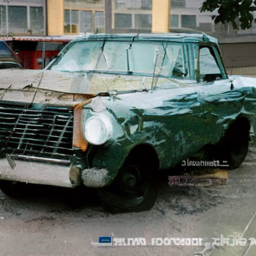}
    \end{subfigure}
    \begin{subfigure}[b]{0.19\linewidth}
    \includegraphics[width=\linewidth]{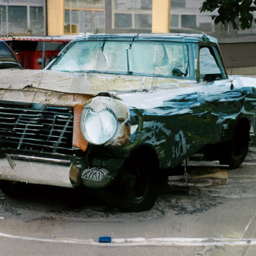}
    \end{subfigure}\\
    
    \begin{subfigure}[b]{0.19\linewidth}
    \includegraphics[width=\linewidth]{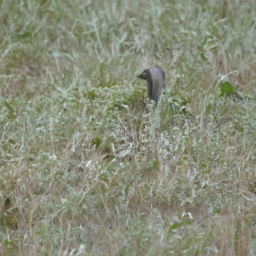}
    \end{subfigure}
    \begin{subfigure}[b]{0.19\linewidth}
    \includegraphics[width=\linewidth]{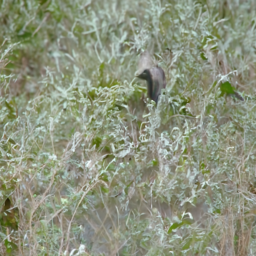}
    \end{subfigure}
    \begin{subfigure}[b]{0.19\linewidth}
    \includegraphics[width=\linewidth]{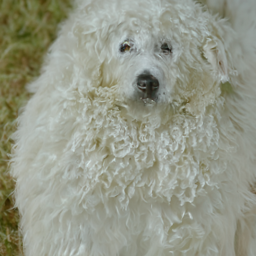}
    \end{subfigure}
    \begin{subfigure}[b]{0.19\linewidth}
    \includegraphics[width=\linewidth]{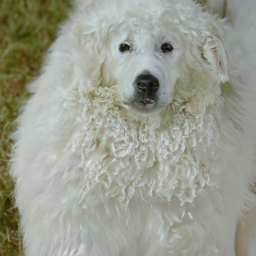}
    \end{subfigure}
    \begin{subfigure}[b]{0.19\linewidth}
    \includegraphics[width=\linewidth]{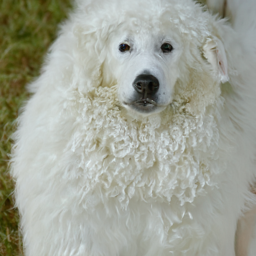}
    \end{subfigure}\\
    
    \begin{subfigure}[b]{0.19\linewidth}
    \includegraphics[width=\linewidth]{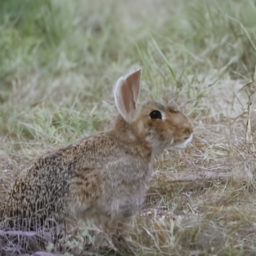}
    \end{subfigure}
    \begin{subfigure}[b]{0.19\linewidth}
    \includegraphics[width=\linewidth]{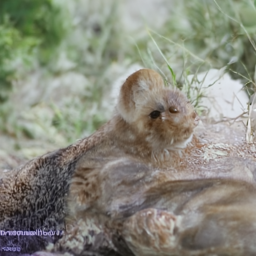}
    \end{subfigure}
    \begin{subfigure}[b]{0.19\linewidth}
    \includegraphics[width=\linewidth]{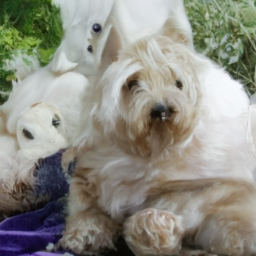}
    \end{subfigure}
    \begin{subfigure}[b]{0.19\linewidth}
    \includegraphics[width=\linewidth]{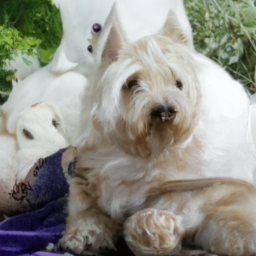}
    \end{subfigure}
    \begin{subfigure}[b]{0.19\linewidth}
    \includegraphics[width=\linewidth]{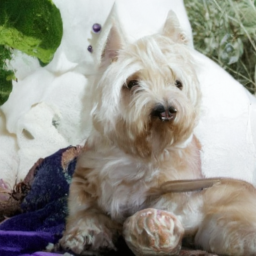}
    \end{subfigure}\\
    
    \begin{subfigure}[b]{0.19\linewidth}
    \includegraphics[width=\linewidth]{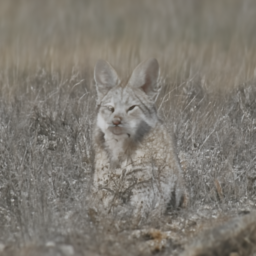}
    \end{subfigure}
    \begin{subfigure}[b]{0.19\linewidth}
    \includegraphics[width=\linewidth]{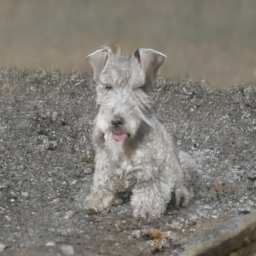}
    \end{subfigure}
    \begin{subfigure}[b]{0.19\linewidth}
    \includegraphics[width=\linewidth]{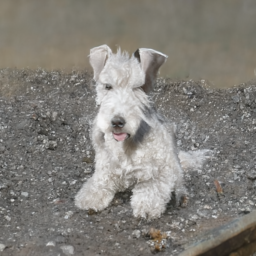}
    \end{subfigure}
    \begin{subfigure}[b]{0.19\linewidth}
    \includegraphics[width=\linewidth]{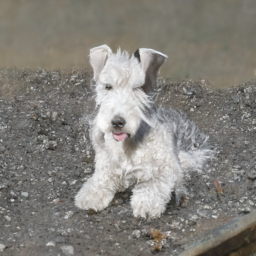}
    \end{subfigure}
    \begin{subfigure}[b]{0.19\linewidth}
    \includegraphics[width=\linewidth]{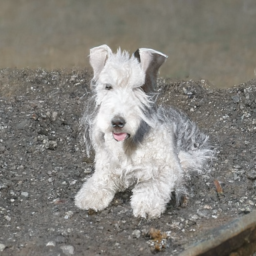}
    \end{subfigure}\\
    
    \begin{subfigure}[b]{0.19\linewidth}
    \includegraphics[width=\linewidth]{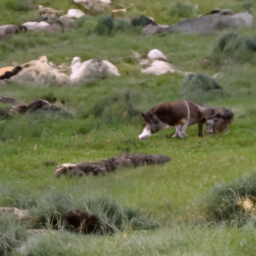}
    \end{subfigure}
    \begin{subfigure}[b]{0.19\linewidth}
    \includegraphics[width=\linewidth]{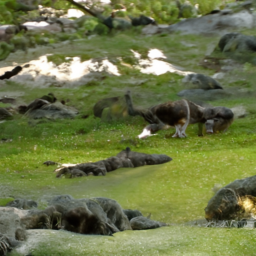}
    \end{subfigure}
    \begin{subfigure}[b]{0.19\linewidth}
    \includegraphics[width=\linewidth]{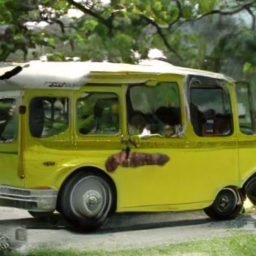}
    \end{subfigure}
    \begin{subfigure}[b]{0.19\linewidth}
    \includegraphics[width=\linewidth]{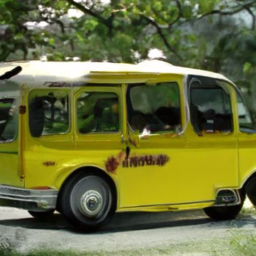}
    \end{subfigure}
    \begin{subfigure}[b]{0.19\linewidth}
    \includegraphics[width=\linewidth]{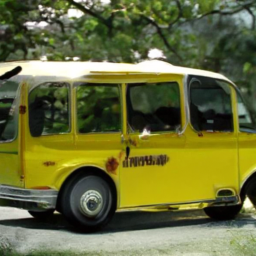}
    \end{subfigure}\\
    
    \begin{subfigure}[b]{0.19\linewidth}
    \includegraphics[width=\linewidth]{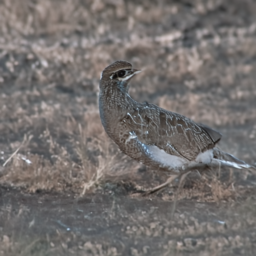}
    \caption*{NFE=10}
    \end{subfigure}
    \begin{subfigure}[b]{0.19\linewidth}
    \includegraphics[width=\linewidth]{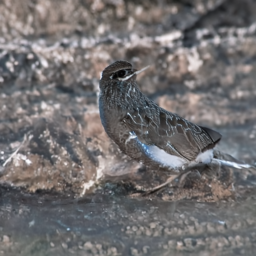}
    \caption*{NFE=20}
    \end{subfigure}
    \begin{subfigure}[b]{0.19\linewidth}
    \includegraphics[width=\linewidth]{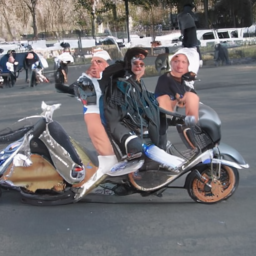}
    \caption*{NFE=40}
    \end{subfigure}
    \begin{subfigure}[b]{0.19\linewidth}
    \includegraphics[width=\linewidth]{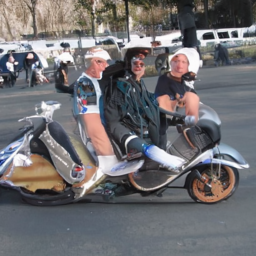}
    \caption*{NFE=60}
    \end{subfigure}
    \begin{subfigure}[b]{0.19\linewidth}
    \includegraphics[width=\linewidth]{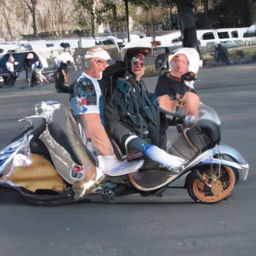}
    \caption*{NFE=100}
    \end{subfigure}\\

    \caption{Generated samples from the same initial noise, but with varying number of function evaluations (NFE). Flow matching with OT path trained on ImageNet 256$\times $256.}
    \label{fig:imagenet256_solver_samples}
\end{figure}

\end{document}